\setlist{nosep}
\apptocmd{\sloppy}{\hbadness 10000\relax}{}{}
\newcommand{\hassan}[1]{\todo[inline,color=green!30]{\small\textbf{Hassan:} #1}}
\newcommand{\chris}[1]{\todo[inline,color=red!30]{\small\textbf{Chris:} #1}}
\newcommand{\chere}[1]{\chris{I am here.}}
\newcommand{\modify}[1]{{#1}}
\newcommand{\wtilde}{\widetilde}
\newcommand\blfootnote[1]{%
  \begingroup
  \renewcommand\thefootnote{}\footnote{#1}%
  \addtocounter{footnote}{-1}%
  \endgroup
}
\newcommand{\bE}{\mathbb{E}}
\newcommand{\bN}{\mathbb{N}}
\newcommand{\bP}{\mathbb{P}}
\newcommand{\bR}{\mathbb{R}}
\newcommand{\bS}{\mathbb{S}}
\newcommand{\cA}{\mathcal{A}}
\newcommand{\cB}{\mathcal{B}}
\newcommand{\cD}{\mathcal{D}}
\newcommand{\cL}{\mathcal{L}}
\newcommand{\cM}{\mathcal{M}}
\newcommand{\cN}{\mathcal{N}}
\newcommand{\cS}{\mathcal{S}}
\newcommand{\cX}{\mathcal{X}}
\newcommand{\cY}{\mathcal{Y}}
\newcommand{\eps}{\varepsilon}
\newcommand{\KL}[2]{D_{\mathrm{KL}}\left( {#1} \parallel {#2} \right)}
\newcommand{\privLoss}[2]{\cL_{{#1} \parallel {#2}}}
\newcommand{\dtv}[2]{d_{\mathrm{TV}}\left( {#1}, {#2} \right)}
\DeclareMathOperator{\vvec}{vec}
\DeclareMathOperator{\bd}{bd}
\newcommand{\eqdist}{\stackrel{d}{=}}
\newcommand{\what}{\widehat}
\newcommand{\wmu}{\widehat{\mu}}
\newcommand{\wSigma}{\widehat{\Sigma}}
\newtheorem{theorem}{Theorem}[section]
\newtheorem{lemma}[theorem]{Lemma}
\newtheorem{claim}[theorem]{Claim}
\newtheorem{remark}[theorem]{Remark}
\newtheorem{proposition}[theorem]{Proposition}
\newtheorem{fact}[theorem]{Fact}
\newtheorem{corollary}[theorem]{Corollary}
\theoremstyle{definition}
\newtheorem{definition}[theorem]{Definition}
\newcommand{\AlgorithmName}[1]{\label{alg:#1}}
\newcommand{\Algorithm}[1]{Algorithm~\ref{alg:#1}}
\newcommand{\AppendixName}[1]{\label{app:#1}}
\newcommand{\Appendix}[1]{Appendix~\ref{app:#1}}
\newcommand{\ClaimName}[1]{\label{claim:#1}}
\newcommand{\Claim}[1]{Claim~\ref{claim:#1}}
\newcommand{\CorollaryName}[1]{\label{corollary:#1}}
\newcommand{\Corollary}[1]{\autoref{corollary:#1}}
\newcommand{\DefinitionName}[1]{\label{def:#1}}
\newcommand{\Definition}[1]{Definition~\ref{def:#1}}
\newcommand{\EquationName}[1]{\label{eq:#1}}
\newcommand{\Equation}[1]{Eq.~\eqref{eq:#1}}
\newcommand{\FactName}[1]{\label{fact:#1}}
\newcommand{\Fact}[1]{\autoref{fact:#1}}
\newcommand{\LemmaName}[1]{\label{lem:#1}}
\newcommand{\Lemma}[1]{\autoref{lem:#1}}
\newcommand{\LineName}[1]{\label{line:#1}}
\newcommand{\Line}[1]{Line~\ref{line:#1}}
\newcommand{\PropositionName}[1]{\label{prop:#1}}
\newcommand{\Proposition}[1]{Proposition~\ref{prop:#1}}
\newcommand{\SectionName}[1]{\label{sec:#1}}
\newcommand{\Section}[1]{Section~\ref{sec:#1}}
\newcommand{\SubsectionName}[1]{\label{sec:#1}}
\newcommand{\Subsection}[1]{Subsection~\ref{sec:#1}}
\newcommand{\TheoremName}[1]{\label{thm:#1}}
\newcommand{\Theorem}[1]{\autoref{thm:#1}}
\newcommand{\pd}{\bS_{> 0}}
\newcommand{\psd}{\bS}
\DeclareMathOperator*{\argmax}{arg\,max}
\DeclareMathOperator{\TLap}{TLap}
\newcommand{\prob}[1]{\bP \left[#1\right]}
\newcommand{\probs}[2]{\bP_{#1} \left[#2\right]}
\newcommand{\expect}[1]{\bE \left[#1\right]}
\newcommand{\expects}[2]{\bE_{#1} \left[#2\right]}
\newcommand{\transpose}{^\top}
\newcommand{\dd}{\mathrm{d}}
\DeclareMathOperator{\dist}{dist}
\DeclareMathOperator{\Span}{span}
\DeclareMathOperator{\poly}{poly}
\DeclareMathOperator{\polylog}{polylog}
\DeclareMathOperator{\Tr}{tr}
\DeclareMathOperator{\rank}{rank}
\algnewcommand{\LineComment}[1]{\Statex \hskip\ALG@thistlm \(\triangleright\) #1}
\newcommand{\comment}[1]{}
\title{Private and polynomial time algorithms for learning Gaussians and beyond} 
\author{
    Hassan Ashtiani\thanks{McMaster University, \texttt{zokaeiam@mcmaster.ca}. Hassan Ashtiani is also affiliated with Vector institute and was supported by an NSERC Discovery Grant.}
    \and
    Christopher Liaw\thanks{Google, \texttt{cvliaw@google.com}. Part of this work was done while at the University of Toronto.}
}
\numberwithin{equation}{section}
\begin{document}

\maketitle
\begin{abstract}
We present a fairly general framework for reducing $(\varepsilon, \delta)$ differentially private (DP) statistical estimation to its non-private counterpart.
As the main application of this framework, we give a polynomial time and $(\varepsilon,\delta)$-DP algorithm for learning (unrestricted) Gaussian distributions in $\bR^d$. The sample complexity of our approach for learning the Gaussian up to total variation distance $\alpha$ is
$\wtilde{O}(d^2/\alpha^2 + d^2\sqrt{\ln(1/\delta)}/\alpha \eps + d\ln(1/\delta) / \alpha \eps)$
matching (up to logarithmic factors) the best known information-theoretic (non-efficient) sample complexity upper bound due to Aden-Ali, Ashtiani, and Kamath \cite{aden2021sample}.
In an independent work, Kamath, Mouzakis, Singhal, Steinke, and Ullman~\cite{KMSTU21} proved a similar result using a different approach and with $O(d^{5/2})$ sample complexity dependence on $d$.

As another application of our framework, we provide the first polynomial time $(\varepsilon, \delta)$-DP algorithm for robust learning of (unrestricted) Gaussians with sample complexity $\wtilde{O}(d^{3.5})$.
In another independent work, Kothari, Manurangsi, and Velingker~\cite{KMV21} also provided a polynomial time $(\eps, \delta)$-DP algorithm for robust learning of Gaussians
with sample complexity $\wtilde{O}(d^8)$.
\end{abstract}

\blfootnote{Accepted for presentation at the Conference on Learning Theory (COLT) 2022 \cite{AshtianiL22}.}

\section{Introduction}
\chris{Cite Pravesh Kothari paper.}
Learning a multivariate Gaussian distribution with respect to the total variation (TV) distance is one of the most basic tasks in statistical estimation. It is folklore that simply taking the empirical mean and covariance matrix achieves the optimal sample complexity of $\Theta(d^2/\alpha^2)$ for learning \emph{unbounded} Gaussians in $\mathbb{R}^d$ up to TV distance $\alpha$ (see, for example, \cite[Theorem C.1]{ABHLMP20}).
However, this simple approach can compromise the privacy of the individuals/entities whose data has been used in the estimation procedure. Differential privacy~\cite{DMNS06} offers a formal framework to study and guarantee privacy in such estimation tasks. The central question that we aim to address is whether unbounded Gaussians can be learned \emph{privately} and \emph{efficiently}.

For the strong notion of \emph{pure} ($\varepsilon, 0$)-differential privacy, learning unbounded Gaussians is impossible. In fact, even learning the mean of univariate Gaussians with unit variances requires some boundedness assumptions on the parameters of the Gaussian~\cite[Theorem 1.4]{karwa2018finite}.
This motivates the study of learning Gaussians under the \emph{approximate} ($\varepsilon, \delta$)-differential privacy model~\cite{dwork2006our} (or approximate-DP for short).

Previous work has established the polynomial-time learnability of univariate Gaussians (with unbounded mean and variance) in the approximate-DP model. The seminal work of Karwa and Vadhan~\cite{karwa2018finite} uses stability-based histograms~\cite{BNS19} to find a bound on the parameters; once the problem is reduced to the bounded case, the Laplace mechanism~\cite{DMNS06} offers a private way to release a noisy---but sufficiently accurate---estimate of the parameters of the Gaussian\footnote{Although the method of \cite{karwa2018finite} works with a histogram with an infinite number of bins, it can be easily implemented efficiently by only constructing the bins that are non-empty (i.e., lazy initialization).}.

The problem is more difficult in the multivariate setting.
Although one could apply the method of \cite{karwa2018finite} along each axis,
this would only yield an accurate solution for axis-aligned Gaussians,
i.e.~Gaussians whose covariance matrix is a diagonal matrix.

The first result for multivariate Gaussians is due to \cite{kamath2019privately}.
They gave an algorithm for learning multivariate Gaussians in the concentrated DP model\footnote{
    Concentrated DP is a weaker notion of privacy than pure DP but stronger than approx DP \citep{dwork2016concentrated, bun2016concentrated}.
}.
However, their algorithm requires prior knowledge of the condition number of the covariance matrix and its sample complexity depends polylogarithmically
on the condition number.
In a follow-up work, \cite{biswas2020coinpress} proposed the CoinPress algorithm which is a simple and practical algorithm
for learning multivariate Gaussians in the concentrated DP model with more or less the same sample complexity.

More recently, Aden-Ali, Ashtiani, and Kamath~\cite{aden2021sample} established a sharp upper bound on the sample complexity of learning Gaussians in the approximate-DP model, demonstrating that the dependence on the condition number (or the size of the parameters) can be totally avoided.
Unfortunately, the approach of~\cite{aden2021sample} is non-constructive and does not yield a computationally efficient method. In fact, it only shows the \emph{existence} of a ``locally small'' cover for Gaussians in TV distance, which yields an upper bound on the sample complexity through the use of the private hypothesis selection method~\cite{bun2019private}. This raises the following concrete question.

\begin{quote}
    Is there an $(\varepsilon, \delta)$-DP and polynomial time\footnote{
        By polynomial-time, we mean $\poly(d, 1/\eps, 1/\alpha, \log(1/\delta))$ where $\alpha$ is the accuracy parameter in TV distance.
    }
    algorithm for learning unbounded Gaussian distributions over $\mathbb{R}^d$?
\end{quote}

More ambitiously, we can ask if there is an efficient algorithm for learning high-dimensional Gaussians \emph{robustly},
where a small fraction of the samples are arbitrarily corrupted by an adversary.
The breakthrough of~\cite{diakonikolas2016robust, lai2016agnostic} answered this question positively.
In particular, the work of~\cite{diakonikolas2016robust} offers a polynomial time algorithm for robustly learning Gaussians with respect to the TV distance.
This raises the following natural question.

\begin{quote}
    Is there an $(\varepsilon, \delta)$-DP, \emph{robust}, and polynomial time algorithm for learning unbounded Gaussian distributions over $\mathbb{R}^d$?
\end{quote}

\subsection{Main Results}
In this paper, we devise a simple and general algorithmic technique for private statistical estimation.
Our first application is an efficient algorithm for learning a Gaussian distribution in $\bR^d$.
\begin{theorem}[Informal]
\TheoremName{covariance_informal}
There is an $(\varepsilon,\delta)$-DP algorithm that 
takes $m=\wtilde{O}\left(\frac{d^2}{\alpha^2}+\frac{d^2 \sqrt{\ln{1/\delta}}}{\alpha\varepsilon} + \frac{d\ln(1/\delta)}{\alpha \eps} \right)$ i.i.d. samples from an arbitrary Gaussian distributions over $\mathbb{R}^d$, runs in time polynomial in $m$,
and with high probability outputs a Gaussian whose TV distance to the true distribution is smaller than $\alpha$.
\end{theorem}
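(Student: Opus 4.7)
The strategy is to instantiate the general private-to-non-private reduction from earlier in the paper, using the empirical mean and empirical covariance as the base non-private learner (which is known to achieve the non-private $\widetilde{O}(d^2/\alpha^2)$ bound for unrestricted Gaussians). The target sample complexity decomposes cleanly: $\widetilde{O}(d^2/\alpha^2)$ for non-private empirical estimation, $\widetilde{O}(d^2\sqrt{\ln(1/\delta)}/(\alpha\eps))$ consistent with Gaussian-mechanism noise on a $d^2$-dimensional covariance statistic, and $\widetilde{O}(d \ln(1/\delta)/(\alpha\eps))$ covering mean privatization together with a scale-detection overhead. I would organize the proof to produce these three contributions separately, in two stages.

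\textbf{Stage 1 (bounded case).} Assume temporarily that we already have a coarse multiplicative estimate $M$ satisfying $\tfrac{1}{2}\Sigma \preceq M \preceq 2\Sigma$, together with a crude mean estimate $\widehat{\mu}_0$. Whitening the data via $y_i = M^{-1/2}(x_i - \widehat{\mu}_0)$ reduces the task to learning a Gaussian whose mean has bounded norm and whose covariance lies in a constant spectral-norm ball around the identity. In this bounded regime, both the empirical mean and the empirical second-moment matrix have well-controlled sensitivity on typical samples; clipping each statistic to an appropriate norm ball and adding Gaussian noise calibrated to the $d$- and $d^2$-dimensional sensitivities gives the two private-overhead terms. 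Un-whitening the estimates, and using the affine invariance of TV distance between Gaussians, yields an $\alpha$-close output in TV distance.

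\textbf{Stage 2 (scale-free coarse estimation).} The real difficulty is producing $M$ and $\widehat{\mu}_0$ privately with no a priori bound on the scale or condition number of $\Sigma$. This is exactly where the general reduction framework would be invoked. The key observation I would exploit is that although the empirical covariance has unbounded worst-case sensitivity, its output as a \emph{Gaussian distribution} $\mathcal{N}(\wmu, \wSigma)$ is, with high probability over Gaussian samples, insensitive in TV distance to a single-sample replacement. The framework should supply a propose-test-release style procedure that (i) runs the non-private learner, (ii) privately certifies TV-stability of its output under neighboring datasets, and (iii) releases a perturbed coarse Gaussian only upon certification. Because TV distance between Gaussians is affine-invariant, this test is scale-free and does not pay any factor of the condition number of $\Sigma$.

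\textbf{Main obstacle.} The step I expect to be most delicate is making the stability-based coarse estimation incur only the $\widetilde{O}(d \ln(1/\delta)/(\alpha\eps))$ overhead. Prior multivariate approaches, notably CoinPress-style iteration, paid polylogarithmically in the condition number precisely because stability was measured in parameter space. To match the claimed bound one must certify stability in a scale-invariant metric (TV distance on Gaussian outputs is the natural choice), and then compose the privacy guarantees of the coarse phase and the refined bounded-case estimator via advanced composition without extra slack. Combining these pieces with standard Gaussian concentration bounds for the empirical mean and covariance should close the stated sample complexity.
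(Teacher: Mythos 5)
Your two-stage decomposition (scale-free coarse preconditioning, then refined estimation after whitening) matches the paper's high-level plan, and the insight that the coarse phase must be certified in a scale-invariant way is correct. However, the proposal skips what is actually the heart of the construction, namely the noising step in your item (iii). Even after the framework has certified that the aggregated coarse covariance $\mu$ is insensitive to a single-sample change, you still need a randomized map $\cB$ such that $\cB(\mu_1)$ and $\cB(\mu_2)$ are $(\eps,\delta)$-indistinguishable whenever $\mu_1,\mu_2$ are close in the chosen metric, \emph{without knowing the scale of $\Sigma$}. The vanilla Gaussian mechanism fails here: its noise level must be calibrated to $\|\Sigma\|$, which is exactly the unknown quantity, and this would reintroduce the condition-number dependence you are trying to eliminate. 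The paper constructs a rescaled Gaussian mechanism $\cB(\Sigma) = \Sigma^{1/2}(I + \eta G)(I+\eta G)\transpose \Sigma^{1/2}$ (Lemma~\ref{lem:CovarianceTechnical}), and proving that this is a $(\gamma,\eps,\delta)$-masking mechanism for the spectral distance is a substantial technical step (viewing $\Sigma^{1/2}(I+\eta G)$ as a $d^2$-dimensional Gaussian and bounding its log-likelihood ratio). Your write-up does not engage with this at all, and it is where the $d^2\sqrt{\ln(1/\delta)}/\eps$ cost of the coarse phase actually arises.

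A second problem is your choice of TV distance on Gaussian outputs as the stability metric. The reduction of Algorithm~\ref{alg:reduction} needs the convex semimetric structure of Definition~\ref{def:semimetric}: the underlying space $\cY$ must be convex so the weighted aggregate $\sum_i w_i Y_i / \sum_i w_i$ remains in $\cY$, and $\dist$ must satisfy the convexity and locality axioms that make averaging a sensitivity-reducing operation. The set of Gaussians under TV is not convex (a mixture of Gaussians is not a Gaussian), so this framework does not apply directly as you describe it. The paper instead works with $\cY = \cS^d$ under $\dist(A,B)=\max\{\|A^{-1/2}BA^{-1/2}-I\|,\|B^{-1/2}AB^{-1/2}-I\|\}$ and proves (Lemma~\ref{lem:semimetric_cov}) that this is a convex semimetric with a $(3/2)$-approximate $1$-restricted triangle inequality; the affine-invariance intuition you cite is the right motivation, but TV is the wrong instantiation. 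Finally, ``arbitrary Gaussian'' includes degenerate covariances, for which none of the above applies directly; the paper handles this with a separate efficient private subspace learner (Theorem~\ref{thm:subspace}), a step your plan omits.
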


The formal version of the theorem is stated in \Theorem{full_gaussian}.
The sample complexity of this approach matches the best (non-efficient) learning method~\cite{aden2021sample} up to logarithmic factors. 
Moreover, \cite[Theorem 56]{kamath2019privately} and \cite[Theorem 1.4]{karwa2018finite} show sample complexity lower bounds of $\Omega(d/\alpha\varepsilon)$ and $\Omega(\log(1/\delta)/\varepsilon)$ for private learning of spherical and univariate Gaussians respectively. Finally, a lower bound of $\Tilde{\Omega}(d^2/\alpha^2)$ is known for the non-private setting~\cite[Theorem 1.2]{ABHLMP20}.

The main ingredient of proving \Theorem{covariance_informal} is a polynomial-time algorithm to compute a preconditioner which brings the condition number
of the covariance matrix down to a constant.
Once we have a preconditioner, we can then (mostly) use known results to prove \Theorem{covariance_informal}.
The algorithm to compute a preconditioner is extremely simple: it computes several candidate covariance matrices from disjoint samples,
computes a weighted average of the ``good'' candidates, and then adds noise to resulting covariance matrix.
Here, a candidate covariance matrix is ``good'' if it is close to a large majority of the other candidate covariance matrices;
intuitively, this should also indicate that it is close to the true covariance matrix.
The first two steps come directly from the framework discussed in \Section{reduction} and the last step (adding the noise) is discussed in \Section{covariance}.

In order to handle Gaussians with degenerate covariance matrices, we need a private and efficient method for learning (exactly) the subspace of a Gaussian.
The recent work of Singhal and Steinke~\cite{SS21} offers a differentially private but inefficient method for this problem.
We develop an efficient method for this problem which may be of independent interest (see~\Theorem{subspace} for a formal statement).
We note that our result is somewhat incomparable to that of Singhal and Steinke~\cite{SS21} since the assumptions are different.

\begin{theorem}[Informal]
\TheoremName{subspace_informal}
There is an $(\eps, \delta)$-DP algorithm that takes $m = O(d \ln(1/\delta) / \eps)$ i.i.d.~samples from a Gaussian distribution in $\bR^d$
and with probability 1, outputs the subspace that the Gaussian is supported on.
Furthermore, the running time of the algorithm is $O(md)$.
\end{theorem}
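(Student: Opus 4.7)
The plan is a subsample-and-aggregate construction combined with a stable histogram. I would partition the $m = O(d \ln(1/\delta)/\eps)$ samples into $N = \Theta(\ln(1/\delta)/\eps)$ disjoint groups of size $d+1$; from each group, compute the affine span $V_i \subseteq \bR^d$ of its $d+1$ samples; and then aggregate the $V_i$ via a stable histogram by tallying the distinct subspaces appearing among $V_1, \ldots, V_N$, adding independent $\Lap(1/\eps)$ noise to each count, and releasing the (unique) subspace whose noisy count exceeds a threshold $\tau = \Theta(\ln(1/\delta)/\eps)$.

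For correctness, let $V$ denote the affine subspace supporting the Gaussian and set $k := \dim V \le d$. A non-degenerate Gaussian on $V$ places any $d+1 \ge k+1$ i.i.d.\ samples in general position on $V$ with probability one, so each candidate satisfies $V_i = V$ almost surely, and the histogram has a single populated bin with count $N$. Provided $N$ is chosen comfortably larger than $\tau$ plus the Laplace tail, this bin clears the threshold and the algorithm returns $V$; the ``with probability 1'' guarantee in the theorem corresponds to the almost-sure event (over the data) that every group perfectly recovers $V$.

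For privacy, each sample lies in exactly one group, so modifying a single sample alters at most one candidate $V_i$ and thereby shifts the histogram counts by $1$ in at most two bins. This is the standard sensitivity setting for the stable histogram mechanism, which is known to be $(\eps, \delta)$-DP with threshold $\tau = \Theta(\ln(1/\delta)/\eps)$. The running time is polynomial: computing each affine span takes $O(d^3)$ by Gaussian elimination, for a total of $O(N d^3) = O(m d^2)$; obtaining the sharper $O(md)$ runtime claimed in the theorem would require processing samples incrementally and maintaining a canonical basis with $O(d)$ work per sample.

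The main obstacle I anticipate is that candidate subspaces live in an infinite, continuous domain (essentially the affine Grassmannian), so a naive histogram is ill-defined. The stable histogram sidesteps enumeration by counting only the values that actually appear in the data, but equality of two subspaces represented by different bases must be handled carefully; a clean fix is to always store each candidate in a canonical form, e.g., the reduced row-echelon form of an affine basis, so that the same subspace is always mapped to the same bin.
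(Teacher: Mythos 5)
Your approach---partition into $\Theta(\ln(1/\delta)/\eps)$ groups, compute a candidate subspace per group, and aggregate via a threshold test---is in the same subsample-and-aggregate spirit as the paper's \Algorithm{subspace}, which feeds the per-group projection matrices into the populous-mean framework of \Algorithm{reduction}. The paper itself remarks that its construction is ``essentially equivalent to other standard frameworks such as propose-test-release,'' and the stable histogram of \cite{BNS19} that you invoke is one such framework, so the two routes are close relatives rather than genuinely different ideas.

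There is, however, a real gap in your justification of the ``with probability 1'' clause. You add standard (unbounded) $\Lap(1/\eps)$ noise to each bin count and release a bin whose noisy count clears $\tau = \Theta(\ln(1/\delta)/\eps)$. Even conditioned on the almost-sure event (over the data) that every group exactly recovers $V$, so that the single populated bin has count $N$, the noisy count $N + \Lap(1/\eps)$ still dips below $\tau$ with strictly positive probability, because the Laplace distribution has unbounded support. Thus your algorithm fails with positive probability over its \emph{internal} coin flips, and ``comfortably larger than $\tau$ plus the Laplace tail'' does not drive this to zero. Your sentence asserting that the probability-1 guarantee ``corresponds to the almost-sure event over the data'' conflates the data randomness with the mechanism's own randomness. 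The paper avoids exactly this by using \emph{truncated} Laplace noise $Z \sim \TLap(2/k, \eps, \delta)$, whose magnitude is deterministically at most $A = \frac{2}{k\eps}\ln\bigl(1 + \frac{e^{\eps}-1}{2\delta}\bigr)$; since all candidates agree gives $Q = 1$, and the choice $k \geq \frac{20}{\eps}\ln\bigl(1 + \frac{e^{\eps}-1}{2\delta}\bigr)$ forces $1 - A \geq 0.8 + A$, the threshold test in \Line{ReductionTLap} passes with certainty. Your construction is easy to repair: replace $\Lap(1/\eps)$ with a truncated Laplace (or any $(\eps,\delta)$-DP bounded-noise mechanism for the counts), choose $N$ at least $\tau$ plus the truncation radius, and the passage to ``probability 1'' then follows.
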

In this paper, we prove \Theorem{subspace_informal} as an example of our technique from \Section{reduction}.
However, we remark that this is essentially equivalent to other standard frameworks such as propose-test-release.

Next, we show how to learn Gaussians privately, robustly, and efficiently.
An $\alpha$-corrupted sample is a sample that is corrupted by an adversary as follows.
First, we draw $m$ i.i.d.~samples from a Gaussian.
Then, the adversary chooses at most $\alpha m$ samples and modifies them arbitrarily.
In the robust case, we prove the following theorem; more details can be found in \Corollary{robust_gaussian}.
\begin{theorem}[Informal]
\TheoremName{robust_cov_informal}
There is a robust and $(\varepsilon,\delta)$-DP algorithm which given an $\alpha$-corrupted sample of size $m = \wtilde{\Omega}\left( d^{3.5} \ln(1/\delta) / \eps\alpha^3 \right)$
from $\cN(\mu, \Sigma)$ with unknown $\mu \in \bR^d$ and non-singular $\Sigma \in \bR^{d \times d}$, outputs $\wmu$ and $\wSigma$ such that the total variation distance
between $\cN(\mu, \Sigma)$ and $\cN(\wmu, \wSigma)$ is at $O(\alpha \ln(1/\alpha))$ with high probability.
Furthermore, the running time of the algorithm is $\poly(m)$.
\end{theorem}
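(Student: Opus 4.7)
The plan is to instantiate the reduction framework from Section~\ref{sec:reduction} with a non-private robust Gaussian estimator (for example, the filtering-based algorithm of Diakonikolas et al.~\cite{diakonikolas2016robust}) as the base learner. Recall that the framework privatizes any base estimator by running it on many disjoint batches, keeping only those candidates whose output lies close to a large majority of the other candidates, aggregating via a weighted average, and then releasing the result with appropriately calibrated noise. Because a robust base learner tolerates a constant fraction of corruptions in its input, the presence of $\alpha m$ adversarial samples affects only a small fraction of the batches significantly, and with high probability a large majority of the batch outputs lie within TV distance $O(\alpha\ln(1/\alpha))$ of $\cN(\mu,\Sigma)$. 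This is exactly the ``most candidates are good'' condition that the framework requires.

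The first phase is to construct a differentially private preconditioner, mirroring the non-robust proof of \Theorem{covariance_informal}. I would partition the $m$ corrupted samples into $k=\wtilde{\Theta}(\ln(1/\delta)/\eps\alpha)$ disjoint batches of size $\wtilde{\Omega}(d^{3.5}/\alpha^2)$, run a non-private robust covariance estimator on each batch to produce candidate covariance matrices, and then feed these candidates into the aggregation-plus-noise step from Section~\ref{sec:covariance}. The output is a private $M$ for which $M^{1/2}\Sigma M^{1/2}$ is well conditioned with high probability. Multiplying every (corrupted) sample by $M^{1/2}$ is a data-independent post-processing operation once $M$ is released, so it preserves both the $\alpha$-corruption structure and the privacy guarantee, and reduces the problem to estimating a Gaussian with $O(1)$ condition number.

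In the second phase, I would apply the framework again to the preconditioned sample, now using a robust non-private estimator that produces both mean and covariance candidates on each batch, aggregate them via the same ``close to a majority'' rule, and add noise calibrated to the (now bounded) diameter of the aggregated set. Composing the two phases under approximate-DP composition yields the overall $(\eps,\delta)$-DP guarantee, and the running time is $\poly(m)$ because each batch invocation of the robust base estimator runs in polynomial time \cite{diakonikolas2016robust} and the aggregation step is elementary. Un-preconditioning and applying a triangle inequality in TV distance then yields $(\wmu,\wSigma)$ with $\dtv{\cN(\mu,\Sigma)}{\cN(\wmu,\wSigma)} = O(\alpha\ln(1/\alpha))$.

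The main obstacle is calibrating the aggregation radius and the injected noise so that (i) the ``goodness'' condition used by the framework, which is naturally stated in a parameter-space metric (Mahalanobis distance for the mean, relative spectral or Frobenius distance after conjugation for the covariance), is implied by the robust TV guarantee of the base estimator on clean batches, and (ii) the approximate-DP noise required for releasing the aggregate does not blow up the sample complexity. The extra factor of $1/\alpha$ in the final bound, beyond the non-private robust rate of $\wtilde{O}(d^{3.5}/\alpha^2)$, arises precisely because the aggregation radius must be set to $\Theta(\alpha\ln(1/\alpha))$, which forces the batch count $k$ to absorb an additional $1/\alpha$ relative to \Theorem{covariance_informal}, landing the total sample requirement at $\wtilde{O}(d^{3.5}\ln(1/\delta)/\eps\alpha^3)$.
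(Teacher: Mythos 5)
Your high-level plan (partition, run a polynomial-time robust base learner on each batch, aggregate the mutually close candidates, add noise via a masking mechanism, and compose with a mean-estimation step) matches the skeleton the paper actually uses in \Section{covariance_robust}, \Section{mean_robust}, and \Corollary{robust_gaussian}. However, a key structural difference is that you import the two-phase ``precondition to constant accuracy, then refine'' design from the \emph{non}-robust proof, whereas the paper's robust covariance estimator is deliberately one-shot: it runs the robust estimator of \cite{diakonikolas2019robust} on each batch and then masks the weighted average once, with the noise scale $\eta$ set to $O(\alpha/d)$ (\Proposition{CovarianceAccuracy_robust}), so the single masked output is already $O(\alpha)$-accurate in Frobenius norm. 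A separate preconditioning phase is unnecessary here because both the robust base estimator (\Theorem{covariance_nonprivate_robust}) and the masking mechanism (\Lemma{CovarianceTechnical}) are invariant to conjugation by $\Sigma^{1/2}$, so there is nothing gained by first bringing $\Sigma$ to constant condition number. Your phase~2 would dominate and recover essentially the same bound, so the extra phase is harmless but redundant. The paper does precondition, but only once the final covariance estimate $\wSigma$ is in hand and only for the purpose of the mean step (\Theorem{mean_robust}); it does not estimate mean and covariance jointly in one application of the framework. On the positive side, your remark that the $\alpha m$ corruptions can corrupt only an $O(\alpha)$ fraction of batches is correct and arguably more careful than the paper's own write-up, which asserts accuracy for every batch without accounting for an adversary that concentrates corruptions.

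The parts of your proposal that are genuinely off are the attributions of the exponents. You write that the non-private robust rate is $\wtilde{O}(d^{3.5}/\alpha^2)$; it is $\wtilde{O}(d^2/\alpha^2)$ (\Theorem{covariance_nonprivate_robust}). The extra $d^{1.5}$ and $1/\alpha$ come from the number of batches $k$, not from the base learner. You also claim the extra $1/\alpha$ in $k$ arises because ``the aggregation radius must be set to $\Theta(\alpha\ln(1/\alpha))$''; in the paper the aggregation radius is kept at the \emph{constant} $r/t = 2/3$ (\Algorithm{covariance_robust} sets $r=1$, $t=3/2$). The $1/\alpha$ actually enters through the masking mechanism: to make the output $O(\alpha)$-accurate in Frobenius norm one must take $\eta \propto \alpha/d$, which shrinks the indistinguishability radius $\gamma$ of $\cB$ in \Lemma{CovarianceTechnical} by a factor of $\alpha$, and since $k \gtrsim 400(r+\phi)/\gamma$ this forces $k \gtrsim d^{1.5}\sqrt{\ln(1/\delta)}/(\eps\alpha)$. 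Multiplying by the per-batch size $s = \wtilde{O}(d^2/\alpha^2)$ gives $m = \wtilde{O}(d^{3.5}\ln(1/\delta)/(\eps\alpha^3))$. If you incorporate this corrected accounting and drop the redundant preconditioning phase, the remainder of your plan is sound.
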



Important ingredients of the above theorem are two subroutines for learning the mean and the covariance robustly. The following result for mean estimation is proved naturally in our framework, but we note \cite[Theorem 7]{liu2021robust} has an improved sample complexity of $\wtilde{\Omega}(d/\alpha^2 + d^{3/2} \ln(1/\delta) / \eps \alpha)$ for the same problem. Our paper is the first to solve the more difficult task of private, robust, and efficient covariance estimation. The discussion of robust learning of Gaussians can be found in \Section{covariance_robust} and \Section{mean_robust}.

\begin{theorem}[Informal]
\TheoremName{robust_mean_informal}
There is a robust and $(\varepsilon,\delta)$-DP algorithm which given an $\alpha$-corrupted sample of size $m = \wtilde{\Omega}\left( \frac{d^{3/2} \ln(1/\delta) }{\eps\alpha^2} \right)$
from $\cN(\mu, \Sigma)$ with unknown $\mu\in \bR^d$ and $0.5 I_d \preceq \Sigma \preceq 2I_d$, outputs $\wtilde{\mu}$ such that  $\|\mu-\wtilde{\mu} \|_2=O(\alpha\sqrt{\log(1/\alpha)})$ with high probability.
Furthermore, the running time of the algorithm is $\poly(m)$.
\end{theorem}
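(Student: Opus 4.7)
The plan is to instantiate the subsample-and-aggregate reduction from \Section{reduction} with a non-private robust mean estimator as the black box. As the black box I would invoke any polynomial-time non-private robust mean estimator---e.g.\ the filter algorithm of \cite{diakonikolas2016robust}---which, given $n = \wtilde{O}(d/\alpha^2)$ samples that are $\alpha$-corrupted from $\cN(\mu,\Sigma)$ with $0.5 I_d \preceq \Sigma \preceq 2 I_d$, returns an estimate $\wmu$ satisfying $\|\wmu - \mu\|_2 = O(\alpha\sqrt{\log(1/\alpha)})$ with high probability.

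First, I would partition the $m$ input samples into $k$ disjoint batches of size $n = m/k$ and run the black box on each batch to produce candidate means $\wmu_1,\ldots,\wmu_k$. Since at most $\alpha m$ input samples are corrupted, Markov's inequality implies that all but an $o(1)$-fraction of batches contain at most $2\alpha n$ corruptions, so with high probability at least $(1-o(1))k$ of the candidates lie within distance $r := c \cdot \alpha\sqrt{\log(1/\alpha)}$ of $\mu$ for an appropriate constant $c$.

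Next, I would feed the candidates to the private aggregation procedure of \Section{reduction}: score each $\wmu_i$ by the number of other candidates within distance $r$, identify the ``good'' candidates whose score exceeds a high threshold, compute the mean of the good candidates, and release it under the Gaussian mechanism. Because modifying a single input sample changes at most one $\wmu_i$, the mean of the good candidates has $\ell_2$-sensitivity $O(r/k)$, so Gaussian noise of $\ell_2$-magnitude $\wtilde{O}(r \sqrt{d\ln(1/\delta)}/(k\eps))$ suffices for $(\eps,\delta)$-DP. Choosing $k = \wtilde{\Theta}(\sqrt{d\ln(1/\delta)}/\eps)$ makes the added noise $O(r)$, giving total sample complexity $m = nk = \wtilde{O}(d^{3/2}\sqrt{\ln(1/\delta)}/(\eps\alpha^2))$, which matches the statement up to polylogarithmic factors (with any remaining $\sqrt{\ln(1/\delta)}$ gap absorbed into the stability check described below).

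The hard part will be the privacy analysis of the aggregation step. A single sample change could in principle flip one candidate between ``good'' and ``bad,'' which would swing the mean by $\Theta(r)$ instead of $\Theta(r/k)$ and break the low-sensitivity bound. The framework of \Section{reduction} should resolve this with a propose-test-release-style safety check that, except with probability $\delta$, certifies the good set is well separated from the threshold so that the $O(r/k)$ sensitivity bound applies on the realized instance. Accuracy then follows immediately: the mean of the good candidates is within $r$ of $\mu$, and the Gaussian noise contributes an $O(r)$ additive $\ell_2$ error, so the final estimate satisfies $\|\wtilde\mu - \mu\|_2 = O(\alpha\sqrt{\log(1/\alpha)})$.
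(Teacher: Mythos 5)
Your overall plan matches the paper's approach exactly: instantiate the Subsample-and-Aggregate reduction of \Section{reduction} with a polynomial-time non-private robust mean estimator as the black box (the paper uses the QUE-filter of \cite{dong2019quantum}, which has a slightly better guarantee than \cite{diakonikolas2016robust}, but either works), take the $\ell_2$ norm as the convex semimetric (\Proposition{semimetric_normed}), use the Gaussian mechanism as the masking mechanism (\Lemma{Gaussian_masking}), and set $k$ large enough that both the Truncated-Laplace stability test and the masking radius constraints are met. Your sample-complexity bookkeeping is also fine; the paper's stated $k=\wtilde{O}(\sqrt{d}\,\ln(1/\delta)/\eps)$ comes from a slightly loose Gaussian-mechanism calibration in \Lemma{Gaussian_masking}, and your $\wtilde{\Theta}(\sqrt{d\ln(1/\delta)}/\eps)$ is actually tighter.

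There is, however, one concrete error in your argument. You claim that ``Markov's inequality implies that all but an $o(1)$-fraction of batches contain at most $2\alpha n$ corruptions.'' This is quantitatively wrong: with a budget of $\alpha m$ corrupted points distributed over $k$ batches of size $n=m/k$, Markov only gives that at most a $1/2$-fraction of batches exceed $2\alpha n$ corruptions, not an $o(1)$-fraction. With up to half the candidates potentially garbage, the stability score $Q$ could easily fall below the $0.8$ threshold used in \Algorithm{reduction}, and the utility precondition of \Lemma{new_reduction} (which requires \emph{all} $Y_i$ to lie within $\alpha_1/t$ of $Y^*$) would fail outright. The easy fix is to run the non-private estimator with a somewhat inflated corruption parameter (say $20\alpha$), so that Markov bounds the bad-batch fraction by $1/20$, while the accuracy $O(20\alpha\sqrt{\log(1/\alpha)})$ is still $O(\alpha\sqrt{\log(1/\alpha)})$; one also needs the minor observation that the weighting scheme in Line~8 of \Algorithm{reduction} gives weight zero to the few outlier candidates, so the weighted mean is controlled by the good ones. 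Note that the paper's own proof of \Theorem{mean_robust} in fact glosses over exactly this point --- it applies \Theorem{robust_mean_non_private} to each batch as if every batch were $\alpha$-corrupted --- so this is a gap worth flagging in both proofs. Your description of the aggregation step (hard threshold plus a PTR-style separation certificate) also differs from the paper's (a Lipschitz soft weighting $w_i=\min(1,10\max(0,q_i-0.6))$ that yields $O(r/k)$ sensitivity unconditionally once $Q$ is large), but since you explicitly defer to the framework of \Section{reduction} this is not an error, just a different informal mental model of how that framework achieves low sensitivity.
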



\hassan{update the above sentence to reflect their new result}
\chris{Do we still want the robust mean estimation or should we use Sewoong's result? Their sample complexity is slightly better than ours ($d/\alpha^2 + d^{3/2} \ln(1/\delta) / \eps \alpha$).}

\subsection{Techniques}
\chris{Please read.}

We present a simple, efficient, and fairly general framework for private statistical estimation which is inspired by the Propose-Test-Release framework \cite{DworkLei09} and the Subsample-And-Aggregate framework~\cite{NissimRS07}.
At a high-level, our framework works as follows.
Suppose that we had a \emph{non-private} estimator $\cA$.
Given a set of samples, we split the set into disjoint subsets and run $\cA$ on each of the subsets (in the spirit of Subsample-And-Aggregate).
We then privately check whether most of the (non-private) outcomes are ``close'' to each other (in the spirit of Propose-Test-Release).
If so, we then aggregate the ``good'' results by compute a (clever) weighted average of the ``good'' results and then releasing an estimate by noising the computed average.
The key is that the ``good'' results ought to be quite structured (for example, coming from a small ball) and averaging helps by providing some additional stability.
This, in turns allows us to apply only a small amount of noise.
We remark that the aggregation component of our framework can be seen as a special case of the
FriendlyCore framework \cite{tsfadia2021friendlycore} which was developed independently to our work (in particular, see Algorithm 4.1 in \cite{tsfadia2021friendlycore}).
One minor difference is that their aggregation is randomized while our aggregation uses a weighted average.


In order for the above framework to work, we need two key properties.
The first is that if we run $\cA$ on two disjoint i.i.d.~samples and the outputs of both are close to the true answer (for example, the true mean or true covariance matrix)
then the outputs are close to each other.
This would allow us to define ``good'' as simply outputs of $\cA$ which are close to other outputs of $\cA$.
The second property we require is that if two weighted averages are computed using nearly the same weights then the weighted average are close to each other.
This allows us to assert that the output is stable under small perturbations of the input, and, thus allows us to add a small amount of noise to the weighted average.
Both of these properties are satisfied for normed spaces (such as the $\ell_2$ space).
Unfortunately, the space that we require for covariance matrices is \emph{not} a normed space (it does not even satisfy the triangle inequality).
However, it does satisfy an \emph{approximate} triangle inequality for points that are sufficiently close together.
In \Definition{semimetric}, we define the notion of a \emph{convex semimetric} space, which, intuitively, slightly relaxes the requirement of being a norm so that it
is general enough to handle other spaces such as the space of covariance matrices (with an appropriate ``norm'').


The final part of the reduction is to output a ``noisy'' version of the average.
Note that at this point, we are essentially guaranteed that the average is quite stable.
In particular, changing a single input point results in very small changes in the average.
In \Definition{masking}, we define the notion of a \emph{masking mechanism}.
At a high-level, a masking mechanism $\cB$ is a mechanism such that $\cB(Y_1), \cB(Y_2)$ are indistinguishable provided $Y_1, Y_2$ are sufficiently close.
Note that a masking mechanism, in and of itself, is not a differentially private algorithm since it does not necessarily ensure that its inputs are close to each other.
For example, the Laplace and Gaussian mechanisms are masking mechanisms which are often used to design differentially private algorithms.
A key technical ingredient for learning covariance matrices is to design an efficient masking mechanism for the space of covariance matrices.
We do this by viewing the input covariance matrix as a vector of length $d^2$ and applying the Gaussian mechanism but where the noise used for the Gaussian mechanism
is scaled by the input covariance matrix itself.
A similar idea of using an empirically rescaled Gaussian mechanism was also utilized by \cite{BGSUZ21} to design an algorithm for learning the mean under Mahalanobis distance
without estimating the covariance matrix.

\subsection{Discussion of a concurrent result}
In an independent work, Kamath, Mouzakis, Singhal, Steinke, and Ullman~\cite{KMSTU21} propose a different polynomial time algorithm for privately learning unbounded high-dimensional Gaussian distributions. At a high-level, they devise an iterative preconditioning method for the covariance matrix (in the spirit of \cite{kamath2019privately, biswas2020coinpress}). To resolve the challenge of handling covariance matrices with unbounded condition numbers, they propose a subspace learning method inspired by the approach of~\cite{SS21}. The iterative approach to bound the ratio of consecutive eigenvalues incurs additional cost of DP composition, resulting in a term in the sample complexity of order $O(d^{5/2}/\varepsilon)$ rather than $O(d^{2}/\varepsilon)$.

In contrast, our approach for preconditioning the covariance matrix is not iterative, and in its core relies on a masking/noising mechanism for covariance matrices.
In particular, we are able to compute a preconditioner with roughly $\wtilde{O}(d^{2} / \eps)$ samples.
Non-privately, $O(d)$ samples are needed to compute a preconditioner.
It is an interesting open problem to see if $O(d)$ samples suffices for the private setting as well.
Our general reduction also allows us to apply the result in other settings such as \emph{robust} learning of Gaussians in polynomial time.

\modify{
In another independent work, Kothari, Manurangsi, Velingker \cite{KMV21} proposed an efficient algorithm for robust and differentially private estimation
of the mean, covariance matrix, and higher moments of distributions that satisfy either a certifiable subgaussianity or certifiable hypercontractivity.
Their algorithm is designed using the sum-of-squares paradigm.
In the case of Gaussian distributions, their work obtains an efficient robust and $(\eps, \delta)$-DP algorithm for learning a Gaussian in $\bR^d$ with
sample complexity $\wtilde{\Omega}(d^8 \ln^4(1/\delta) / \alpha^4 \eps^4)$ where $\alpha$ is the corruption parameter.
}

In yet another independent work, Tsfadia, Cohen, Kaplan, Mansour, and Stemmer \cite{tsfadia2021friendlycore} introduced the FriendlyCore framework which allows for differentially private aggregation.
The aggregation component of our algorithm is quite similar to that of FriendlyCore (see \cite[Algorithm 4.3]{tsfadia2021friendlycore}) with a minor difference being that we compute a weighted average instead of computing a random ``core''.
Indeed, after the first version of our paper was posted on the arXiv, Tsfadia, Cohen, Kaplan, Mansour, and Stemmer \cite{tsfadia2021friendlycore} showed how their framework can also be used to learn the covariance matrix using the tools developed in this paper (in particular, the masking mechanism for covariance matrices in \Lemma{CovarianceTechnical}).
For more details, see Appendix~A of \cite{tsfadia2021friendlycore}.

\subsection{More on related work}
\hassan{add a citation to \cite{diakonikolas2017being} and maybe to \cite{cheng2019high}. We don't have a citation to \cite{dong2019quantum} or \cite{diakonikolas2016robust, lai2016agnostic} in this section either}

\hassan{add citations here and/or in the reduction to FriendlyCore~\cite{tsfadia2021friendlycore} and perhaps to the older GoodCenter~\cite{nissim2016locating}. Regarding the aggregate approaches, perhaps we should cite this one too \cite{smith2011privacy}?}
Most relevant to our work are the results on learning Gaussians with respect to the TV distance under $(\varepsilon, \delta)$-DP. Karwa and Vadhan~\cite{karwa2018finite} established a polynomial-time and sample-efficient method for learning unbounded univariate Gaussians under $(\varepsilon, \delta)$-DP. One can use this method for learning multivariate Gaussians as well, by simply learning the one dimensional projection of the Gaussians along each axis (and invoking the composition for DP). While this approach works well for axis-aligned Gaussians, for general Gaussians its sample complexity depends on the condition number of the covariance matrix.
Kamath, Li, Singhal, and Ullman~\cite{kamath2019privately} consider the problem of privately learning high-dimensional distributions. They provide a polynomial time algorithm for learning high-dimensional Gaussians under the (stronger notion of) concentrated DP~\cite{dwork2016concentrated, bun2016concentrated}. The sample complexity of this approach improves over the multivariate version of \cite{karwa2018finite}; however, it still depends (logarithmically) on the condition number of covariance matrix. On a follow up work, Biswas, Dong, Kamath, and Ullman~\cite{biswas2020coinpress} propose the more practical method of CoinPress whose sample complexity matches that of~\cite{kamath2019privately} for learning multivariate Gaussians. CoinPress is shown to empirically work better than its counterparts---namely the multivariate versions of~\cite{karwa2018finite, du2020differentially}---in high dimensions.
    
The best known lower bounds for learning $d$-dimensional Gaussians up to TV distance $\alpha$ under $(\varepsilon, \delta)$-DP is $\Tilde{\Omega}(d/\alpha \varepsilon)$~\cite{kamath2019privately}. In contrast, the sample complexity of all the methods described above depends on the condition number of the covariance matrix. Utilizing the private hypothesis selection framework proposed in~\cite{bun2019private}, Aden-Ali, Ashtiani, and Kamath~\cite{aden2021sample} proved the first sample complexity bound for privately learning general Gaussians that does not depend on the condition number or the size of the parameters. The proof is, however, non-constructive, as it invokes Zorn's lemma to prove the existence of a locally small cover for covariance matrices. 
    
In the non-private setting, an $\alpha$-accurate estimation of the mean of a $d$-dimensional Gaussian distribution with respect to the Mahalanobis distance can be found using $O(d/\alpha^2)$ samples. Achieving linear dependence on $d$, however, is not easy in the private setting. Brown, Gaboardi, Smith, Ullman, and Zakynthinou~\cite{BGSUZ21} propose a method for solving this problem under $(\varepsilon, \delta)$-DP with linear dependence on $d$. Their method is, however, not computationally efficient. One interesting aspect of~\cite{BGSUZ21} is the use of a variant of the Gaussian mechanism that is scaled with the empirical covariance.
We also make use of a similar idea to noise the covariance matrix.

A related line of work is the problem of private mean estimation with respect to the Euclidean distance. It is noteworthy that these results do not translate to learning Gaussians with respect to the TV distance. Kamath, Singhal, and Ullman~\cite{kamath2020private} prove tight minimax rates for mean estimation for distributions with bounded $k$-th moment. Liu, Kong, Kakade, and Oh~\cite{liu2021robust} proposed a robust, efficient, and private mean estimation method for sub-Gaussian distributions with bounded mean.

Other related work include ``universal'' private estimators for mean, variance, and scale under pure differential privacy~\cite{dong2021universal}, private Principal Component Analysis~\cite{chaudhuri2013near, wei2016analysis,dong2022differentially, liu2022dp} and private learning of Gaussian mixture models~\cite{aden2021privately, kamath2020differentially}. We refer the readers to \cite{dwork2014algorithmic, vadhan2017complexity} for an introduction to differential privacy, and to \cite{kamath2020primer} for a survey on private statistical learning.

\modify{
At a very high-level, the algorithms in this paper all follow the same approach of privately aggregating the outputs of non-private algorithms.
Our approach can be viewed as combining the Subsample-And-Aggregate paradigm with an aggregation scheme that is is similar to the FriendlyCore framework of \cite{tsfadia2021friendlycore}.
Older works have also made use of this blueprint including \cite{nissim2016locating} in the context of clustering and \cite{smith2011privacy} in the context of statistical estimation.
}
\section{Preliminaries}
\SectionName{prelim}
For a vector $v \in \bR^d$, we use $\|v\|_2$ to denote the Euclidean norm of the vector $v$.
For a matrix $A \in \bR^{d \times d}$, we use $\|A\|_F$ to denote the Frobenius norm of $A$ and $\|A\|$ to denote the operator norm of $A$.
A matrix $A$ is positive semi-definite (resp.~positive definite),
denoted $A \succeq 0$ (resp.~$A \succ 0$) if $A$ is symmetric and all its eigenvalues are non-negative (resp.~positive).
If $A \succeq 0$ then $A^{1/2}$ denotes the unique positive semi-definite matrix satisfying $A = (A^{1/2})^2$.
Further, if $A \succ 0$ then $A^{-1/2} = (A^{1/2})^{-1}$.
In this paper, we also write $\cS^d$ to denote the positive-definite cone in $\bR^{d \times d}$.

\subsection{Probability Facts}
In this paper, we use $\dtv{\cD_1}{\cD_2}$ and $\KL{\cD_1}{\cD_2}$ to denote the total variation distance and KL divergence
between two distributions $\cD_1, \cD_2$.

\begin{fact}
	\FactName{KLNormal}
    Let $\mu_1, \mu_2 \in \bR^d$ and $\Sigma_1, \Sigma_2 \succ 0$. Then
	\begin{align*}
		\KL{\cN(\mu_1, \Sigma_1)}{\cN(\mu_2, \Sigma_2)}
		& = \frac{1}{2} \big[ \Tr(\Sigma_2^{-1} \Sigma_1 - I) + (\mu_2 - \mu_1)\transpose \Sigma_2^{-1} (\mu_2 - \mu_1) - \ln \det(\Sigma_2^{-1} \Sigma_1) \big].
	\end{align*}
	Moreover, suppose that all the eigenvalues of $\Sigma_2^{-1} \Sigma_1$ are at least $\frac{1}{2}$.
	Then
	\begin{align*}
        \KL{\cN(\mu_1, \Sigma_1)}{\cN(\mu_2, \Sigma_2)} & \leq \frac{1}{2} \big[ \|\Sigma_{2}^{-1/2} \Sigma_1 \Sigma_{2}^{-1/2} - I\|_F^2 + (\mu_2 - \mu_1)\transpose \Sigma_2^{-1} (\mu_2 - \mu_1) \big]
	\end{align*}
\end{fact}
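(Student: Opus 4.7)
The first identity is the standard closed-form expression for the KL divergence between two multivariate Gaussians, so the plan is to derive it directly from the definition. I would write $\KL{\cN(\mu_1,\Sigma_1)}{\cN(\mu_2,\Sigma_2)} = \bE_{x \sim \cN(\mu_1,\Sigma_1)}[\ln p_1(x) - \ln p_2(x)]$ with $p_i$ the Gaussian densities, and expand the log ratio as
$$\tfrac{1}{2}\ln\det(\Sigma_2 \Sigma_1^{-1}) - \tfrac{1}{2}(x-\mu_1)\transpose \Sigma_1^{-1}(x-\mu_1) + \tfrac{1}{2}(x-\mu_2)\transpose \Sigma_2^{-1}(x-\mu_2).$$
Taking the expectation and using the identities $\bE[(x-\mu_1)\transpose \Sigma_1^{-1}(x-\mu_1)] = \Tr(I) = d$ and $\bE[(x-\mu_2)\transpose \Sigma_2^{-1}(x-\mu_2)] = \Tr(\Sigma_2^{-1}\Sigma_1) + (\mu_2-\mu_1)\transpose \Sigma_2^{-1}(\mu_2-\mu_1)$ (standard trace-trick computation with $\bE[(x-\mu_1)(x-\mu_1)\transpose]=\Sigma_1$) gives the desired formula after collecting terms.

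For the Frobenius inequality, I would change variables to the symmetric matrix $M := \Sigma_2^{-1/2}\Sigma_1\Sigma_2^{-1/2}$. By the cyclic property of the trace, $\Tr(M) = \Tr(\Sigma_2^{-1}\Sigma_1)$, and by multiplicativity of the determinant, $\det(M) = \det(\Sigma_2^{-1}\Sigma_1)$; moreover $M$ has the same (real, positive) eigenvalues $\lambda_1,\dots,\lambda_d$ as $\Sigma_2^{-1}\Sigma_1$, each of which is at least $1/2$ by hypothesis. Hence the non-quadratic portion of the KL formula becomes
$$\Tr(M - I) - \ln\det(M) = \sum_{i=1}^d \bigl[(\lambda_i - 1) - \ln \lambda_i\bigr],$$
while $\|M - I\|_F^2 = \sum_{i=1}^d (\lambda_i - 1)^2$. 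The proof then reduces to the scalar inequality
$$(\lambda - 1) - \ln\lambda \;\le\; (\lambda - 1)^2 \qquad \text{for all } \lambda \ge \tfrac{1}{2}.$$

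To prove this scalar inequality, I would study $f(\lambda) := (\lambda-1)^2 - (\lambda-1) + \ln\lambda$ on $[1/2,\infty)$. A direct computation gives $f'(\lambda) = 2\lambda - 3 + 1/\lambda = (2\lambda-1)(\lambda-1)/\lambda$, whose only zeros on $[1/2,\infty)$ are $\lambda = 1/2$ and $\lambda = 1$; from the sign of $f'$ one sees that $f$ is decreasing on $[1/2,1]$ and increasing on $[1,\infty)$, so the global minimum on $[1/2,\infty)$ is at $\lambda = 1$, where $f(1) = 0$. Thus $f \ge 0$ on $[1/2,\infty)$, which is exactly the needed inequality. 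Summing over $i$ and plugging back into the exact KL formula yields the claimed bound.

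The main obstacle here is the scalar inequality at $\lambda$ close to $1/2$, since the natural second-order Taylor argument around $\lambda = 1$ only works for $\lambda$ close to $1$; the critical point analysis above handles the full range $\lambda \ge 1/2$ cleanly and explains why the hypothesis ``eigenvalues at least $1/2$'' is precisely the right one. Everything else is either bookkeeping or the cyclic-trace / determinant manipulations used to move from $\Sigma_2^{-1}\Sigma_1$ to the symmetric $M$.
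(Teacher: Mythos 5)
Your proof is correct and follows essentially the same approach as the paper: cite or derive the closed-form KL expression, diagonalize, reduce to the scalar inequality $(\lambda-1)-\ln\lambda \le (\lambda-1)^2$ for $\lambda \ge 1/2$, and prove that inequality by the same critical-point analysis of $f(\lambda)=(\lambda-1)^2-(\lambda-1)+\ln\lambda$ (this is exactly \Claim{KLInequality} in the paper). One small stylistic improvement in your write-up: by passing to the symmetric matrix $M=\Sigma_2^{-1/2}\Sigma_1\Sigma_2^{-1/2}$ \emph{before} taking Frobenius norms, you make the identity $\|M-I\|_F^2=\sum_i(\lambda_i-1)^2$ manifestly valid, whereas the paper's chain of equalities briefly writes $\sum_i(\lambda_i-1)^2=\|\Sigma_2^{-1}\Sigma_1-I\|_F^2$ for the non-symmetric product (only true after the same-spectrum remark that follows), so your ordering is slightly cleaner.
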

The first assertion of \Fact{KLNormal} is a well-known formula for the KL divergence between Normal distributions.
The proof of the second assertion can be found in \Appendix{KLNormal}.

\begin{lemma}[Pinsker's Inequality \protect{\cite[Lemma 2.5]{Tsybakov09}}]
    \LemmaName{Pinsker}
    For any two distributions $\cD_1, \cD_2$, $\dtv{\cD_1}{\cD_2}^2 \leq \frac{1}{2} \KL{\cD_1}{\cD_2}$.
\end{lemma}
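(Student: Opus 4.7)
The plan is to prove Pinsker's inequality by the standard two-step reduction: first reduce the claim to the binary (Bernoulli) case via the data-processing inequality, and then handle the Bernoulli case by a short convexity/Taylor argument.

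First I would recall the variational characterization $\dtv{\cD_1}{\cD_2} = \sup_A |\cD_1(A) - \cD_2(A)|$, where the supremum ranges over measurable events. Fix an event $A$ attaining (or approximately attaining) the supremum; for instance, with densities $p_1, p_2$ take $A = \{x : p_1(x) \geq p_2(x)\}$. Letting $T$ be the $\{0,1\}$-valued indicator of $A$, the pushforwards $T_\# \cD_1$ and $T_\# \cD_2$ are Bernoulli distributions with parameters $p := \cD_1(A)$ and $q := \cD_2(A)$, and $|p-q| = \dtv{\cD_1}{\cD_2}$. By the data-processing inequality for KL divergence, $\KL{T_\#\cD_1}{T_\#\cD_2} \leq \KL{\cD_1}{\cD_2}$, so it suffices to prove the Bernoulli case
\begin{equation*}
2(p-q)^2 \leq p \ln\frac{p}{q} + (1-p)\ln\frac{1-p}{1-q}.
\end{equation*}

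Second, I would reduce the Bernoulli inequality to a one-variable calculus statement. Fix $q \in (0,1)$ and define
\begin{equation*}
g(p) := p\ln\frac{p}{q} + (1-p)\ln\frac{1-p}{1-q} - 2(p-q)^2.
\end{equation*}
A direct computation gives $g(q)=0$, $g'(q)=0$, and
\begin{equation*}
g''(p) = \frac{1}{p(1-p)} - 4 \geq 0,
\end{equation*}
since $p(1-p) \leq 1/4$ on $[0,1]$. Hence $g$ is convex with a global minimum at $p=q$ of value $0$, which gives $g(p) \geq 0$ for all $p \in [0,1]$, proving the Bernoulli case. Combined with the reduction step, this yields $\dtv{\cD_1}{\cD_2}^2 = (p-q)^2 \leq \tfrac{1}{2}\KL{T_\#\cD_1}{T_\#\cD_2} \leq \tfrac{1}{2}\KL{\cD_1}{\cD_2}$.

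I don't anticipate any real obstacle: the only subtlety is being careful with the measurability of the event $A$ and the edge cases $q \in \{0,1\}$, which can be handled by a standard limiting/convention argument (both sides are $+\infty$ or both sides reduce to the trivial inequality). Since the paper cites \cite[Lemma 2.5]{Tsybakov09} directly, one can simply defer to that reference rather than writing the argument out in full.
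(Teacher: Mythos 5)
Your proof is correct and is the standard textbook argument (reduction to the Bernoulli case via data processing, then a convexity computation), which is essentially the argument in the cited reference \cite[Lemma 2.5]{Tsybakov09}. The paper itself gives no proof and simply defers to that reference, so there is nothing further to compare.
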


The following lemma is a standard bound on the (upper) tails of a Gaussian.
\begin{lemma}[\protect{e.g.~\cite[Proposition~2.1.2]{Ver18}}]
    \LemmaName{gaussConcentration}
    Let $g \sim \cN(0, \sigma^2)$.
    For all $x > 0$, $\prob{g \geq \sigma x} \leq e^{-x^2/2}$.
\end{lemma}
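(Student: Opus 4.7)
The plan is to reduce the claim to the standard normal case and then apply a Chernoff-style bound. Since $g \sim \cN(0, \sigma^2)$, the rescaled variable $Z := g/\sigma$ is a standard normal, and the event $\{g \geq \sigma x\}$ coincides with $\{Z \geq x\}$. It therefore suffices to prove the statement for $\sigma = 1$, i.e.~that $\prob{Z \geq x} \leq e^{-x^2/2}$ for all $x > 0$.

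For the reduced problem, the natural approach is the exponential Markov inequality: for any $t > 0$,
\[
\prob{Z \geq x} \;=\; \prob{e^{tZ} \geq e^{tx}} \;\leq\; e^{-tx}\, \bE[e^{tZ}].
\]
I would then invoke the standard moment generating function identity $\bE[e^{tZ}] = e^{t^2/2}$, which follows from completing the square inside the Gaussian density integral. Substituting gives $\prob{Z \geq x} \leq e^{t^2/2 - tx}$, and optimizing the exponent over $t > 0$ by taking $t = x$ yields exactly the bound $e^{-x^2/2}$.

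There is no genuine obstacle here: this is a textbook Gaussian tail estimate, and the only nontrivial input is the closed-form MGF of a Gaussian. An alternative route that avoids the MGF would be to use the pointwise domination $e^{-u^2/2} \leq e^{-x^2/2} e^{-x(u-x)}$ valid on $u \geq x$ and integrate the standard normal density directly from $x$ to $\infty$, which yields a bound of the same form after accounting for the $1/\sqrt{2\pi}$ normalization. The Chernoff route is cleaner and produces precisely the constant stated in the lemma, so that is the one I would write up.
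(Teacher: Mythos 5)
Your proof is correct. The paper does not supply its own argument for this lemma---it simply cites \cite[Proposition~2.1.2]{Ver18} as a standard fact---and your Chernoff-bound derivation (rescale to the standard normal, bound via $\bE[e^{tZ}] = e^{t^2/2}$, optimize $t = x$) is exactly the textbook route that yields the stated constant with no slack.
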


We require the following concentration inequality which can be easily derived from well-known techniques (see \cite[\S 2.1.3]{Wainwright19}).
For completeness, we provide a proof in \Appendix{chiSquareConcentration}.
\begin{lemma}
    \LemmaName{chiSquareConcentration}
    Let $g \sim \cN(0, I_d)$ and $A$ be a symmetric matrix. 
    Then for all $x > 0$,
    \[
        \prob{g\transpose A g \geq \Tr(A) + 4 \|A\|_F \cdot \sqrt{x} + 4\|A\|\cdot x} \leq \exp(-x).
    \]
\end{lemma}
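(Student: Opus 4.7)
\textbf{Proof plan for \Lemma{chiSquareConcentration}.}

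The plan is to reduce the quadratic form to a weighted sum of independent chi-squared random variables and then apply a Chernoff/Bernstein-type argument. First, I would diagonalize $A$ by the spectral theorem, writing $A = U \Lambda U\transpose$ with $\Lambda = \diag(\lambda_1, \ldots, \lambda_d)$ and $U$ orthogonal. By the rotational invariance of the standard Gaussian, $U\transpose g \sim \cN(0, I_d)$, so
\[
    g\transpose A g \;\stackrel{d}{=}\; \sum_{i=1}^d \lambda_i Z_i^2,
\]
where $Z_1, \ldots, Z_d$ are i.i.d.~$\cN(0,1)$. Centering, let $Y_i = \lambda_i(Z_i^2 - 1)$; the $Y_i$ are independent, zero-mean, with $\sum_i \lambda_i = \Tr(A)$, $\sum_i \lambda_i^2 = \|A\|_F^2$, and $\max_i |\lambda_i| = \|A\|$.

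Next, I would apply a Chernoff bound to $\sum_i Y_i$. For any $t \in (0, 1/(2\|A\|))$ the standard chi-square MGF gives
\[
    \ln \bE[\exp(t Y_i)] \;=\; -t \lambda_i - \tfrac{1}{2}\ln(1 - 2 t \lambda_i),
\]
and using the inequality $-\tfrac{1}{2}\ln(1-y) - \tfrac{y}{2} \le \tfrac{y^2}{2(1-y)}$ for $y < 1$, valid for $y = 2t\lambda_i$ with $|y| \le 1/2$, I obtain
\[
    \ln \bE[\exp(t Y_i)] \;\le\; \frac{2 t^2 \lambda_i^2}{1 - 2 t \|A\|}.
\]
Summing in $i$ and applying Markov's inequality to $\exp(t \sum_i Y_i)$ yields, for any threshold $s > 0$,
\[
    \prob{g\transpose A g - \Tr(A) \ge s} \;\le\; \exp\!\left(\frac{2 t^2 \|A\|_F^2}{1 - 2 t \|A\|} - t s\right).
\]

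Finally, I would set $s = 4 \|A\|_F \sqrt{x} + 4 \|A\| x$ and choose $t$ to make the right-hand side at most $e^{-x}$. A convenient choice is $t = \min\bigl(\tfrac{\sqrt{x}}{2\|A\|_F},\, \tfrac{1}{4\|A\|}\bigr)$, which ensures $1 - 2t\|A\| \ge 1/2$ and balances the two terms appropriately; a short calculation shows $\tfrac{2t^2\|A\|_F^2}{1 - 2t\|A\|} \le t(2\|A\|_F \sqrt{x} + 2\|A\|x)$, so $\tfrac{2 t^2 \|A\|_F^2}{1 - 2 t \|A\|} - ts \le -t(2\|A\|_F \sqrt{x} + 2\|A\| x) \le -x$ by the choice of $t$. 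The main (only) obstacle is the bookkeeping to verify that the chosen $t$ gives the claimed constants; there are no conceptual difficulties, since this is essentially the Laurent--Massart bound with slightly loosened constants (the stated inequality has leading coefficients $4$, which gives more slack than the tight $2$).
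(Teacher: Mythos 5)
Your approach matches the paper's: diagonalize $A$ by rotation invariance, reduce to a weighted sum of independent $\chi^2_1$ variables, apply the Chernoff method with a sub-exponential MGF bound, and optimize the dual parameter. The only stylistic difference is that the paper (via \Claim{chiMgf}) uses the one-shot bound $\expect{e^{u(Z^2-1)}}\le e^{2u^2}$ for $|u|\le 1/4$, while you invoke the exact log-MGF $-u-\tfrac12\ln(1-2u)$ together with an elementary inequality; either works.

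However, the final chain of inequalities as you wrote it does not close. You correctly argue that $\frac{2t^2\|A\|_F^2}{1-2t\|A\|}\le t\bigl(2\|A\|_F\sqrt{x}+2\|A\|x\bigr)$, and hence the exponent is at most $-t\bigl(2\|A\|_F\sqrt{x}+2\|A\|x\bigr)$. You then assert this is $\le -x$ ``by the choice of $t$.'' That claim fails in the regime $t=\frac{1}{4\|A\|}$ (i.e.\ when $\|A\|_F\le 2\|A\|\sqrt{x}$). There, $-t\bigl(2\|A\|_F\sqrt{x}+2\|A\|x\bigr)=-\frac{\|A\|_F\sqrt{x}}{2\|A\|}-\frac{x}{2}$, and this is $\le -x$ only if $\|A\|_F/\|A\|\ge\sqrt{x}$, which is not guaranteed; for a rank-one $A$ (so $\|A\|_F=\|A\|$) and $x=100$, your chain gives only $e^{-55}$ rather than $e^{-100}$. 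The issue is that the bound $\frac{2t^2\|A\|_F^2}{1-2t\|A\|}\le t(2\|A\|_F\sqrt x+2\|A\|x)$ is too lossy at $t=1/(4\|A\|)$: the left side is actually $\le \tfrac{x}{4}$ (in fact $\le(\|A\|_F/2\|A\|)^2\le x$), but your replacement makes it as large as $\tfrac{ts}{2}$.

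The fix is a direct substitution rather than the two-step chain. With $t=\frac{1}{4\|A\|}$ and $s=4\|A\|_F\sqrt{x}+4\|A\|x$, the exponent is exactly
\[
\frac{2t^2\|A\|_F^2}{1-2t\|A\|}-ts
=\frac{\|A\|_F^2}{4\|A\|^2}-\frac{\|A\|_F\sqrt{x}}{\|A\|}-x
=\left(\frac{\|A\|_F}{2\|A\|}-\sqrt{x}\right)^2-2x \;\le\; -x,
\]
since $0\le\frac{\|A\|_F}{2\|A\|}\le\sqrt{x}$ in this regime. The case $t=\frac{\sqrt{x}}{2\|A\|_F}$ already works as you argued. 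So your choice of $t$ and overall strategy are sound; only the last displayed justification needs the replacement above.
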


\subsection{Differential Privacy}
In this section, we let $\cX, \cY$ denote sets and $n \in \bN$.
Two datasets $D = (X_1, \ldots, X_n), D' = (X_1, \ldots, X_n) \in \cX^n$ are said to be \emph{neighbouring} if $d_H(D, D') \leq 1$
where $d_H$ denotes Hamming distance, i.e.~$d_H(D, D') = |\{ i \in [d] \,:\, X_i \neq X_i' \}|$.
\begin{definition}[\protect{\cite{DMNS06, dwork2006our}}]
    A mechanism $\cM \colon \cX^n \to \cY$ is said to be $(\eps, \delta)$-differentially private if for all neighbouring datasets $D, D' \in \cX^n$
    and all $Y \subseteq \cY$, we have
    \[
        \prob{\cM(D) \in Y} \leq e^{\eps} \cdot \prob{\cM(D') \in Y} + \delta.
    \]
\end{definition}
In this paper, we use DP as shorthand for differentially private or differential privacy, depending on context.
\begin{remark}
    The astute reading may notice that the algorithms we present in this paper are \emph{not} invariant under permutation of the input.
    Given this, it may seem more desirable to define differential privacy where we view datasets as unordered sets instead of ordered tuples.
    However, one can show that if one has an $(\eps, \delta)$-DP algorithm for ordered tuples than one also has a $(\eps, \delta)$-DP for unordered sets
    by randomly permuting the input \cite[Proposition~D.6]{BGSUZ21}.
\end{remark}
\begin{definition}
    \DefinitionName{LogLikelihoodRatio}
    Let $\cD_1, \cD_2$ be two continuous distributions defined on $\bR^d$ and let $f_1, f_2$ be the respective density functions.
    We use $\privLoss{\cD_1}{\cD_2} \colon \bR^d \to \bR$ to denote the logarithm of the likelihood ratio, i.e.~for any $x \in \bR^d$,
    \begin{equation}
        \privLoss{\cD_1}{\cD_2}(x) \coloneqq \ln \frac{f_1(x)}{f_2(x)}.
    \end{equation}
\end{definition}

\begin{remark}
    Note that $\expects{Y \sim \cD_1}{\privLoss{\cD_1}{\cD_2}(Y)} = \KL{\cD_1}{\cD_2}$.
\end{remark}

\begin{definition}
    \DefinitionName{indistinguishable}
    Let $\cD_1, \cD_2$ be two distributions defined on a set $\cX$.
    Then $\cD_1, \cD_2$ are said to be $(\eps, \delta)$-indistinguishable if for all measurable $S \subseteq \cX$, we have
    \[
        \probs{Y \sim \cD_1}{Y \in S} \leq e^{\eps} \probs{Y \sim \cD_2}{Y \in S} + \delta
        \quad \text{and} \quad
        \probs{Y \sim \cD_2}{Y \in S} \leq e^{\eps} \probs{Y \sim \cD_1}{Y \in S} + \delta.
    \]
\end{definition}
In light of \Definition{indistinguishable}, a mechanism $\cM$ is $(\eps, \delta)$-DP if and only if for every neighbouring datasets
$D, D'$, the distributions given by $\cM(D)$ and $\cM(D')$ are $(\eps, \delta)$-indistinguishable.

In this paper, we make use of the following sufficient condition for indistinguishability;
a proof can be found in \Appendix{privLossDP}.
\begin{lemma}
    \LemmaName{privLossDP}
    Let $\cD_1, \cD_2$ be continuous distributions defined on $\bR^d$.
    If
    \[
        \probs{Y \sim \cD_1}{\privLoss{\cD_1}{\cD_2}(Y) \geq \eps} \leq \delta
        \quad \text{and} \quad
        \probs{Y \sim \cD_2}{\privLoss{\cD_2}{\cD_1}(Y) \geq \eps} \leq \delta
    \]
    then $\cD_1, \cD_2$ are $(\eps, \delta)$-indistinguishable.
\end{lemma}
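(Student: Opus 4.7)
The plan is to prove both inequalities in the definition of $(\eps,\delta)$-indistinguishability by a standard density-splitting argument. I will only detail the first inequality; the second follows by an identical argument with the roles of $\cD_1$ and $\cD_2$ swapped, using the second hypothesis instead of the first.

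Let $f_1, f_2$ denote the densities of $\cD_1, \cD_2$ respectively, and fix a measurable set $S \subseteq \bR^d$. The key observation is that the ``bad'' event $B \coloneqq \{y \in \bR^d \colon \privLoss{\cD_1}{\cD_2}(y) \geq \eps\}$ is precisely the set where the density ratio $f_1/f_2$ exceeds $e^\eps$, i.e.\ the set on which we cannot pointwise bound $f_1$ by $e^\eps f_2$. Outside of $B$ we have $f_1(y) \leq e^\eps f_2(y)$ by definition of $\privLoss{\cD_1}{\cD_2}$.

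I would then split the probability under $\cD_1$ as
\begin{align*}
    \probs{Y \sim \cD_1}{Y \in S}
    &= \probs{Y \sim \cD_1}{Y \in S \cap B} + \probs{Y \sim \cD_1}{Y \in S \setminus B} \\
    &\leq \probs{Y \sim \cD_1}{Y \in B} + \int_{S \setminus B} f_1(y)\, \dd y.
\end{align*}
The first term is at most $\delta$ by the first hypothesis of the lemma. For the second term, I use the pointwise bound $f_1(y) \leq e^\eps f_2(y)$ valid on $S \setminus B$ to get
\[
    \int_{S \setminus B} f_1(y)\, \dd y \leq e^\eps \int_{S \setminus B} f_2(y)\, \dd y \leq e^\eps \probs{Y \sim \cD_2}{Y \in S}.
\]
Combining these yields $\probs{Y \sim \cD_1}{Y \in S} \leq e^\eps \probs{Y \sim \cD_2}{Y \in S} + \delta$.

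There is no real obstacle here; the proof is essentially a two-line computation once one notices that the set $\{\privLoss{\cD_1}{\cD_2}(y) \geq \eps\}$ is exactly the region where the density ratio bound fails, so the hypothesis directly supplies the $\delta$ mass needed to absorb that region. The other direction of indistinguishability follows by interchanging the roles of $\cD_1$ and $\cD_2$ and applying the second hypothesis.
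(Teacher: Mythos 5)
Your proof is correct and follows essentially the same density-splitting argument as the paper: both define the same bad set (your $B$, the paper's $T$), split $\probs{Y\sim\cD_1}{Y\in S}$ accordingly, bound the bad part by $\delta$ via the hypothesis, and use the pointwise ratio bound on the complement. There is no meaningful difference in approach.
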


The following claim follows from the probability density function of Normal distributions.
\begin{claim}
    \ClaimName{privLossNormal}
    Let $\mu_1, \mu_2 \in \bR^d$ and $\Sigma_1, \Sigma_2 \succ 0$.
    Then
    \begin{equation}
        \EquationName{privLossNormal}
        \privLoss{\cN(\mu_1, \Sigma_1)}{\cN(\mu_2, \Sigma_2)}(x)
        =
        \ln\left(\frac
        {\det(\Sigma_1)^{-1/2} \exp\left( -\frac{1}{2}(x-\mu_1)\transpose \Sigma_1^{-1} (x-\mu_1) \right)}
        {\det(\Sigma_2)^{-1/2} \exp\left( -\frac{1}{2}(x-\mu_2)\transpose \Sigma_2^{-1} (x-\mu_2) \right)}
        \right).
    \end{equation}
\end{claim}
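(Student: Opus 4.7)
The plan is to observe that \Claim{privLossNormal} is a direct computation: just substitute the multivariate Gaussian density into \Definition{LogLikelihoodRatio} and simplify. There is no real obstacle here; the only thing to check is that the $(2\pi)^{-d/2}$ normalizing constants cancel between the numerator and denominator.

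Concretely, I would proceed as follows. First, recall that the density of $\cN(\mu, \Sigma)$ at a point $x \in \bR^d$ (for $\Sigma \succ 0$) is
\[
f_{\mu,\Sigma}(x) \;=\; (2\pi)^{-d/2}\, \det(\Sigma)^{-1/2}\, \exp\!\left(-\tfrac{1}{2}(x-\mu)\transpose \Sigma^{-1}(x-\mu)\right).
\]
Letting $\cD_i = \cN(\mu_i, \Sigma_i)$ with density $f_i \coloneqq f_{\mu_i, \Sigma_i}$, \Definition{LogLikelihoodRatio} gives
\[
\privLoss{\cD_1}{\cD_2}(x) \;=\; \ln\frac{f_1(x)}{f_2(x)}.
\]
Plugging in the two densities, the factor $(2\pi)^{-d/2}$ appears in both numerator and denominator and cancels, leaving exactly the right-hand side of \Equation{privLossNormal}. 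Writing out this cancellation explicitly (and optionally simplifying the logarithm into the more familiar quadratic-form expression $\tfrac{1}{2}\ln\det(\Sigma_2/\Sigma_1) + \tfrac{1}{2}[(x-\mu_2)\transpose \Sigma_2^{-1}(x-\mu_2) - (x-\mu_1)\transpose \Sigma_1^{-1}(x-\mu_1)]$) completes the proof.

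Since the statement is essentially a restatement of the definition for the Gaussian case, there is no conceptual difficulty; the only mild bookkeeping is confirming that $\Sigma_i \succ 0$ ensures $\det(\Sigma_i) > 0$ so that the density is well defined and strictly positive on all of $\bR^d$, which justifies taking the logarithm of the ratio pointwise.
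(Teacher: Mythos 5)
Your proof is correct and matches the paper's intent: the paper states \Claim{privLossNormal} without a written proof, noting only that it follows from the Normal density, and your direct substitution of the Gaussian density into \Definition{LogLikelihoodRatio} with cancellation of the $(2\pi)^{-d/2}$ factors is exactly that computation.
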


In this paper, 
we make use of the truncated Laplace distribution,
denoted by $\TLap(\Delta, \eps, \delta)$ whose probability density function is given by
\[
    f_{\TLap(\Delta, \eps, \delta)}(x) \coloneqq
    \begin{cases}
        Be^{-|x|/\lambda} & x \in [-A, A] \\
        0                 & x \notin [-A, A]
    \end{cases},
\]
where $\lambda = \frac{\Delta}{\eps}$, $A = \frac{\Delta}{\eps} \ln\left( 1 + \frac{e^{\eps} - 1}{2\delta} \right)$, $B = \frac{1}{2\lambda(1-e^{-A/\lambda})}$.

\begin{theorem}[\protect{\cite[Theorem 1]{GDGK20}}]
    \TheoremName{TLap}
    Suppose that $q \colon \cX \to \bR$ is a function with sensitivity $\Delta$.
    Then the mechanism $q(x) + Y$ where $Y \sim \TLap(\Delta, \eps, \delta)$ is $(\eps, \delta)$-DP.
\end{theorem}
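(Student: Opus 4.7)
The plan is to verify $(\eps, \delta)$-indistinguishability of the output distributions on neighbouring datasets $D, D'$ directly from the truncated Laplace density, then invoke the characterization of DP via indistinguishability. Fix neighbouring $D, D'$ and set $a = q(D)$, $b = q(D')$; by sensitivity $|a - b| \leq \Delta$, and by symmetry we may assume $a \geq b$. Let $f_1, f_2$ be the densities of $a + Y$ and $b + Y$ respectively, i.e.\ $f_i$ is $B e^{-|x - c_i|/\lambda}$ on $[c_i - A, c_i + A]$ and zero outside, with $c_1 = a$, $c_2 = b$.

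The key structural move is to decompose $\bR$ into the intersection $I = [a - A, b + A]$ of the two supports and the two ``outlying'' pieces $J_1 = (b + A, a + A]$ (only in $\supp f_1$) and $J_2 = [b - A, a - A)$ (only in $\supp f_2$). On $I$ I would apply the reverse triangle inequality: since $\big||x - b| - |x - a|\big| \leq |a - b| \leq \Delta$, the density ratios satisfy $f_1(x)/f_2(x), f_2(x)/f_1(x) \leq e^{\Delta/\lambda} = e^{\eps}$. This immediately gives the multiplicative $e^{\eps}$ comparison on $S \cap I$ for any measurable $S$, so only the outlying pieces can contribute to the additive $\delta$ slack.

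The remaining step is to show that each outlying piece carries mass at most $\delta$; the definition of $A$ is engineered for precisely this. I would compute
\[
\int_{J_1} f_1(x)\, dx \;=\; B\lambda\left(e^{-(b + A - a)/\lambda} - e^{-A/\lambda}\right) \;\leq\; B\lambda\, e^{-A/\lambda}(e^{\eps} - 1),
\]
using $a - b \leq \Delta$ together with $\Delta/\lambda = \eps$. Substituting the closed form $B = 1/(2\lambda(1 - e^{-A/\lambda}))$ and the choice $A = \lambda \ln\left(1 + (e^{\eps} - 1)/(2\delta)\right)$ reduces the right-hand side to exactly $\delta$. The piece $J_2$ is handled by the mirror-image argument. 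Combining the two bounds yields $(\eps, \delta)$-indistinguishability in both directions and hence $(\eps, \delta)$-DP.

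The only real obstacle is the algebraic verification in the last step that the specific choices of $A$ and $B$ in the definition of $\TLap(\Delta, \eps, \delta)$ balance exactly so that the outlying mass equals $\delta$; everything else is a one-line reverse triangle inequality on the intersection and a one-line elementary integral on the outlying intervals.
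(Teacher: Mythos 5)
The paper does not supply a proof of this theorem; it is cited verbatim from \cite{GDGK20}. So there is no in-paper argument to compare against. Your reconstruction is correct and is essentially the standard argument: split $\bR$ into the intersection of the two supports (where the usual Laplace pointwise-ratio bound $e^{\Delta/\lambda} = e^\eps$ holds by the reverse triangle inequality) and the two tail pieces lying in exactly one support, then verify that the normalization constant $B$ and the truncation radius $A$ are tuned so that each tail carries mass exactly $\delta$ in the worst case $|a - b| = \Delta$. I checked the algebra: with $e^{-A/\lambda} = \frac{2\delta}{2\delta + e^\eps - 1}$ one gets $B\lambda\,e^{-A/\lambda}(e^\eps - 1) = \delta$, as you claim.

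One small point worth making explicit: your computation of $\int_{J_1} f_1$ as $B\lambda\bigl(e^{-(b+A-a)/\lambda} - e^{-A/\lambda}\bigr)$ assumes $b + A \geq a$, i.e.\ that the shift $a-b$ does not exceed the truncation radius $A$. Since $a - b \leq \Delta$, this holds precisely when $A \geq \Delta$, which is equivalent to $\delta \leq 1/2$. That covers all regimes of interest in this paper, but a fully general proof would either flag this restriction or note that for $\delta > 1/2$ the theorem is trivially satisfied (any mechanism is $(\eps, \delta)$-DP for $\delta \geq 1$, and one can check the remaining range $\delta \in (1/2, 1)$ by a cruder bound on the tail mass). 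Aside from this, the proof is complete and correct.
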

\section{A framework for private statistical estimation}
\SectionName{reduction}

In this section, we describe an efficient and fairly general reduction from private statistical estimation to the non-private counterpart.
Our reduction is inspired by the Propose-Test-Release framework \cite{DworkLei09} and the Subsample-And-Aggregate framework \cite{NissimRS07}.
At a very high level, we partition the data set into multiple disjoint subsets and run the non-private estimation method on each subset. We then privately check whether most of these solutions are ``close'' to each-other. If this is not the case then the method fails. Otherwise, we compute a weighted average of the ``concentrated'' solutions.
Roughly speaking, solutions which are ``close'' to many other candidate solutions (say $0.7$ fraction of them) are given weight $1$ while solutions that are ``close'' to fewer candidate solutions (say less than $0.6$ fraction of them) are given weight $0$.
To help with sensitivity, we then linearly interpolate the weight for solutions that are in the middle.
Finally, we output a ``noisy/masked'' version of this average. In order for the reduction to work in rather general scenarios (such as learning the high dimensional Gaussians with respect the TV distance), we first define the notion of convex semimetric spaces.
This helps to formalize the notion of ``closeness'' between solutions as well as the weighted average over a number of solutions.

\subsection{Convex semimetric spaces and masking mechanisms}
In this paper, we work with the following notion of a convex semimetric space.
The key property to keep in mind is that for semimetric spaces, we only have an \emph{approximate} triangle inequality,
as long as the points are significantly close together.
\begin{definition}[Convex semimetric space]
\DefinitionName{semimetric}
Let $\cY$ be a convex set and let $\dist \colon \cY \times \cY \to \bR_{\geq 0}$.
We say $(\cY, \dist)$ is a convex semimetric space if there exists absolute constants $t\geq 1$, $\phi\geq 0$, and $r>0$ such that for every $k\in \bN$ and every $Y, Y_1, Y_2, \ldots, Y_k \in \cY$, the following conditions hold.
\begin{enumerate}
    \item $\dist(Y, Y) = 0$ and $\dist(Y_1, Y_2)\geq 0$.
    \item \textbf{Symmetry.} $\dist(Y_1, Y_2)=\dist(Y_2, Y_1)$.
    \item \textbf{$t$-approximate $r$-restricted triangle inequality.} If $\dist(Y_1, Y_2), \dist(Y_2, Y_3) \leq r$ then $\dist(Y_1, Y_3) \leq t \cdot (\dist(Y_1, Y_2) + \dist(Y_2, Y_3))$.
    \item \textbf{Convexity.} For all $\alpha \in \Delta_k$,
    $\dist\left(\sum \alpha_iY_i, Y\right) \leq  \sum \alpha_i \dist(Y_i,Y)$.
    \item \textbf{$\phi$-Locality.}  For all $\alpha,\alpha' \in \Delta_k$, ${\dist\left(\sum_i {\alpha_i}Y_i, \sum_i {\alpha_i'Y_i}\right) \leq \sum_i |\alpha_i-\alpha_i'| (\phi+\max_{i,j}\left(\dist(Y_i, Y_j)\right)}$

\end{enumerate}
where $\Delta_k$ denotes the $k$-dimensional probability simplex. When $r$ is unspecified we take it to mean that $r = \infty$ and refer to it as a $t$-approximate triangle inequality.
\end{definition}
The following technical lemma (whose proof appears in \Appendix{reduction}) is helpful for learning covariance matrices.
\begin{lemma}
    \LemmaName{semimetric_cov}
    Let $\cS^d$ be the set of all $d \times d$ positive definite matrices.
    For $A, B \in \cS^d$ let $\dist(A, B) = \max\{\| A^{-1/2} B A^{-1/2} - I \|, \|B^{-1/2} A B^{-1/2} - I\|\}$.
    Then $(\cS^d, \dist)$ is a convex semimetric which satisfies a $(3/2)$-approximate $1$-restricted triangle inequality and $1$-locality.
\end{lemma}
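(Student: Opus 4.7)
The plan is to verify each of the five conditions of \Definition{semimetric} in turn. Conditions 1 (non-negativity and $\dist(A,A)=0$) and 2 (symmetry) are immediate from the definition. The central technical tool is the following characterization, which I will use repeatedly: $\dist(A, B) \leq c$ if and only if every eigenvalue of $A^{-1/2} B A^{-1/2}$ lies in $[\frac{1}{1+c}, 1+c]$. This follows by intersecting $\|A^{-1/2} B A^{-1/2} - I\| \leq c$ (giving eigenvalues in $[1-c, 1+c]$) with $\|B^{-1/2} A B^{-1/2} - I\| \leq c$ (which constrains the reciprocal eigenvalues to $[1-c, 1+c]$), together with the elementary inequalities $\frac{1}{1+c} \geq 1-c$ and $1+c \leq \frac{1}{1-c}$. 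I expect the main obstacle to be the triangle inequality, since this is where both the factor $3/2$ arises and where the restriction to distances at most $1$ becomes essential.

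For the $(3/2)$-approximate $1$-restricted triangle inequality, suppose $a := \dist(A, B) \leq 1$ and $b := \dist(B, C) \leq 1$. I decompose
\[
    A^{-1/2}(C - A) A^{-1/2} = A^{-1/2}(B - A) A^{-1/2} + A^{-1/2}(C - B) A^{-1/2}.
\]
The first summand has operator norm at most $a$. For the second, I use the change-of-basis identity $A^{-1/2}(C - B) A^{-1/2} = P^\top (B^{-1/2} C B^{-1/2} - I) P$ where $P = B^{1/2} A^{-1/2}$; since $\|P\|^2 = \|P^\top P\| = \|A^{-1/2} B A^{-1/2}\| \leq 1 + a$, this term has operator norm at most $(1+a)\,b$. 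Therefore $\|A^{-1/2} C A^{-1/2} - I\| \leq a + b + ab$, and the elementary inequality $ab \leq (a+b)/2$ for $a, b \in [0, 1]$ gives the desired bound $\tfrac{3}{2}(a+b)$. The symmetric bound on $\|C^{-1/2} A C^{-1/2} - I\|$ follows by swapping $A$ and $C$ throughout.

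For convexity, set $S = \sum_i \alpha_i A_i$, $c_i = \dist(A_i, A)$, and $c = \sum_i \alpha_i c_i$. By the characterization, each $A^{-1/2} A_i A^{-1/2}$ has eigenvalues in $[\tfrac{1}{1+c_i}, 1+c_i]$. For any unit vector $v$,
\[
    v^\top A^{-1/2} S A^{-1/2} v = \sum_i \alpha_i\, v^\top A^{-1/2} A_i A^{-1/2} v \leq \sum_i \alpha_i(1 + c_i) = 1 + c,
\]
while Jensen's inequality for the convex function $1/x$ gives
\[
    v^\top A^{-1/2} S A^{-1/2} v \geq \sum_i \alpha_i\,\tfrac{1}{1+c_i} \geq \tfrac{1}{\sum_i \alpha_i(1+c_i)} = \tfrac{1}{1+c}.
\]
So the eigenvalues of $A^{-1/2} S A^{-1/2}$ lie in $[\tfrac{1}{1+c}, 1+c]$, and applying the characterization in the other direction yields $\dist(S, A) \leq c$.

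For $1$-locality, set $M = \max_{i,j} \dist(A_i, A_j)$, let $S, S'$ denote the two weighted averages, and write $\delta_i = \alpha_i' - \alpha_i$. The characterization $\dist(A_j, A_i) \leq M$ yields $A_j \succeq \tfrac{1}{1+M} A_i$ for every $j$, so summing with weights $\alpha_j$ gives $S \succeq \tfrac{1}{1+M} A_i$, equivalently $\|S^{-1/2} A_i S^{-1/2}\| \leq 1 + M$. Since $S^{-1/2}(S' - S) S^{-1/2} = \sum_i \delta_i\, S^{-1/2} A_i S^{-1/2}$, the operator-norm triangle inequality gives $\|S^{-1/2} S' S^{-1/2} - I\| \leq (1 + M)\,\|\alpha - \alpha'\|_1$, and the bound with $S$ and $S'$ swapped holds by the same argument, so $\dist(S, S') \leq (1 + M)\,\|\alpha - \alpha'\|_1$, as desired.
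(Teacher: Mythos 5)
Your proof is correct, and it reaches the same bounds as the paper by a noticeably tidier route. The organizing device — that $\dist(A,B)\le c$ if and only if the spectrum of $A^{-1/2}BA^{-1/2}$ lies in $[\tfrac{1}{1+c},1+c]$ — is sound (the paper never isolates it, but it underlies everything the paper does), and it lets you treat the upper and lower Loewner bounds symmetrically rather than as separate inequalities.

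The main genuine difference is in the convexity proof. The paper fixes $k=2$, proves four separate Loewner inequalities, and for the hard direction introduces a constant $K$ and a claim (\Claim{spectralNormConvexityIneq}) which it verifies by showing a certain function is concave. If you unwind that claim it says precisely $\alpha\cdot\tfrac{1}{1+\rho_a}+(1-\alpha)\cdot\tfrac{1}{1+\rho_b}\ge\tfrac{1}{1+\alpha\rho_a+(1-\alpha)\rho_b}$, which is Jensen's inequality for $x\mapsto 1/x$. Your eigenvalue characterization plus Jensen gives this directly, handles arbitrary $k$ in one stroke (no induction), and handles the upper spectral bound in the same breath. For the triangle inequality, the paper runs a multiplicative chain $A\preceq(1+\rho_{ab})B\preceq(1+\rho_{ab})(1+\rho_{bc})C$ and its lower analogue; you instead additively split $A^{-1/2}(C-A)A^{-1/2}$ into two pieces and bound the second via the change of basis $P=B^{1/2}A^{-1/2}$ with $\|P\|^2=\|A^{-1/2}BA^{-1/2}\|\le 1+a$. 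Both routes land on the same $a+b+ab\le\tfrac{3}{2}(a+b)$ bound for $a,b\le 1$, so this is a reorganization rather than a new idea. The locality argument (every $A_i\preceq(1+M)S$ by averaging the pairwise bound, then the operator-norm triangle inequality on $\sum_i\delta_i\,S^{-1/2}A_iS^{-1/2}$) is essentially identical to the paper's.

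One small stylistic note: when you invoke "$ab\le(a+b)/2$ for $a,b\in[0,1]$", it is worth observing that this is just $a(1-b)+b(1-a)\ge 0$; the paper instead uses $ab\le a^2/2+b^2/2$ followed by $a^2\le a$. Either is fine.
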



\begin{definition}[Masking mechanism]
\DefinitionName{masking}
Let $(\cY, \dist)$ be a convex semimetric space.
A randomized function $\cB \colon \cY \to \cY$ is a $(\gamma, \eps, \delta)$-masking mechanism for $(\cY, \dist)$
if for all $Y_1, Y_2 \in \cY$ satisfying $\dist(Y_1, Y_2) \leq \gamma$, we have that $\cB(Y_1), \cB(Y_2)$ are $(\eps, \delta)$-indistinguishable.
\end{definition}

\begin{remark}
It is straightforward to show that for $p \geq 1$,
the vector space $\bR^d$ equipped with $\dist(Y_1,Y_2)=\|Y_1-Y_2\|_p$ forms a convex semimetric space (see \Proposition{semimetric_normed}).
In particular, the Laplace mechanism~\cite{DMNS06} is a masking mechanism for $(\bR^d,\ell_1)$ and
the Gaussian mechanism is a masking mechanism for $(\bR^d,\ell_2)$.
In \Section{covariance}, we design a masking mechanism for the set of covariance matrices with respect to
$\dist(A, B) = \max\{\| A^{-1/2} B A^{-1/2} - I \|, \|B^{-1/2} A B^{-1/2} - I\|\}$.
\end{remark}

\begin{definition}[Concentration of a masking mechanism]
Let $(\cY, \dist)$ be a convex semimetric space and  $\cB$ be $(\gamma, \varepsilon, \delta)$-masking mechanism for it.
We say $\cB$ is $(\alpha, \beta)$ concentrated if for all $Y\in \cY$, $\prob{\dist(\cB(Y), Y)> \alpha}\leq \beta$. 
\end{definition}

\subsection{The Private Populous Mean Estimator}
In this section we outline an efficient and fairly general reduction from private statistical estimation to the non-private counterpart. We use $\cA \colon \cX^* \to \cY$ to denote a generic (efficient) non-private mechanism which takes a collection of samples
as inputs and outputs an element in $\cY$. We assume that $(\cY, \dist)$ is a convex semimetric space, and we have access to (an efficient) masking mechanism for $(\cY, \dist)$. Intuitively, the error of the estimator is measured by $\dist(\hat{Y},Y^*)$, where $\hat{Y}$ is our estimation while $Y^*$ is the ground truth.

At a high-level, the algorithm works as follows.
We split the dataset into $k$ disjoint groups and run the non-private algorithm $\cA$ on each group $i$ to obtain an output $Y_i$.
We then consider a ball of radius $r / t$ around $Y_i$ with respect to $\dist$\footnote{$r$ and $t$ are chosen in a way that $(\cY, \dist)$ satisfies $r$-restricted $t$-approximate triangle inequality.} and compute $q_i$, the fraction of $Y_j$'s that are inside this ball. We then look at the average value of the $q_i$, i.e.,~$Q = \frac{1}{k} \sum_{i \in [k]} q_i$. Intuitively, $Q$ efficiently measures the ``stability'' of $\cA$ across the data set partitions.
Namely, $Q$ is large when most of the $Y_i$'s are concentrated in a small ``popular region'', and small when $Y_i$'s are scattered around $\cY$.
If $Q$ is small, then $\cA$ is not sufficiently stable and we can announce failure privately using a noisy threshold.
If $Q$ is large, then we take a weighted average of $Y_i$'s that are in the popular region and then output a ``masked'' version of this average. The following lemma establishes the privacy and accuracy of \Algorithm{reduction} and its proof can be found in \Appendix{reduction_proof}.

\begin{algorithm}
    \caption{Private to non-private reduction via populous mean estimation}
    \AlgorithmName{reduction}
    \textbf{Input:} Dataset $D = (X_1, \ldots, X_m)$; parameters $\eps, \delta, r, \phi > 0$; $k \leq m$; $t \geq 2$.
    \begin{algorithmic}[1]
        \State Let $s \gets \lfloor m / k \rfloor$.
        \State For $i \in [k]$, let $Y_i \gets \cA(\{X_\ell\}_{\ell = (i-1)s+1}^{is})$. \LineName{ReductionNonPrivate}
        \State For $i \in [k]$, let $q_i \gets \frac{1}{k} |\{ j \in [k] \,:\, \dist(Y_i, Y_j) \leq r / t \}|$.
        \State Let $Q \gets \frac{1}{k} \sum_{i \in [k]} q_i$. \LineName{ReductionQ}
        \State Let $Z \sim \TLap(2/k, \eps, \delta)$.
        \State Let $\what{Q} \gets Q + Z$.
        \State If $\widehat{Q} < 0.8 + \frac{2}{k\eps} \ln\left( 1 + \frac{e^{\eps} - 1}{2\delta} \right)$, fail and return $\perp$. \LineName{ReductionTLap}
        \State For $i \in [k]$, let $w_i=\min\left(1, 10\max(0,q_i-0.6)\right)$ \LineName{WeightDef}
        \State $\mu=(\sum_{i=1}^{k}w_i Y_i)/(\sum_{i=1}^{k}w_i)$ \LineName{WeightedAverage}
        \State Return $\cB(\mu)$.
    \end{algorithmic}
\end{algorithm}

\begin{lemma}
    \LemmaName{new_reduction}
    Let $(\cY, \dist)$ be a convex semimetric space that satisfies $r$-restricted $t$-approximate triangle inequality and $\phi$-locality
    for some $r>0, t\geq 1, \phi\geq0$.
    Let $\cB$ be a $(400(r+\phi)/k,\varepsilon, \delta)$-masking mechanism with respect to $(\cY, \dist)$.
    Then
    \begin{enumerate}
        \item {\bf Privacy.} For $k\geq 140$, \Algorithm{reduction} is $(2\eps, 4e^{\eps} \delta)$-DP.
        \item {\bf Utility.} Assume $k \geq \frac{20}{\eps} \ln\left( 1 + \frac{e^{\eps} - 1}{2\delta} \right)$. For $\alpha_1\leq r/t$ assume that there exist $Y^*\in \cY$ such that $\dist(Y_i, Y^*) \leq \alpha_1/t$ for all $i\in [k]$. 
        If $\cB$ is ($\alpha_2/t, \beta$)-concentrated then the output $\hat{Y}$ of \Algorithm{reduction} satisfies
        \[
        \prob{\dist(\hat{Y}, Y^*) > \alpha_1 + \alpha_2}\leq\beta
        \]
    \end{enumerate}
\end{lemma}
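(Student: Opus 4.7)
I would decompose the algorithm as the sequential composition of two releases: the noisy threshold test $\widehat{Q}$ (with its pass/fail post-processing) and, on the ``pass'' event, the masked average $\cB(\mu)$. First I would bound the global sensitivity of $Q$: on neighbouring datasets only one group (say $i^*$) is affected, so $q_{i^*}$ may change by at most $1$, while each $q_j$ for $j\ne i^*$ changes by at most $1/k$ since only the indicator $\ind[\dist(Y_j,Y_{i^*})\le r/t]$ can flip. Averaging yields $|Q-Q'|\le 2/k$, so \Theorem{TLap} makes $\widehat{Q}$ (and the induced decision) $(\eps,\delta)$-DP.

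Since $\TLap(2/k,\eps,\delta)$ is supported in $[-A,A]$ with $A=\frac{2}{k\eps}\ln(1+(e^\eps-1)/(2\delta))$, passing the test forces $Q\ge 0.8$. From $Q\ge 0.8$ I would extract two structural facts: \emph{(a)} the pigeonhole estimate using $\sum q_i\ge 0.8k$ and $q_i\le 1$ gives $|\{i:q_i\ge 0.7\}|\ge k/3$, so $W=\sum_i w_i\ge k/3$; and \emph{(b)} any $Y_i,Y_j$ with $q_i,q_j>0.6$ have neighbourhoods $N_i=\{\ell:\dist(Y_i,Y_\ell)\le r/t\}$ and $N_j$ of combined size exceeding $k$, giving a common $\ell$ and hence $\dist(Y_i,Y_j)\le 2r$ by the $r$-restricted $t$-approximate triangle inequality. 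Because the clipping $q\mapsto w$ is $10$-Lipschitz, $|w_j-w'_j|\le 10/k$ for $j\ne i^*$, and combined with $W,W'\ge k/3$ the normalized weights $\alpha_j=w_j/W$ shift by $O(1/k^2)$ for $j\ne i^*$ and $O(1/k)$ for $j=i^*$. Introducing the hybrid $\nu=\sum_{j\ne i^*}\alpha_j Y_j+\alpha_{i^*}Y'_{i^*}$, I would bound $\dist(\mu,\nu)$ via convexity and the diameter bound $2r$ (the swap $Y_{i^*}\!\mapsto\! Y'_{i^*}$ contributes $O(\alpha_{i^*}\cdot r)=O(r/k)$) and bound $\dist(\nu,\mu')$ via $\phi$-locality (total weight shift $O(1/k)$ times $\phi+2r$). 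Combined via the approximate triangle inequality, $\dist(\mu,\mu')\le 400(r+\phi)/k$ once $k\ge 140$, so the masking hypothesis gives $(\eps,\delta)$-indistinguishability of $\cB(\mu)$ and $\cB(\mu')$. When instead $Q_D\ge 0.8$ but $Q_{D'}<0.8$ the bounded support of the truncated Laplace forces $\prob{\text{test passes on }D'}=0$, so the $(\eps,\delta)$-DP of the test already bounds $\prob{\text{test passes on }D}\le\delta$ and this case contributes only $\delta$-mass. Summing the contributions of both cases through standard $(\eps,\delta)$-DP algebra yields the claimed $(2\eps,4e^\eps\delta)$-DP.

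\textbf{Utility.} Under $\dist(Y_i,Y^*)\le\alpha_1/t$ for all $i$, the $r$-restricted triangle inequality (applicable since $\alpha_1/t\le r$) gives $\dist(Y_i,Y_j)\le 2\alpha_1\le r/t$ up to the stated constants, so every $q_i=1$ and $Q=1$. The assumption $k\ge(20/\eps)\ln(1+(e^\eps-1)/(2\delta))$ makes the threshold $\le 0.9$, so the test passes deterministically. All weights equal $1$, giving $\mu=\frac{1}{k}\sum_i Y_i$; convexity yields $\dist(\mu,Y^*)\le\alpha_1/t$, the $(\alpha_2/t,\beta)$-concentration of $\cB$ gives $\dist(\cB(\mu),\mu)\le\alpha_2/t$ with probability $\ge 1-\beta$, and a final invocation of the approximate triangle inequality (both summands are $\le r$) yields $\dist(\hat Y,Y^*)\le t(\alpha_1/t+\alpha_2/t)=\alpha_1+\alpha_2$.

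\textbf{Main obstacle.} The hard part will be the perturbation bound $\dist(\mu,\mu')\le 400(r+\phi)/k$: one must simultaneously control the $10/k$ Lipschitz changes in each $w_j$, the $O(1)$ change in the normalizer $W$, and the arbitrary replacement $Y_{i^*}\!\to\!Y'_{i^*}$, while invoking $\phi$-locality, convexity, and the $2r$-diameter bound on the popular region in precisely the right combination. The mass lower bound $W\ge k/3$ and the diameter $\le 2r$ are exactly what make the final estimate scale as $O((r+\phi)/k)$ rather than as $O(r+\phi)$.
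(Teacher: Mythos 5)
Your privacy skeleton and your utility argument are essentially the paper's: sensitivity $2/k$ for $Q$ and the truncated-Laplace test, the observation that passing the test forces $Q \geq 0.8$, hence $W \geq k/3$, the $2r$-diameter of the popular region via a shared neighbour and the restricted triangle inequality, the $10/k$-Lipschitz weights, locality to control the weight shift, the masking hypothesis for the released average, and the standard $(\eps,\delta)$ algebra combining the test and the release; the utility part (all $q_i = 1$, test passes deterministically, convexity gives $\dist(\mu, Y^*) \leq \alpha_1/t$, concentration gives $\dist(\cB(\mu),\mu)\leq \alpha_2/t$, triangle inequality to finish) is identical to the paper's.

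The one place you genuinely deviate is how you bound $\dist(\mu,\mu')$, and as written that step has two concrete problems which the paper's decomposition is specifically engineered to avoid. (i) Your swap step charges $\alpha_{i^*}\,\dist(Y_{i^*},Y'_{i^*})$ and implicitly bounds $\dist(Y_{i^*},Y'_{i^*})\leq 2r$; but that cross-dataset bound (the paper's \Claim{PrivacyTriangle}) needs \emph{both} $q_{i^*}>0.6$ on $D$ and $q'_{i^*}>0.6$ on $D'$, and moreover needs a common neighbour among the \emph{unchanged} indices, i.e.\ intersection of the two neighbourhoods of size at least $2$, not merely nonempty (your fact (b) is stated only within one dataset). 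If $w_{i^*}>0$ but $w'_{i^*}=0$, then $Y'_{i^*}$ may be arbitrarily far from the popular region, yet your hybrid $\nu$ still assigns it weight $\alpha_{i^*}>0$, so both the swap bound and the subsequent locality step (whose diameter term now includes $Y'_{i^*}$) break. The paper avoids this by expanding $\dist(\mu,\mu')$ using convexity in each argument separately, so the dangerous cross-distance $\dist(Y_1,Y_1')$ appears only with coefficient $w_1 w_1'/(W W')$, which vanishes exactly when the bound is unavailable; the remaining weight-shift term is restricted to the index set $S$ of points with positive weight in at least one run, where the $2r$-diameter bound does hold. (ii) Gluing $\dist(\mu,\nu)$ and $\dist(\nu,\mu')$ with the $t$-approximate $r$-restricted triangle inequality costs an extra factor $t$ (absent from the target $400(r+\phi)/k$, and $t$ is only assumed $\geq 1$, so already $t=2$ threatens the constant) and requires each piece to be at most $r$, which can fail when $\phi \gg rk$; the paper's convexity-plus-locality route never invokes the triangle inequality between the two averages and pays neither price. (Also, only $W\geq k/3$ follows from $Q\geq 0.8$; for $W'$ you get $W'\geq k/3-11$ via $|W-W'|\leq 11$, not $k/3$.) Your outline is repairable—handle the index $i^*$ by cases, or redistribute its weight onto popular points when $w'_{i^*}=0$, and replace the final triangle-inequality step by a direct convexity expansion—but as stated these two steps would fail.
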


\begin{remark}
    For privacy, \Lemma{new_reduction} requires that $k$ is larger than some constant. If this is not the case, we can always add ``dummy'' points into the dataset.
\end{remark}

\begin{remark}
The computational complexity of \Algorithm{reduction} is dominated by these computations: \begin{itemize}
    \item $k$ queries to the oracle $\cA$ (with inputs of size $m/k$).
    \item $k^2$ computations of the $\dist(,.,)$ function.
    \item One query to the masking mechanism $\cB$.
\end{itemize}
In particular, if $\cA$, $\cB$, and the algorithm for computing $\dist$ all run in polynomial time then \Algorithm{reduction} runs in polynomial time.
\end{remark}
\section{Learning the subspace}
As a simple application of our technique, we consider the problem of privately learning a subspace.
More specifically, if the samples are drawn from $\cN(0, \Sigma)$ then the goal is to learn the range of $\Sigma$.
Recently, Singhal and Steinke \cite{SS21} also studied the problem of privately learning a subspace albeit under slightly different assumption than ours.
For learning a subspace \emph{exactly}, they gave an algorithm that was sample optimal but not computationally efficient.
For learning a subspace \emph{approximately}, they provided an algorithm that was sample efficient and computationally efficient.
The latter is used (with some modifications) in \cite{KMSTU21} to efficiently learn the covariance matrix of a Gaussian distribution.
In this section, we give a computationally-efficient algorithm for learning a subspace \emph{exactly} with a slightly different sample complexity
(albeit with different assumptions).
Formally, we prove the following theorem.

\begin{theorem}
    \TheoremName{subspace}
    There exists a mechanism $\cM$ satisfying the following.
    For all $m, d \in \bN$, $\eps, \delta \in (0,1)$, and dataset $D = (X_1, \ldots, X_m)$:
    \begin{enumerate}[noitemsep, topsep=0pt]
        \item Given $\eps, \delta, D$ as input, $\cM$ is $(2\eps, 4e^{\eps}\delta)$-DP and runs in time $O(md)$.
        \item If $\Sigma \succeq 0$, $X_1, \ldots, X_m \stackrel{\text{i.i.d.}}{\sim} \cN(0, \Sigma)$, and $m \geq \max\{140, \lceil \frac{20}{\eps} \ln(1 + \frac{e^{\eps} - 1}{2\delta}) \rceil \} \cdot d = O(d\log(1/\delta) / \eps)$
        then, almost surely, $\cM$ outputs the orthogonal projection matrix onto the range of $\Sigma$.
    \end{enumerate}
\end{theorem}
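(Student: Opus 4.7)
The plan is to apply the framework of \Section{reduction}, i.e.\ \Algorithm{reduction} together with \Lemma{new_reduction}. Take the non-private oracle $\cA$ to be the map sending a collection of samples to the orthogonal projection matrix onto their linear span, which is computable in polynomial time via SVD or QR. Partition $D$ into $k = \max\{140, \lceil 20\ln(1+(e^\eps-1)/(2\delta))/\eps\rceil\}$ disjoint groups; under the stated lower bound on $m$, each group contains $s = \lfloor m/k\rfloor \geq d$ samples, and the sample complexity $m = O(d\ln(1/\delta)/\eps)$ matches the claim.

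The essential observation is that when $X_1, \ldots, X_s \sim \cN(0, \Sigma)$ i.i.d.\ with $s \geq d$, almost surely the samples span $\mathrm{range}(\Sigma)$ (the Gaussian admits a density on its support). Hence almost surely $Y_1 = \cdots = Y_k = P^*$, where $P^*$ is the orthogonal projection onto $\mathrm{range}(\Sigma)$. In this ``good case,'' every pairwise $\dist(Y_i, Y_j) = 0$, so $q_i = Q = 1$; the lower bound on $k$ forces the truncated-Laplace cutoff $A \leq 1/10$, which in turn forces $\widehat{Q} = 1 + Z \geq 0.8 + A$ deterministically, so the threshold test on \Line{ReductionTLap} passes with probability $1$; every weight $w_i$ equals $1$; and the weighted average $\mu$ equals $P^*$ exactly.

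For the semimetric take $\cY$ to be the space of symmetric $d \times d$ matrices with $\dist(A, B) = \|A - B\|_F$; since Frobenius is a norm this is a convex semimetric with $t = 1$, $\phi = 0$, and any positive restricted-triangle radius $r$ (which we pick small). For the masking mechanism $\cB$ we propose a matrix analogue of the truncated Laplace mechanism of \Theorem{TLap}: add a symmetric random noise matrix $E$ with bounded operator norm strictly below $1/2$, and postprocess by projecting onto the span of eigenvectors of $\mu + E$ whose eigenvalues are $\geq 1/2$. Since $P^*$ has eigenvalues in $\{0, 1\}$, the condition $\|E\| < 1/2$ leaves this thresholding invariant, so $\cB(P^*) = P^*$ with probability $1$. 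Choosing $r$ small enough that the noise scale required for $(\eps, \delta)$-indistinguishability on $\gamma = 400r/k$-close inputs is still consistent with the $\|E\| < 1/2$ truncation budget makes $\cB$ a valid $(\gamma, \eps, \delta)$-masking mechanism, and \Lemma{new_reduction} then delivers the claimed $(2\eps, 4e^\eps\delta)$-DP together with almost-sure recovery of $P^*$. The main technical hurdle is precisely this simultaneous satisfaction of probability-one exact recovery on projection inputs and $(\eps,\delta)$-indistinguishability on arbitrary close inputs, which forces a careful balance between the scale of the noise (dictated by privacy) and the truncation radius (needed to stay below the spectral gap at $1/2$).
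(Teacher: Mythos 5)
Your description of the non-private oracle $\cA$ and of the ``good case'' (with $s \geq d$ per block, all $Y_i = P^*$ almost surely, so $Q = 1$ and the weighted average is exactly $P^*$) is correct and matches the paper. The gap is in the masking mechanism. You want $\cB$ to satisfy two things simultaneously with respect to $\dist(A,B) = \|A-B\|_F$: (i) $\cB(P) = P$ with probability one whenever $P$ is an orthogonal projection (needed for the almost-sure utility), and (ii) $\cB(Y_1), \cB(Y_2)$ are $(\eps,\delta)$-indistinguishable whenever $\|Y_1 - Y_2\|_F \leq \gamma$ for the $\gamma = 400(r+\phi)/k$ demanded by \Lemma{new_reduction}. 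These are incompatible for the Frobenius metric, no matter how small you take $r$ (and hence $\gamma$): there exist distinct orthogonal projections $P \neq P'$ with $\|P - P'\|_F$ arbitrarily small (rotate a one-dimensional range by a tiny angle). For such a pair, your thresholding $\cB$ returns $P$ and $P'$ deterministically, so $\Pr[\cB(P) = P] = 1$ while $\Pr[\cB(P') = P] = 0$, and these two point masses are maximally distinguishable. Shrinking $r$ does not remove nearby distinct projections, so the ``careful balance'' you flag as the main hurdle is in fact an obstruction: the eigenvalue-thresholding $\cB$ is not a $(\gamma,\eps,\delta)$-masking mechanism for any $\gamma > 0$ under the Frobenius semimetric.

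The paper avoids this by changing the semimetric rather than the noise. It defines $\dist(A,B) = 0$ if $A = B$ are orthogonal projection matrices and $\dist(A,B) = \infty$ otherwise. Under this $\dist$, two matrices are $\gamma$-close only when they are \emph{equal} projections, so the identity map is trivially a valid $(\gamma,\eps,\delta)$-masking mechanism (there are simply no nontrivial close pairs to distinguish), and it is $(0,0)$-concentrated, giving the almost-sure utility immediately. The choices of $r, t, \phi$ become moot, and \Lemma{new_reduction} applies directly. All of the privacy protection for datasets on which the $Y_i$'s disagree comes from the noisy-threshold test inside \Algorithm{reduction} (Case~1 in the proof of \Lemma{new_reduction}), not from any noising of the aggregated output. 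If you keep your oracle $\cA$ and simply replace your Frobenius semimetric and thresholded-noise $\cB$ with the paper's degenerate semimetric and the identity map, your argument goes through.
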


The mechanism that achieves \Theorem{subspace} appears in \Algorithm{subspace} and is designed using the template given in \Algorithm{reduction}.
In this section, we define
\[
    \dist(A, B) =
    \begin{cases}
        0 & \text{if $A = B$ are orthogonal projection matrices} \\
        \infty & \text{otherwise}
    \end{cases},
\]
where $A, B$ are two matrices in $\bR^{d \times d}$.
Note that this makes the choice of $r, t, \phi$ moot since the distance between any two matrices is either $0$ or $\infty$.
Finally, recall that orthogonal projection matrices are unique.
In other words, if $P_1$ and $P_2$ are both orthogonal projection matrices onto a subspace $S$ then $P_1 = P_2$.
\begin{algorithm}
    \caption{A private algorithm for learning the subspace.}
    \AlgorithmName{subspace}
    \textbf{Input:} Dataset $D = (X_1, \ldots, X_m)$; parameters $\eps, \delta \in (0,1)$.
    \begin{algorithmic}[1]
        \LineComment{{\footnotesize Parameter settings.}}
        \State $r \gets 1$, $t \gets 1$, $\phi \gets 1$. \Comment{{\footnotesize Not used in \Algorithm{reduction}.}}
        \State $k \gets \max\{30, \frac{20}{\eps} \ln\left( 1 + \frac{e^{\eps} - 1}{2\delta} \right)\}$
        \vspace{1em}
        \Function{$\cA$}{$X_1, \ldots, X_s$}
        \State Let $P$ be orthogonal projection matrix onto $\Span\{X_1, \ldots, X_s\}$.
        \State \textbf{Return} $P$.
        \EndFunction
        \vspace{1em}
        \Function{$\cB$}{$P$}
            \State \textbf{Return} $P$.
        \EndFunction
    \end{algorithmic}
\end{algorithm}
\begin{proof}[Proof of \Theorem{subspace}]
Using the convention $\infty \leq \infty$, it is easy to check that the $\dist$ function we defined above forms a convex semimetric space for matrices (see \Definition{semimetric}). Also, the identity function is a $(\gamma,\varepsilon, \delta)$-masking mechanism for all $\gamma, \varepsilon, \delta \in (0,1)$ since for all $A, B$, $\dist(A, B) < \infty$ only if $A, B$ are both orthogonal projection matrices and $A = B$.
Therefore, the privacy statement follows from \Lemma{new_reduction}.

For the utility, note that the masking mechanism is the identity function, so it is $(0,0)$-concentrated.
Since we set $k = \max\left\{ 140, \left \lceil \frac{20}{\eps} \ln\left( 1 + \frac{e^{\eps} - 1}{2\delta} \right) \right \rceil \right\}$, our assumption on $m$ implies that $m \geq kd$.
In particular, $s \geq d$ (see \Algorithm{reduction}).
Therefore, with probability $1$, the output of $\cA(\{X_{\ell}\}_{\ell=(i-1)s+1}^{is})$ is the orthogonal projection matrix onto the range of $\Sigma$ for all $i \in [k]$ almost surely.
Therefore, the utility conditions of \Lemma{new_reduction} are satisfied with $\alpha_1=\alpha_2=\beta=0$.
\end{proof}

\section{An efficient algorithm for learning the covariance matrix}
\SectionName{covariance}

In this section we show how to use the general reduction to learn high-dimensional Gaussian distributions with respect to the total variation distance. Here, the main hurdle is to find an approximation to the covariance matrix for the zero-mean case. Based on \Lemma{semimetric_cov}, the following $\dist$ function forms a semimetric space for positive definite matrices. 

\begin{equation}
    \EquationName{CovarianceDistance}
    \dist(\Sigma_1, \Sigma_2) =
    \begin{cases}
        \max\left(\|\Sigma_2^{-1/2} \Sigma_1 \Sigma_2^{-1/2} - I_d\|, \|\Sigma_1^{-1/2} \Sigma_2 \Sigma_1^{-1/2} - I_d\|\right) & \text{if $\rank \Sigma_1 = \rank \Sigma_2 = d$} \\
        \infty & \text{otherwise}
    \end{cases}.
\end{equation}

The main steps to use the reduction are to define a masking mechanism and show that it is concentrated. We start by stating the main algorithmic idea of this section, which is a masking mechanism with respect to the $\dist$ function in \Equation{CovarianceDistance}.

Before we state our masking mechanism, we provide some intuition on its design.
First, imagine that we were working on $\bR^{d^2}$ with the standard Euclidean metric.
Let $G \sim \cN(0, I_{d^2})$ and consider the Gaussian mechanism $\cB(X) = X + \eta G$.
If $X_1, X_2 \in \bR^{d^2}$ and $\|X_1 - X_2\|_2^2 \leq \gamma^2 d$, where $\gamma \lesssim \eps \eta / \sqrt{d\ln(1/\delta)}$,
then a standard analysis shows that $\cB(X_1), \cB(X_2)$ are $(\eps, \delta)$-indistinguishable.

The masking mechanism in \Lemma{CovarianceTechnical} is attempting to mimic the Gaussian mechanism where we view $d \times d$ matrices as vectors of length $d^2$.
A first attempt may be to take a matrix $\Sigma$, compute $\wSigma = \Sigma^{1/2} + \eta G$, where $G$ is a $d \times d$ matrix with $\cN(0,1)$ entries,
and return $\wSigma \wSigma\transpose$.
However, this does not quite work for covariance estimation since we require the noise to have a difference scale in different directions,
i.e.~the noise along each eigenvector should scale with the corresponding eigenvalue.
Thus, we use an empirically scaled Gaussian mechanism where the noise we add is shaped and scaled by the input itself.
\begin{lemma}
    \LemmaName{CovarianceTechnical}
    For a matrix $\Sigma \succ 0$, define $\cB(\Sigma) = \Sigma^{1/2}(I + \eta G)(I + \eta G)\transpose \Sigma^{1/2}$ where
    $G \in \bR^{d \times d}$ is a $d \times d$ matrix with independent $\cN(0, 1)$ entries.
    
    Then for every $\eta>0$ and $\varepsilon, \delta \in (0,1]$, $\cB$ is a ($\gamma, \varepsilon, \delta$)-masking mechanism with respect to $\dist$ for
    \[
        \gamma = \min\left\{
            \sqrt{\frac{\eps}{2d(d+1/\eta^2)}},
            \frac{\eps}{8d \sqrt{\ln(2/\delta)}},
            \frac{\eps}{8\ln(2/\delta)},
            \frac{\eps \eta}{12 \sqrt{d} \sqrt{\ln(2/\delta)}}
        \right\}.
    \]
\end{lemma}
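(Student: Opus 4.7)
The plan is to apply \Lemma{privLossDP}. By post-processing, since $\cB(\Sigma) = YY\transpose$ with $Y = \Sigma^{1/2}(I + \eta G)$, it suffices to show $(\eps,\delta)$-indistinguishability of the matrix-Gaussian laws $\cD_i$ of $Y_i := \Sigma_i^{1/2}(I+\eta G)$ whenever $\dist(\Sigma_1,\Sigma_2)\le\gamma$. Concretely, I will bound the upper tails of $\privLoss{\cD_1}{\cD_2}(Y)$ and, symmetrically, $\privLoss{\cD_2}{\cD_1}(Y)$ by $\delta$ at threshold $\eps$.

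Writing out the Gaussian densities and substituting $Y = \Sigma_1^{1/2}(I + \eta G)$ with $N := \Sigma_2^{-1/2}\Sigma_1^{1/2}$, the privacy loss decomposes as $\privLoss{\cD_1}{\cD_2}(Y) = c + L(G) + Q(G)$, where
\[
c = -\tfrac{d}{2}\ln\det(\Sigma_1\Sigma_2^{-1}) + \tfrac{1}{2\eta^2}\|N-I\|_F^2,\qquad L(G) = \tfrac{1}{\eta}\inner{N-I}{NG}_F,\qquad Q(G) = \tfrac{1}{2}\tr\!\bigl(G\transpose(N\transpose N - I)G\bigr).
\]
By construction $\bE L(G) = 0$ and $c + \bE Q(G) = \KL{\cD_1}{\cD_2}$, so the privacy loss is the KL plus a mean-zero Gaussian piece and a centered chi-squared-type piece. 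The hypothesis $\dist(\Sigma_1,\Sigma_2)\le\gamma$ forces both $NN\transpose$ and $N\transpose N$ to have eigenvalues in $[1-\gamma,1+\gamma]$.

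For the KL, the elementary inequality $\lambda - 1 - \ln\lambda \le \gamma^2$ for $\lambda \in [1-\gamma,1+\gamma]$ (taking $\gamma\le 1/2$) bounds the first part of $\KL{\cD_1}{\cD_2}$ by $d^2\gamma^2/2$. For the Mahalanobis term $\|N-I\|_F^2/(2\eta^2)$, I will prove $\|N-I\|_F^2 \le O(d\gamma^2)$ by first-order perturbation of the matrix square root: writing $F := \Sigma_1 - \Sigma_2$, the Frechet derivative $dX$ of $\Sigma \mapsto \Sigma^{1/2}$ satisfies the Sylvester equation $dX\cdot\Sigma_2^{1/2} + \Sigma_2^{1/2}\cdot dX = F$, so in the eigenbasis of $\Sigma_2 = \diag(\beta_1,\ldots,\beta_d)$ the entries of $\Sigma_2^{-1/2}\,dX$ are $F_{ij}/(\sqrt{\beta_i}(\sqrt{\beta_i}+\sqrt{\beta_j}))$; using $\beta_j/(\sqrt{\beta_i}+\sqrt{\beta_j})^2\le 1$ telescopes the Frobenius norm into $\|\Sigma_2^{-1/2}F\Sigma_2^{-1/2}\|_F\le\sqrt{d}\gamma$, and a standard Taylor-remainder estimate extends this from the linearization to the full range $\|\Sigma_2^{-1/2}F\Sigma_2^{-1/2}\|\le\gamma$. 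This yields $\KL{\cD_1}{\cD_2}\le O(d\gamma^2(d+1/\eta^2))$; requiring $\mathrm{KL}\le\eps/2$ recovers the first term $\sqrt{\eps/(2d(d+1/\eta^2))}$ in the minimum.

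For the fluctuations, $L(G)\sim\cN(0,\sigma_L^2)$ with $\sigma_L^2 = \|N\transpose(N-I)\|_F^2/\eta^2 \le O(d\gamma^2/\eta^2)$, so \Lemma{gaussConcentration} gives $|L(G)|\le\eps/4$ except with probability $\delta/4$ once $\gamma \lesssim \eps\eta/\sqrt{d\ln(1/\delta)}$, matching the fourth term. For the quadratic part, writing $Q(G) = \tfrac{1}{2}\vvec(G)\transpose(I_d\otimes H)\vvec(G)$ with $H := N\transpose N - I$ (so $\|H\|\le\gamma$ and $\|H\|_F\le\sqrt{d}\gamma$), I apply \Lemma{chiSquareConcentration} to $A = \tfrac{1}{2}(I_d\otimes H)$ (which has $\|A\|_F = \tfrac{\sqrt{d}}{2}\|H\|_F$ and $\|A\| = \tfrac{1}{2}\|H\|$) with $x = \ln(4/\delta)$, producing the second and third terms in the minimum. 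A union bound over the three contributions (KL, linear, quadratic) with budgets $\eps/2, \eps/4, \eps/4$ and failure $0, \delta/4, \delta/4$ closes $\probs{Y\sim\cD_1}{\privLoss{\cD_1}{\cD_2}(Y)\ge\eps}\le\delta$; the reverse direction is handled identically by swapping $\Sigma_1,\Sigma_2$ and invoking the second branch of $\dist$. The main obstacle is the perturbation bound $\|N-I\|_F^2 \le O(d\gamma^2)$: $N$ is not symmetric and a naive square-root Lipschitz bound introduces $\kappa(\Sigma_2)$, so the proof must exploit the specific fact that $\Sigma_2^{1/2}N = \Sigma_1^{1/2}$ is itself symmetric positive definite, which is exactly what makes the Sylvester weights cancel the potentially large $1/\sqrt{\beta_i\beta_j}$ coming from the change of coordinates.
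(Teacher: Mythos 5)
Your overall strategy is the same as the paper's: reduce the problem (by post-processing) to showing $(\eps,\delta)$-indistinguishability of $\cD_i \coloneqq \cN(\vvec(\Sigma_i^{1/2}), \eta^2\bd(\Sigma_i))$ via \Lemma{privLossDP}, decompose the log-likelihood ratio as a constant (KL) plus a mean-zero Gaussian piece plus a centered quadratic-in-$G$ piece, and bound each with separate $\eps$/$\delta$ budgets. Your explicit decomposition $\privLoss{\cD_1}{\cD_2}(Y) = c + L(G) + Q(G)$ agrees exactly with what the paper obtains through \Lemma{privLossNormal} and \Lemma{chiSquareConcentration} (the paper's \Equation{privLoss2} is your $c + L + Q$), and your identification of the relevant quantities $\|N-I\|_F$, $\|N^\top(N-I)\|_F$, and $\|N^\top N - I\|$ matches the paper's Claims KLBound1/KLBound2. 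So the proof architecture is the same.

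Where you genuinely diverge is the technical lemma bounding $\|N - I\|_F$ with $N = \Sigma_2^{-1/2}\Sigma_1^{1/2}$. The paper's \Claim{KLBound2} asserts that if $\sqrt{\lambda_1},\dots,\sqrt{\lambda_d}$ are the singular values of $N$ then $1-\sqrt{\lambda_1},\dots,1-\sqrt{\lambda_d}$ are the singular values of $I - N$; this is false when $N$ is not symmetric (and $N$ is generally not symmetric when $\Sigma_1,\Sigma_2$ do not commute). A concrete counterexample: $\Sigma_2 = \diag(1,\lambda^2)$ and $\Sigma_1 = \Sigma_2^{1/2}(I+H)\Sigma_2^{1/2}$ with $H = \gamma\bigl(\begin{smallmatrix}0&1\\1&0\end{smallmatrix}\bigr)$ gives, as $\lambda\to\infty$, $I-N \to -\bigl(\begin{smallmatrix}0&\gamma\\0&0\end{smallmatrix}\bigr)$ with singular values $(\gamma,0)$, while the $|1-\sqrt{\lambda_i}|$ are both $\approx\gamma/2$. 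The final inequality $\|I-N\|_F^2 \le d\gamma^2$ that the paper needs happens to survive here (with margin), but the route the paper takes does not justify it. Your Sylvester-equation derivative argument is precisely the right fix: writing $(\Sigma_2^{-1/2}dX)_{ij} = \tfrac{\sqrt{\beta_j}}{\sqrt{\beta_i}+\sqrt{\beta_j}}\cdot(\Sigma_2^{-1/2}F\Sigma_2^{-1/2})_{ij}$ in the eigenbasis of $\Sigma_2$ shows the preconditioned derivative is an entrywise contraction of $\Sigma_2^{-1/2}F\Sigma_2^{-1/2}$, which gives $\|N-I\|_F \lesssim \|\Sigma_2^{-1/2}F\Sigma_2^{-1/2}\|_F \le \sqrt{d}\gamma$ cleanly, and you correctly identify the structural fact (that $\Sigma_2^{1/2}N = \Sigma_1^{1/2}$ is symmetric PSD) that makes the Sylvester weights fall out without $\kappa(\Sigma_2)$. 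The one step that you should not call ``standard'' and leave implicit is the Taylor remainder: the Fréchet derivative of $A \mapsto A^{1/2}$ at $\Sigma_2$ is Lipschitz only in the $\Sigma_2$-preconditioned geometry, and you need a second-order bound in that geometry (not a naive $\|\Sigma_1-\Sigma_2\|$-bound, which would reintroduce the condition number) to control the remainder over the whole interval $\|\Sigma_2^{-1/2}(\Sigma_1-\Sigma_2)\Sigma_2^{-1/2}\|\le\gamma$; this takes a few more lines (e.g.\ an integral-form remainder for the square-root perturbation, preconditioned throughout) and is the main piece of work left before your proposal is complete.
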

The proof of \Lemma{CovarianceTechnical} can be found in \Appendix{CovarianceTechnical}. Next, we show that the masking mechanism in \Lemma{CovarianceTechnical} is concentrated.
\begin{lemma}
    \LemmaName{CovarianceAccuracy}
    Let $\Sigma \succ 0$ and set $\eta = \frac{1}{C_1(\sqrt{d} + \sqrt{\ln(4/\beta)})}$ for a sufficiently large (universal) constant $C_1 > 0$.
    Then the masking mechanism $\cB(\Sigma)$ defined in \Lemma{CovarianceTechnical} is $(1/100, \beta/2)$-concentrated.
\end{lemma}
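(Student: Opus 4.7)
}
The plan is to reduce the bound on $\dist(\cB(\Sigma),\Sigma)$ to a bound on the operator norm of an explicit Wishart-like perturbation, and then invoke a standard concentration result for the operator norm of a Gaussian matrix. Write $M=(I+\eta G)(I+\eta G)\transpose$, so that $\cB(\Sigma)=\Sigma^{1/2} M \Sigma^{1/2}$. The first step is to observe that
\[
    \Sigma^{-1/2}\cB(\Sigma)\Sigma^{-1/2} \;=\; M,
\]
so one of the two quantities appearing in the definition \Equation{CovarianceDistance} of $\dist$ is exactly $\|M-I\|$. For the other quantity, I would argue by spectral calculus: since $\cB(\Sigma)^{-1/2}\Sigma\cB(\Sigma)^{-1/2}$ is similar to $M^{-1}$, if $\lambda$ is an eigenvalue of $M$ then $1/\lambda$ is an eigenvalue of $\cB(\Sigma)^{-1/2}\Sigma\cB(\Sigma)^{-1/2}$. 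Hence if $\|M-I\|\le c<1$ then every eigenvalue of $M$ lies in $[1-c,1+c]$ and so every eigenvalue of $M^{-1}$ lies in $[1/(1+c),1/(1-c)]$, giving
\[
    \|\cB(\Sigma)^{-1/2}\Sigma\cB(\Sigma)^{-1/2}-I\| \;\leq\; \frac{c}{1-c}.
\]
Therefore $\dist(\cB(\Sigma),\Sigma)\leq c/(1-c)$, and it suffices to show $\|M-I\|\leq 1/200$ (say) with probability at least $1-\beta/2$ in order to get $\dist(\cB(\Sigma),\Sigma)\leq 1/100$.

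Next I would expand $M-I = \eta(G+G\transpose)+\eta^2 GG\transpose$ and apply the triangle inequality for the operator norm, together with $\|G\transpose\|=\|G\|$, to get
\[
    \|M-I\| \;\leq\; 2\eta\|G\| + \eta^2\|G\|^2.
\]
So the problem reduces entirely to a tail bound on $\|G\|$. I would then invoke the well-known (sub-)Gaussian concentration bound for the spectral norm of a standard $d\times d$ Gaussian matrix: there exists an absolute constant $C>0$ such that for every $\beta\in(0,1)$,
\[
    \Pr\!\left[\|G\|\;\geq\; C\!\left(\sqrt{d}+\sqrt{\ln(4/\beta)}\right)\right] \;\leq\; \beta/2.
\]
This is a standard consequence of an $\eps$-net argument combined with Gaussian concentration (see e.g.\ \cite[Theorem~4.4.5]{Ver18}), so I would simply cite it.

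Finally I would combine the pieces: on the high-probability event above, $\eta\|G\|\leq C/C_1$ by the choice of $\eta$, and taking $C_1$ large enough relative to $C$ (say $C_1\geq 1000\,C$) forces $\eta\|G\|\leq 1/1000$, hence $2\eta\|G\|+\eta^2\|G\|^2\leq 2/1000+1/10^6\leq 1/200$. By the reduction in the first paragraph, this gives $\dist(\cB(\Sigma),\Sigma)\leq (1/200)/(199/200)\leq 1/100$, so $\cB$ is $(1/100,\beta/2)$-concentrated. I do not expect any serious obstacle here; the only subtlety is the relationship between the two terms in \Equation{CovarianceDistance}, which is handled cleanly by the spectral argument above, so the entire proof is essentially a short calculation once the operator-norm tail bound is cited.
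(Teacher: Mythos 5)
Your proof is correct and follows essentially the same approach as the paper's: compute $\Sigma^{-1/2}\cB(\Sigma)\Sigma^{-1/2}=(I+\eta G)(I+\eta G)\transpose$, bound its distance to $I$ by $2\eta\|G\|+\eta^2\|G\|^2$, invoke the standard Gaussian spectral norm tail (\Theorem{GaussianSpectralNorm}), and then convert to the symmetric direction of the $\dist$ metric. The only cosmetic difference is that the paper uses its own \Lemma{SpectralBounds} (a factor-of-$4$ conversion) for the reverse direction, whereas you derive a slightly sharper $c/(1-c)$ bound directly from the eigenvalue interval of $M$; both are instances of the same elementary spectral fact and yield the claimed $1/100$ with room to spare.
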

\begin{proof}
    Let $\wSigma =\cB(\Sigma) = \Sigma^{1/2}(I + \eta G)(I + \eta G)\transpose \Sigma^{1/2}$.
    First, we prove that
    \[
        \| \Sigma^{-1/2} \wSigma \Sigma^{-1/2} - I_d \|
        \leq 1/400,
    \]
    with probability $1 - \beta/2$.
    Indeed, we have that, for a sufficiently large constant $C'> 0$,
    \begin{align*}
        \| \Sigma^{-1/2} \wSigma \Sigma^{-1/2} - I_d \|
        & = \| (I_d + \eta G)(I + \eta G)\transpose - I_d \| \\
        & \leq 2 \eta G + \eta^2 \|G G\transpose \| \\
        & \leq 2 \eta C' (\sqrt{d} + \sqrt{\ln(4/\beta)}) + \eta^2 (C'(\sqrt{d} + \sqrt{\ln(4/\beta)}))^2,
    \end{align*}
    where we used the fact that $\|G\| \leq C(\sqrt{d} + \sqrt{\ln(4/\beta)})$ with probability $1-\beta/2$
    (see \Theorem{GaussianSpectralNorm}).
    Taking $\eta \leq \frac{1}{1600 C(\sqrt{d} + \sqrt{\ln(4/\beta)})}$ gives that
    \[
        \| \Sigma^{-1/2} \wSigma \Sigma^{-1/2} - I_d \| \leq 1/400,
    \]
    as desired.
    
    In order to show that $\dist(\Sigma, \wSigma)$, it remains to show that
    \[
        \|\wSigma^{-1/2} \Sigma \wSigma^{-1/2} - I_d \| \leq 1/100.
    \]
    This follows directly from \Lemma{SpectralBounds}.
\end{proof}

\Algorithm{covariance} summarizes the modules needed for \Lemma{new_reduction} to work, including the definitions of the mechanism $\cA, \cB$ as well as the required parameters. Note that $\cA$ returns just the empirical covariance matrix (although we could easily replace it by any algorithm which approximates the covariance matrix). One last piece is to show that the output of the non-private $\cA$ is accurate enough.

\begin{lemma}
\LemmaName{CovarConcentration}
There is a universal constant $C_2$ such that the following holds.
Suppose $s \geq C_2 (d + \ln(4/\beta))$.
Let $X_1, \ldots, X_s \stackrel{\text{i.i.d.}}{\sim} \cN(0, \Sigma)$
and $\wSigma = \sum_{\ell=1}^s X_\ell X_\ell\transpose$.
Then $\dist(\Sigma, \wSigma) \leq 1/100$ with probability $1-\beta/2$.
\end{lemma}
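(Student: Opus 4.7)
The plan is to reduce to the standard setting of empirical covariance of isotropic Gaussians by whitening, apply a standard spectral concentration bound, and then convert the one-sided bound into the two-sided bound required by our semimetric $\dist$.

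Concretely, let $Y_\ell = \Sigma^{-1/2} X_\ell$, so that $Y_1, \ldots, Y_s$ are i.i.d.\ $\cN(0, I_d)$. Observe that
\[
    \Sigma^{-1/2} \wSigma \Sigma^{-1/2}
    \;=\; \tfrac{1}{s}\sum_{\ell=1}^{s} Y_\ell Y_\ell\transpose,
\]
where we interpret $\wSigma$ as the appropriately normalized empirical covariance. Thus the quantity $\|\Sigma^{-1/2}\wSigma\Sigma^{-1/2} - I_d\|$ is exactly the deviation of an isotropic sample covariance from the identity in operator norm. By a standard Gaussian concentration bound for the spectral norm of the sample covariance of i.i.d.\ standard Gaussians (which follows, for instance, from \Theorem{GaussianSpectralNorm} applied to the $s \times d$ data matrix together with the identity relating the sample covariance's spectrum to the singular values of the data matrix), there is a universal constant $C$ such that for $s \geq C(d + \ln(4/\beta))$,
\[
    \bigl\| \Sigma^{-1/2} \wSigma \Sigma^{-1/2} - I_d \bigr\| \;\leq\; \tfrac{1}{200}
\]
with probability at least $1 - \beta/2$. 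Choosing $C_2$ large enough ensures this holds.

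It remains to bound the other direction, $\|\wSigma^{-1/2}\Sigma\wSigma^{-1/2} - I_d\|$. This is exactly the kind of conversion handled by \Lemma{SpectralBounds}: if $A \succ 0$ satisfies $\|A - I_d\| \leq 1/200$, then $A^{-1}$ is also close to $I_d$, and in particular $\|A^{-1/2}B A^{-1/2} - I_d\|$ remains small whenever $B$ is close to $I_d$. Applying this with $A = \Sigma^{-1/2}\wSigma\Sigma^{-1/2}$ (whose inverse is $\wSigma^{-1/2}\Sigma\wSigma^{-1/2}$ after conjugation, and which has spectrum within $1/200$ of $1$) yields
\[
    \bigl\| \wSigma^{-1/2} \Sigma \wSigma^{-1/2} - I_d \bigr\| \;\leq\; \tfrac{1}{100}.
\]
Combining the two bounds gives $\dist(\Sigma, \wSigma) \leq 1/100$ with probability at least $1 - \beta/2$.

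The only non-trivial ingredient is the spectral concentration for isotropic sample covariance, and even that is standard. The potential obstacle is purely bookkeeping: making sure the ``other direction'' of $\dist$ follows without losing constants, which is precisely why we prove the first direction with slack $1/200$ rather than $1/100$, so that \Lemma{SpectralBounds} absorbs the conversion with room to spare.
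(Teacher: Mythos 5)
Your approach is the same as the paper's: whiten, apply a standard one-sided operator-norm concentration bound for the empirical covariance of isotropic Gaussians, then use \Lemma{SpectralBounds} to get the other direction of $\dist$. The paper invokes \Lemma{covarEstimation} directly while you suggest deriving the concentration from \Theorem{GaussianSpectralNorm} via the $s\times d$ data matrix; both are fine, though as stated \Theorem{GaussianSpectralNorm} is for square matrices and you would need the rectangular version for the derivation to go through cleanly.

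The one genuine error is the constant. You set the slack at $1/200$ and claim \Lemma{SpectralBounds} ``absorbs the conversion with room to spare,'' but \Lemma{SpectralBounds} gives the bound $\|B^{-1/2}AB^{-1/2} - I\| \leq 4\gamma$, so from $\|\Sigma^{-1/2}\wSigma\Sigma^{-1/2} - I\| \leq 1/200$ you only get $\|\wSigma^{-1/2}\Sigma\wSigma^{-1/2} - I\| \leq 4/200 = 1/50$, which exceeds $1/100$. The paper avoids this by proving the first direction with slack $1/400$, so that $4 \cdot (1/400) = 1/100$. (A tight spectral computation, using $\frac{\gamma}{1-\gamma}$ rather than $4\gamma$ for the inversion, would in fact make $1/200$ suffice, but that is not what \Lemma{SpectralBounds} states, and your proof cites the lemma rather than redoing the calculation.) The fix is to replace $1/200$ with $1/400$ and adjust $C_2$ accordingly.
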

\begin{proof}
    Applying a standard concentration inequality (see \Lemma{covarEstimation}), we have $\|\Sigma^{-1/2} \wSigma \Sigma^{-1/2} - I_d \| \leq C\left( \sqrt{\frac{d+\ln(4/\beta)}{s}} + \frac{d + \ln(4/\beta)}{s} \right)$.
    Choosing $s \geq 800^2 C^2 (d + \ln(4/\beta))$ gives that $\|\Sigma^{-1/2} \wSigma \Sigma^{-1/2} - I_d \| \leq 1/400$.
    Applying \Lemma{SpectralBounds} gives that $\dist(\wSigma, \Sigma) \leq 1/100$ with probability $1-\beta/2$ as required.
\end{proof}

\begin{algorithm}
    \caption{A private algorithm for learning the covariance matrix up to a constant factor with respect to the spectral distance}
    \AlgorithmName{covariance}
    \textbf{Input:} Dataset $D = (X_1, \ldots, X_m)$; parameters $\eps, \delta, \alpha, \beta \in (0,1)$

    \begin{algorithmic}[1]
        \LineComment{{\footnotesize Some parameter settings.}}
        \State $\eta \gets \frac{1}{C_1(\sqrt{d} + \sqrt{\ln(4/\beta)})}$ \Comment{{\footnotesize $C_1$ from \Lemma{CovarianceAccuracy}}}
        \State $\gamma \gets \min\left\{
                \sqrt{\frac{\eps}{2d(d+1/\eta^2)}},
                \frac{\eps}{8d \sqrt{\ln(2/\delta)}},
                \frac{\eps}{8\ln(2/\delta)},
                \frac{\eps \eta}{12 \sqrt{d\ln(2/\delta)}}
            \right\}$.
        \State $r \gets 1$, $\phi \gets 1$
        \State $k \gets \max\left\{ \frac{400(r+\phi)}{\gamma}, \frac{20}{\eps} \ln\left( 1 + \frac{e^{\eps} - 1}{2\delta} \right) \right\}$
        \State $t \gets 3/2$

        \vspace{.3em}
        \Function{$\cA$}{$X_1, \ldots, X_s$}
        \State $\wSigma = \frac{1}{s} \sum_{\ell=1}^s X_\ell X_\ell\transpose$
        \State \textbf{Return} $\wSigma$.
        \EndFunction
        \vspace{.3em}
        \Function{$\cB$}{$\wSigma$}
            \State Let $G$ be $d \times d$ matrix with independent $\cN(0,1)$ entries.
            \State \textbf{Return} $\wSigma^{1/2} (I + \eta G)(I + \eta G)\transpose \wSigma^{1/2}$.
        \EndFunction
        \vspace{.3em}
    \end{algorithmic}
\end{algorithm}
Now we are ready to apply \Lemma{new_reduction}. The following theorem shows that we can learn the covariance matrix up to a constant distance with respect to the spectral distance using more or less $O(d^{2}/ \varepsilon)$ samples.
\begin{theorem}
    \TheoremName{covariance_spectral}
    Applying \Lemma{new_reduction} with parameters specified in \Algorithm{covariance} gives a mechanism $\cM$ satisfying the following.
    For all $m, d \in \bN$, $\eps, \delta, \beta \in (0, 1)$, and dataset $D = (X_1, \ldots, X_m)$:
    \begin{enumerate}
        \item Given $\eps, \delta, \beta, D$ as input, $\cM$ is $(2\eps, 4e^\eps \delta)$-DP and runs in time $O(md^2 + kd^3)$.
        \item If $\Sigma \succ 0$, $X_1, \ldots, X_m \stackrel{\text{i.i.d.}}{\sim} \cN(0, \Sigma)$, and $m = \wtilde{\Omega}\left( \frac{d^2 \ln(1/\beta)^{3/2} \ln(1/\delta)^{1/2}}{\eps} \right)$
            then with probability at least $1-\beta$, $\cM$ outputs a matrix $\wSigma$ such that $\dist(\Sigma, \wSigma) \leq 1/10$.
    \end{enumerate}
\end{theorem}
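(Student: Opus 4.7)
The proof amounts to assembling the pieces already in hand. First, by \Lemma{semimetric_cov}, $(\cS^d, \dist)$ is a convex semimetric with $t = 3/2$, and $r = \phi = 1$, matching exactly the parameters set in \Algorithm{covariance}. By \Lemma{CovarianceTechnical}, the mechanism $\cB$ defined in \Algorithm{covariance} is a $(\gamma, \eps, \delta)$-masking mechanism for the value of $\gamma$ computed there. The choice $k \geq 400(r+\phi)/\gamma = 800/\gamma$ ensures that $\gamma \geq 400(r+\phi)/k$, so $\cB$ qualifies as a $(400(r+\phi)/k, \eps, \delta)$-masking mechanism in the sense of \Lemma{new_reduction}. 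The second lower bound $k \geq (20/\eps) \ln(1 + (e^\eps-1)/(2\delta))$ is the utility precondition in \Lemma{new_reduction}, and $k \geq 140$ (required for privacy) may be assumed without loss of generality by padding with dummy points.

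For privacy, I would then invoke \Lemma{new_reduction} directly to conclude that the overall mechanism is $(2\eps, 4e^\eps \delta)$-DP. For the running time, the $k$ calls to $\cA$ each cost $O(sd^2)$ on a batch of size $s = \lfloor m/k \rfloor$, totalling $O(md^2)$; evaluating $\cB$ costs $O(d^3)$ via one symmetric square root and two $d \times d$ matrix products; together with $k$ such matrix-square-root style computations used for setting up the pairwise distances, this yields the $O(md^2 + kd^3)$ bound.

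For utility, I set $Y^* = \Sigma$. By \Lemma{CovarConcentration}, each batch satisfies $\dist(Y_i, \Sigma) \leq 1/100$ with probability at least $1 - \beta/(2k)$ as long as $s \geq C_2(d + \ln(4k/\beta))$; a union bound then gives $\dist(Y_i, \Sigma) \leq 1/100$ for all $i$ with probability $1 - \beta/2$. Setting $\alpha_1 = 3/200$ so that $\alpha_1 / t = 1/100$, and verifying $\alpha_1 = 3/200 \leq 2/3 = r/t$, the first utility hypothesis of \Lemma{new_reduction} holds. By \Lemma{CovarianceAccuracy}, the chosen $\eta$ makes $\cB$ a $(1/100, \beta/2)$-concentrated masking mechanism, so we take $\alpha_2 / t = 1/100$, i.e. $\alpha_2 = 3/200$. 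Therefore \Lemma{new_reduction} yields $\dist(\wSigma, \Sigma) \leq \alpha_1 + \alpha_2 = 3/100 < 1/10$ with probability at least $1 - \beta$, as required.

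Finally, the sample complexity reduces to bookkeeping. We need $m \geq k \cdot s$ with $s = C_2(d + \ln(4k/\beta))$ and $k = \Theta\!\left( \max\{ 1/\gamma,\ \ln(1/\delta)/\eps \} \right)$. With $\eta = \Theta(1/(\sqrt{d} + \sqrt{\ln(1/\beta)}))$, the dominant term of $\gamma$ in the relevant regime is $\Theta(\eps \eta / \sqrt{d \ln(1/\delta)})$, giving $1/\gamma = \wtilde{O}\!\left( d \sqrt{\ln(1/\delta)}/\eps \cdot (1 + \sqrt{\ln(1/\beta)/d}) \right)$, so $k = \wtilde{O}(d \sqrt{\ln(1/\delta)\, \ln(1/\beta)}/\eps)$ and $k \cdot s = \wtilde{O}\!\left( d^2 \ln(1/\beta)^{3/2} \sqrt{\ln(1/\delta)}/\eps \right)$, matching the theorem. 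The main delicate step is precisely this last one: tracking which of the four terms inside the minimum defining $\gamma$ dominates, and propagating the factors of $\ln(1/\beta)$ arising from the chosen $\eta$; every other step is mechanical given the lemmas already established.
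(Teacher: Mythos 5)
Your proposal is correct and follows essentially the same route as the paper: establish the semimetric and masking-mechanism hypotheses (via \Lemma{semimetric_cov}, \Lemma{CovarianceTechnical}, and the choice $k\geq 400(r+\phi)/\gamma$), read off privacy and running time, then invoke the utility half of \Lemma{new_reduction} with $Y^*=\Sigma$ using \Lemma{CovarConcentration} with a union bound over the $k$ batches and the $(1/100,\beta/2)$-concentration from \Lemma{CovarianceAccuracy}. The only cosmetic difference is that you pick $\alpha_1=\alpha_2=3/200$ (so $\alpha_1/t=1/100$) while the paper takes $\alpha_1=\alpha_2=1/25$; both satisfy the hypotheses of \Lemma{new_reduction} and give $\alpha_1+\alpha_2<1/10$, and your closing remark about $k\geq 140$ is in fact automatic here since $\gamma\leq 1/2$ forces $k\geq 800/\gamma\geq 1600$.
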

\begin{proof}
    \Lemma{semimetric_cov} shows that $(\cS^d, \dist)$ satisfies the assumptions of \Definition{semimetric}.
    \Lemma{CovarianceTechnical} (with $r = \phi = 1, t = 3/2$) shows that $\cB$ (as defined in \Lemma{CovarianceTechnical} and \Algorithm{covariance}) as a $(\gamma, \eps, \delta)$-masking mechanism.
    Since $k \geq 400(r+\phi)/ \gamma$ (so $400(r+\phi)/k \leq \gamma$) it follows that $\cB$ is also a $(400(r+\phi)/k, \eps, \delta)$-masking mechanism.
    Therefore, privacy immediately follows from the first assertion of \Lemma{semimetric_cov}.
    
    Next, if $m \geq C_2(d + \ln(4k/\beta)) \cdot k = \wtilde{O}\left( \frac{d(d\sqrt{\ln(1/\delta)} + \ln(1/\delta)) \ln(1/\beta)^{3/2}}{\eps} \right)$ then $\wSigma_i = \cA(\{X_\ell\}^{is}_{\ell = (i-1)s + 1})$ satisfies $\dist(\Sigma, \wSigma_i) \leq 1/10 \leq r/t$ for all $i \in [k]$ with probability $1-\beta/2$ by \Lemma{CovarConcentration} and a union bound over $k$.
    In addition, $\cB$ is $(1/100, \beta/2)$-concentrated.
    
    To conclude, let $Y_i = \wSigma_i$, $Y^* = \Sigma$, and $\what{Y} = \wSigma$ be the output of \Algorithm{reduction} with the modules specified in \Algorithm{covariance}.
    Then, taking $\alpha_1 = \alpha_2 = 1/25$ in \Lemma{semimetric_cov} gives that $\dist(\wSigma, \Sigma) \leq 1/10$ with probability $1-\beta$.
\end{proof}

Assuming $\Sigma \succ 0$, \Theorem{covariance_spectral} gives us a covariance matrix $\wSigma$ such that $0.9 I_d \preceq \wSigma^{-1/2} \Sigma \wSigma^{-1/2} \preceq 1.1 I_d$.
Note that if $X \sim \cN(0, \Sigma)$ then $\wSigma^{-1/2} X \sim \cN(0, \wSigma^{-1/2} \Sigma \wSigma^{-1/2})$.
We can now make use of off-the-shelf private covariance methods which are efficient when the covariance matrix is well-conditioned \cite{biswas2020coinpress, kamath2019privately}.
For concreteness, we use the following theorem which is paraphrased from \cite[Theorem 1.1]{kamath2019privately}, \cite[Theorem 3.3]{biswas2020coinpress}.
\begin{theorem}[\cite{biswas2020coinpress, kamath2019privately}]
    \TheoremName{WellConditionedGaussian}
    There is a polynomial time $(\eps, \delta)$-DP algorithm that given
    \[
        m = \wtilde{\Omega}\left( \left( \frac{d^2}{\alpha^2} + \frac{d^2 \sqrt{\ln(1/\delta)}}{\alpha \eps} \right) \cdot \ln(1/\beta) \right)
    \]
    samples from a Gaussian $\cN(0, \Sigma)$ with unknown covariance matrix $\Sigma$ such that $0.9 I \preceq \Sigma \preceq 1.1I$ and outputs $\wSigma$ satisfying
    $\| \wSigma^{-1/2} \Sigma \wSigma^{-1/2} - I_d \|_F \leq \alpha$. 
    In particular, $\dtv{\cN(0, \Sigma)}{ \cN(0, \wSigma)} \leq \alpha$.
\end{theorem}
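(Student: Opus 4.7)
The plan is essentially to verify that standard techniques from \cite{kamath2019privately, biswas2020coinpress} yield the stated bounds, since the theorem is explicitly a paraphrase of results in those works. The algorithm I have in mind is a one-shot Gaussian mechanism applied to a (clipped) empirical covariance. Because $0.9 I \preceq \Sigma \preceq 1.1 I$, Gaussian concentration (applied to $\Sigma^{-1/2} X_i \sim \cN(0, \Sigma^{-1/2} \Sigma \Sigma^{-1/2})$ together with \Lemma{chiSquareConcentration}) gives that $\|X_i\|_2 \leq R = O(\sqrt{d + \ln(m/\beta)})$ simultaneously for all $i$ with probability $1 - \beta/4$. Clip each $X_i$ to norm $R$ and form $\widehat{M} = \frac{1}{m}\sum_{i=1}^m X_i X_i\transpose$, which now has per-sample Frobenius sensitivity $\Delta = R^2/m = \widetilde{O}(d/m)$. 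Privatize $\widehat{M}$ by adding a symmetric matrix $\eta \cdot (G + G\transpose)/\sqrt{2}$ of independent Gaussians of magnitude $\eta = O(\Delta \sqrt{\ln(1/\delta)}/\eps)$; this is $(\eps,\delta)$-DP by the standard Gaussian mechanism, viewing the upper triangle as a vector in $\bR^{d(d+1)/2}$.

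Next I would bound the total error of $\wSigma$ (the noisy clipped estimate) in Frobenius norm. The sampling error satisfies $\|\widehat{M} - \Sigma\|_F = \widetilde{O}(\sqrt{d^2/m})$ by the standard covariance concentration \Lemma{covarEstimation} converted from spectral to Frobenius norm (using $\|\cdot\|_F \leq \sqrt{d}\|\cdot\|$ plus a tighter entrywise argument for Gaussian empirical covariances). The privacy noise contributes Frobenius error $\widetilde{O}(d \cdot \eta) = \widetilde{O}(d^2 \sqrt{\ln(1/\delta)}/(m\eps))$, since the Frobenius norm of a $d\times d$ Gaussian matrix with entries of scale $\eta$ is $O(d \cdot \eta)$. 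Setting both of these at most $\alpha/2$ gives the stated sample complexity
\[
    m = \widetilde{\Omega}\left(\frac{d^2}{\alpha^2} + \frac{d^2 \sqrt{\ln(1/\delta)}}{\alpha \eps}\right) \cdot \ln(1/\beta),
\]
where the $\ln(1/\beta)$ absorbs the concentration-failure probabilities.

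To convert the additive Frobenius bound $\|\wSigma - \Sigma\|_F \leq \alpha'$ into the multiplicative form $\|\wSigma^{-1/2}\Sigma \wSigma^{-1/2} - I_d\|_F \leq \alpha$, observe that $\wSigma^{-1/2} \Sigma \wSigma^{-1/2} - I_d = \wSigma^{-1/2}(\Sigma - \wSigma)\wSigma^{-1/2}$ and hence $\|\wSigma^{-1/2}\Sigma \wSigma^{-1/2} - I_d\|_F \leq \|\wSigma^{-1}\| \cdot \|\Sigma - \wSigma\|_F$. Since $\alpha'$ is small and $\wSigma$ is close to $\Sigma$ (which in turn is close to $I$), $\|\wSigma^{-1}\| = O(1)$, and so choosing $\alpha' = \Theta(\alpha)$ suffices. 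The final TV-distance claim then follows from Pinsker's inequality (\Lemma{Pinsker}) combined with the second part of \Fact{KLNormal}, which bounds $\KL{\cN(0,\Sigma)}{\cN(0,\wSigma)}$ by $\tfrac{1}{2}\|\wSigma^{-1/2}\Sigma \wSigma^{-1/2} - I_d\|_F^2$.

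The main obstacle, which is where the cited papers do real work, is keeping the constants tight enough (so that one can actually interpret $0.9 I \preceq \Sigma \preceq 1.1 I$ as ``well conditioned'' for the one-shot Gaussian mechanism) without having to iterate. A single-shot mechanism as above gives the right asymptotic rate but potentially loose constants; CoinPress resolves this via an iterative refinement that shrinks the clipping radius as the estimate improves. For our purposes (applying the theorem as a black box in later sections), only the asymptotic bound matters, so I would be content to cite \cite{biswas2020coinpress, kamath2019privately} rather than reprove the optimized constant factors.
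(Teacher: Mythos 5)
You are attempting to prove a statement that the paper itself does \emph{not} prove: \Theorem{WellConditionedGaussian} is stated as a black-box citation, explicitly ``paraphrased from \cite[Theorem 1.1]{kamath2019privately}, \cite[Theorem 3.3]{biswas2020coinpress}'', and the paper never reproves it. In that sense the ``correct answer'' here is precisely what you arrive at in your last paragraph --- cite the result --- and your sketch is a sanity check that the cited bound is plausible, not a substitute for the cited proof.

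As a sketch, what you wrote is a faithful reconstruction of the approach in \cite{kamath2019privately} for the well-conditioned case (clip to $R = \widetilde{O}(\sqrt{d})$, noise the empirical covariance by the Gaussian mechanism with $\ell_2$-sensitivity $R^2/m$, then pass from additive Frobenius error to the multiplicative $\|\wSigma^{-1/2}\Sigma\wSigma^{-1/2} - I\|_F$ form and on to TV via \Fact{KLNormal} and \Lemma{Pinsker}). Two small points to tighten if you were to actually write it out. First, after clipping you should either argue that no $X_i$ is clipped with probability $1-\beta/4$ (so $\widehat{M}$ is exactly the unclipped empirical covariance on that event) or bound the bias introduced by clipping; you implicitly do the former, but it should be said. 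Second, in the passage from spectral to Frobenius concentration you gesture at ``a tighter entrywise argument''; the cleanest route is that for $\Sigma \approx I$ the quantity $\|\widehat{M}-\Sigma\|_F^2$ is a sum of $\Theta(d^2)$ terms each of variance $\Theta(1/m)$ with subexponential tails, which gives $\|\widehat{M}-\Sigma\|_F = \widetilde{O}(d/\sqrt{m})$ directly without going through $\|\cdot\|_F \le \sqrt{d}\|\cdot\|$ (the latter would only give $\widetilde{O}(\sqrt{d}\cdot\sqrt{d/m}) = \widetilde{O}(d/\sqrt m)$ anyway on the sampling side, but would be lossy on the noise side where the spectral norm of the noise matrix is $\widetilde{O}(\sqrt{d}\,\eta)$ while the Frobenius norm is $\Theta(d\eta)$, and it is the Frobenius bound you need). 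With those two clarifications the sketch gives the stated rate. Since the paper's own treatment is just a citation, reproving the optimized constants is unnecessary, and your instinct to cite rather than reprove matches the paper.
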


Finally, we prove our result for general, full-rank covariance matrices. Note that one can use \Theorem{subspace} to easily extend this to the case where $\Sigma$ is not necessarily full-rank. The details can be found in \Subsection{complete_Gaussian_learning}.

\begin{theorem}
    \TheoremName{covariance}
    There exists a mechanism $\cM$ satisfying the following.
    For all $m, d \in \bN$, $\eps, \delta, \alpha, \beta \in (0, 1)$, and dataset $D = (X_1, \ldots, X_m)$:
    \begin{enumerate}
        \item Given $\eps, \delta, \alpha, \beta, D$ as input, $\cM$ is $(3\eps, (4e^\eps+1) \delta)$-DP and runs in polynomial time.
        \item If $\Sigma \in \psd^d$, $X_1, \ldots, X_m \stackrel{\text{i.i.d.}}{\sim} \cN(0, \Sigma)$, and
        \[
            m = \wtilde{\Omega}\left( \frac{(d^2\sqrt{\ln(1/\delta)} + d\ln(1/\delta))\ln(1/\beta)^{3/2}}{\eps} + \left( \frac{d^2}{\alpha^2} + \frac{d^2 \sqrt{\ln(1/\delta)}}{\alpha \eps} \right) \cdot \ln(1/\beta) \right)
        \]
        then with probability $1-\beta$, $\cM$ outputs a matrix $\wSigma$ such that $\dtv{\cN(0,\Sigma)}{\cN(0,\wSigma)}\leq \alpha$.
    \end{enumerate}
\end{theorem}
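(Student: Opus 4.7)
The plan is to reduce the case of an arbitrary (possibly singular) covariance matrix to the well-conditioned, full-rank case by chaining together \Theorem{subspace}, \Theorem{covariance_spectral}, and \Theorem{WellConditionedGaussian}. Concretely, I would partition $D$ into two disjoint parts $D_1, D_2$: $D_1$ (of size $\wtilde{\Theta}(d\ln(1/\delta)/\eps)$) feeds the subspace-recovery mechanism of \Theorem{subspace}, while $D_2$ is used sequentially for both preconditioning and final estimation.

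First, run the algorithm of \Theorem{subspace} on $D_1$ to recover, almost surely, the orthogonal projection $P$ onto $\mathrm{range}(\Sigma)$, and let $U \in \bR^{d \times r}$ be an orthonormal basis for $\mathrm{range}(P)$ with $r = \rank(\Sigma)$. Since every sample from $\cN(0,\Sigma)$ lies in $\mathrm{range}(\Sigma)$ almost surely, the projected points $\{U\transpose X : X \in D_2\}$ are i.i.d.~$\cN(0, U\transpose \Sigma U)$ with $U\transpose \Sigma U \succ 0$. Next, apply \Algorithm{covariance} to these projected samples to obtain $\widehat{\Sigma}_0 \in \bR^{r \times r}$ with $\dist(U\transpose \Sigma U, \widehat{\Sigma}_0) \leq 1/10$, which by the definition of $\dist$ implies $0.9\, I_r \preceq \widehat{\Sigma}_0^{-1/2} (U\transpose \Sigma U) \widehat{\Sigma}_0^{-1/2} \preceq 1.1\, I_r$. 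Rescale each projected point by $\widehat{\Sigma}_0^{-1/2}$ and apply \Theorem{WellConditionedGaussian} to obtain $\widehat{\Sigma}_1$ satisfying $\dtv{\cN(0, \widehat{\Sigma}_0^{-1/2}(U\transpose \Sigma U)\widehat{\Sigma}_0^{-1/2})}{\cN(0, \widehat{\Sigma}_1)} \leq \alpha$. Finally output $\widehat{\Sigma} = U\,\widehat{\Sigma}_0^{1/2} \widehat{\Sigma}_1 \widehat{\Sigma}_0^{1/2}\,U\transpose$; since TV distance is invariant under applying the same invertible linear map to both distributions on their common support subspace, this transfers the bound back to $\dtv{\cN(0,\Sigma)}{\cN(0,\widehat{\Sigma})} \leq \alpha$.

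For privacy, a changed sample lies either in $D_1$ or in $D_2$. On $D_1$ only the subspace step accesses the data, giving $(2\eps, 4e^\eps\delta)$-DP (the remaining steps are post-processing). On $D_2$, the preconditioner ($(2\eps, 4e^\eps\delta)$-DP) and the well-conditioned estimator ($(\eps,\delta)$-DP) are composed sequentially, yielding $(3\eps, (4e^\eps+1)\delta)$-DP via basic composition. Taking the max across the two cases gives the stated guarantee, and the total sample complexity is obtained by summing the three invoked bounds (the subspace term $\wtilde{O}(d\ln(1/\delta)/\eps)$ being absorbed into the preconditioner's). The main technical obstacle is careful bookkeeping: verifying that, conditional on the (a.s.~correct) projector $P$ and on $\widehat{\Sigma}_0$, the data fed into each subsequent step is genuinely i.i.d.~from the required Gaussian, and that the TV guarantee does not degrade when we undo the rescaling and re-embed into $\bR^d$---in particular because $\Sigma$ may be degenerate, so one must reason about TV between two Gaussians supported on a common proper subspace of $\bR^d$ rather than about two absolutely continuous densities on $\bR^d$.
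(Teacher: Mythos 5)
Your proof correctly identifies the paper's core decomposition: precondition via \Theorem{covariance_spectral}, then finish via \Theorem{WellConditionedGaussian}, and account privacy with basic composition to obtain $(3\eps, (4e^\eps+1)\delta)$. That is precisely the paper's one-line argument. Where you diverge is the extra subspace-recovery step: the paragraph introducing \Theorem{covariance} explicitly scopes it to ``general, full-rank covariance matrices'' (despite the $\psd^d$ in the formal statement), and the degenerate case is deferred to \Subsection{complete_Gaussian_learning} and \Theorem{full_gaussian}. So you have, in effect, merged \Theorem{covariance} with part of \Theorem{full_gaussian}. That is a reasonable (arguably more faithful to the literal $\psd^d$) route, and your two-way split of the privacy accounting---subspace on $D_1$, the rest on $D_2$, take the worst case over which half a changed point falls in---is sound and does give $(3\eps,(4e^\eps+1)\delta)$ since the subspace mechanism never composes with the other two against a single point.

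The one place I would push back is the data reuse inside $D_2$. You propose to feed $D_2$ into the preconditioner to get $\wSigma_0$, and then feed the \emph{same} rescaled samples $\wSigma_0^{-1/2}U\transpose X_i$ into \Theorem{WellConditionedGaussian}. Privacy survives (that is what basic composition is for), but the utility guarantee of \Theorem{WellConditionedGaussian} presupposes i.i.d.\ samples from a \emph{fixed} well-conditioned Gaussian, and after rescaling the points are not independent of $\wSigma_0$, since $\wSigma_0$ was computed from them. You flag this as ``careful bookkeeping,'' but the standard and clean fix is to further split $D_2$ into disjoint $D_{2a}$ (for preconditioning) and $D_{2b}$ (for the final estimator); conditional on the realization of $\wSigma_0$ (a function of $D_{2a}$ alone), the rescaled $D_{2b}$ samples are genuinely i.i.d.\ from $\cN\bigl(0,\, \wSigma_0^{-1/2}U\transpose \Sigma U \wSigma_0^{-1/2}\bigr)$, which is exactly what the off-the-shelf theorem needs. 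This additional split only strengthens privacy (the two stages then never both touch a single changed point, so $2\eps$ would suffice over $D_2$) and changes the sample complexity by at most a constant factor. This is also the pattern the paper itself adopts explicitly in \Subsection{complete_Gaussian_learning}, where the dataset is carved into $2m_1+2m_2+m_3+m_4$ disjoint blocks, one per stage.
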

\begin{proof}
Combining \Theorem{covariance_spectral} and \Theorem{WellConditionedGaussian} concludes the result. The privacy follows from basic composition since the algorithm has only three steps. The utility directly follows from the adopted theorems.
\end{proof}
    
\subsection{An efficient algorithm for learning a Gaussian}
\SubsectionName{complete_Gaussian_learning}
In this subsection, we sketch the ideas needed to learn a Gaussian $\cN(\mu, \Sigma)$ under $(\eps, \delta)$-DP with no assumptions on $\mu, \Sigma$.
Suppose we draw $2m_1 + 2m_2 + m_3 + m_4$ samples from a Gaussian distribution $\cN(\mu, \Sigma)$, where $m_1$ is the sample complexity for learning a subspace using \Algorithm{subspace} and $m_2$ is the sample complexity for learning a full-rank covariance matrix using \Algorithm{covariance} and \Theorem{WellConditionedGaussian}.
The values of $m_3$ and $m_4$ are discussed below.
We now show how to privately and efficiently learn a Gaussian in four steps.
\begin{enumerate}
    \item
    Let $Z_i = \frac{X_i - X_{m_1+i}}{\sqrt{2}}$ for $i \in [m_1]$.
    We use $Z_1, \ldots, Z_{m_1}$
    as input into the private subspace learning algorithm \Algorithm{subspace}.
    As long as $m_1$ is sufficiently large, with probability $1-\beta$, it will output a matrix $P$ which is the projection onto the column space of $\Sigma$.
    \item For $i \in [m_2]$, let $Z_i = \frac{X_{2m_1 + i} - X_{2m_1+m_2+i}}{\sqrt{2}}$.
    Also, write $P = U_r U_r\transpose$ where $U_r$ is a $d \times r$ matrix whose rows form an orthonormal basis for the column space of $\Sigma$ (equivalently, $P$).
    Note that $U_r\transpose Z_i \sim \cN(0, U_r\transpose \Sigma U_r)$ and that $U_r \transpose \Sigma U_r$ is a full rank $r \times r$ covariance matrix.
    We use \Algorithm{covariance} to learn an $r \times r$ covariance matrix $\wSigma_r$ such that $\|\wSigma_r^{-1/2} U_r\transpose \Sigma U_r \wSigma_r^{-1/2} - I_r \|_F \leq \alpha$. In particular, $\dtv{\cN(0, \wSigma_r)}{\cN(0, U_r\transpose \Sigma U_r)} \leq O(\alpha)$ (see \Theorem{covariance}).
\end{enumerate}
Next, we learn the mean $\mu$ in two steps.
\begin{enumerate}[resume]
    \item For $i \in [m_3]$, let $Z_i = X_{2m_1+2m_2+i}$.
    Then $\wSigma_r^{-1/2} U_r\transpose Z_i \sim \cN(\wSigma_r^{-1/2} U_r\transpose \mu, \wSigma_r^{-1/2} U_r\transpose \Sigma U_r \wSigma_r^{-1/2})$.
    Note that (for $\alpha \leq 0.5$), we have $0.5 I \preceq \wSigma_r^{-1/2} U_r\transpose \Sigma U_r \wSigma_r^{-1/2} \preceq 1.5I$.
    In particular, each coordinate of $\wSigma_r^{-1/2} U_r \transpose Z_i$ has a variance which is bounded below by $1/2$ and above by $1.5$.
    Hence, applying \cite[Theorem 1.3]{karwa2018finite}, we can obtain an estimate $\wmu_r \in \bR^d$ such that $\|\wmu_r - \wSigma_r^{-1/2} U_r\transpose \mu\|_2 \leq \alpha$ provided $m_3 = \wtilde{\Omega}\left( \frac{d}{\alpha^2} + \frac{d \ln(1/\delta)}{\alpha \eps} \right)$ where $\wtilde{\Omega}$ hides $\polylog(1/\beta, d, \ln(1/\delta), 1/\eps, 1/\alpha)$.
    We remark that this step can be done in polynomial time.
\end{enumerate}
We now have that
\begin{align*}
    \dtv{\cN(U_r\transpose \mu, U_r\transpose \Sigma U_r)}{\cN(\wSigma_r^{1/2} \wmu_r, \wSigma_r)}
    & \leq
    \dtv{\cN(U_r\transpose \mu, U_r\transpose \Sigma U_r)}{\cN(U_r\transpose \mu, \wSigma_r)} \\
    & +
    \dtv{\cN(U_r \transpose \mu, \wSigma_r)}{\cN(\wSigma_r^{1/2} \mu_r, \wSigma_r)} \\
    &
    = \dtv{\cN(0, U_r\transpose \Sigma U_r)}{\cN(0, \wSigma_r)} \\
    & + \dtv{\cN(\wSigma_r^{-1/2} U_r\transpose \mu, I_r)}{\cN(\mu_r, I_r)} \\ 
    & \leq O(\alpha),
\end{align*}
where the first line uses the triangle inequality,
the second line uses the fact that total variation distance is invariant under bijective transformations, and the last line uses steps 2 and 3 above
as well as \Fact{KLNormal} and \Lemma{Pinsker} to assert that the total variation distance is $O(\alpha)$.
Using the fact that bijective transformations are invariant once again, that $P = U_r U_r\transpose$, and that $P \Sigma P = \Sigma$, we have
\[
    \dtv{\cN(P \mu, \Sigma)}{\cN(U_r \wSigma_r^{1/2} \wmu_r, U_r \wSigma_r U_r \transpose)} 
    =
    \dtv{\cN(U_r\transpose \mu, U_r\transpose \Sigma U_r)}{\cN(\wSigma_r^{1/2} \wmu_r, \wSigma_r)} \leq O(\alpha).
\]
All that remains is to learn $(I-P)\mu$.
\begin{enumerate}[resume]
    \item Now let $Z_i = X_{2m_1+2m_2+m_3+i}$ for $i \in [m_4]$.
    Note that $(I-P)$ is the orthogonal projection matrix onto the orthogonal complement of the column space of $\Sigma$.
    In particular $(I-P) Z_i = (I-P) \mu$ with probability $1$ if $Z_i \sim \cN(0, \Sigma)$.
    Learning $(I-P)\mu$ can now be done using standard techniques (for example, lazy private histograms or propose-test-release) with sample complexity $O(\log(1/\delta) / \eps)$.
    Let us call the output of this step $\wmu_{\perp}$.
\end{enumerate}
With the last step (and again, using that bijective maps does not affect the total variation distance to add in $\wmu_{\perp}$), we have
\[
    \dtv{\cN(\mu, \Sigma)}{\cN(U_r\wSigma_r^{1/2} \wmu_r + \wmu_{\perp}, U_r \wSigma_r U_r\transpose)} \leq O(\alpha)
\]

To conclude, we have the following theorem.
We remind the reader that $\Sigma$ is not assumed to be full-rank.
\begin{theorem}
    \TheoremName{full_gaussian}
    There exists a mechanism $\cM$ satisfying the following.
    For all $m, d \in \bN$, $\eps, \delta, \alpha, \beta \in (0, 1)$, and dataset $D = (X_1, \ldots, X_m)$:
    \begin{enumerate}
        \item Given $\eps, \delta, \alpha, \beta, D$ as input, $\cM$ is $(O(\eps), O(e^\eps\delta))$-DP and runs in polynomial time.
        \item If $X_1, \ldots, X_m \sim \cN(\mu, \Sigma)$ and
        \[
            m = \wtilde{\Omega}\left(
            \frac{d^2}{\alpha^2} + \frac{d^2 \sqrt{\ln(1/\delta)}}{\alpha \eps} + \frac{d\ln(1/\delta)}{\alpha \eps} \right) 
        \]
        then with probability $1-\beta$, $\cM$ outputs a matrix $\wSigma \in \bR^{d \times d}$ and a vector $\wmu \in \bR^d$ such that $d_{TV}(\cN(\mu,\Sigma),\cN(\wmu,\wSigma))\leq \alpha$.
        Here, the $\wtilde{\Omega}$ hides $\polylog(d, 1/\alpha, 1/\beta, 1/\eps, \ln(1/\delta))$.
    \end{enumerate}

\end{theorem}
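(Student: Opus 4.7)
The plan is to execute the four-stage pipeline sketched immediately before the theorem statement, partitioning the dataset into four disjoint batches so that privacy follows by parallel composition and utility follows by a chain of total-variation bounds that exploit invariance under affine bijections.

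First, I would split the sample into disjoint groups of sizes $2m_1, 2m_2, m_3, m_4$ and in Stage~1 form the paired differences $Z_i = (X_i - X_{m_1+i})/\sqrt{2} \sim \cN(0, \Sigma)$ for $i \in [m_1]$, feeding these to \Algorithm{subspace}. By \Theorem{subspace}, taking $m_1 = O(d\log(1/\delta)/\eps)$ gives an almost surely exact projector $P = U_r U_r\transpose$ onto $\text{range}(\Sigma)$, in an $(2\eps, 4e^\eps\delta)$-DP manner. Centering via differencing is crucial here because \Algorithm{subspace} assumes zero-mean inputs. In Stage~2, I would repeat differencing on the next $2m_2$ samples to obtain mean-zero data, project them to $\bR^r$ via $U_r\transpose$, and then apply \Algorithm{covariance} followed by \Theorem{WellConditionedGaussian} to obtain $\wSigma_r \in \bR^{r \times r}$ with $\|\wSigma_r^{-1/2} U_r\transpose \Sigma U_r \wSigma_r^{-1/2} - I_r\|_F \leq \alpha$. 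Choosing $m_2 = \wtilde{\Omega}(d^2/\alpha^2 + d^2\sqrt{\ln(1/\delta)}/\alpha\eps)$ suffices, and by \Fact{KLNormal} together with \Lemma{Pinsker} this yields $\dtv{\cN(0, \wSigma_r)}{\cN(0, U_r\transpose \Sigma U_r)} \leq O(\alpha)$.

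For Stage~3, I would whiten the next batch by computing $W_i = \wSigma_r^{-1/2} U_r\transpose X_{2m_1+2m_2+i}$, giving samples from a Gaussian with mean $\wSigma_r^{-1/2} U_r\transpose \mu$ and covariance $\wSigma_r^{-1/2} U_r\transpose \Sigma U_r \wSigma_r^{-1/2}$, which (for $\alpha$ small enough) is sandwiched between $\tfrac{1}{2} I_r$ and $\tfrac{3}{2} I_r$. This means each coordinate is univariate Gaussian with variance in a constant range, so coordinate-wise application of the Karwa--Vadhan estimator \cite{karwa2018finite} (with advanced composition over the $r \leq d$ coordinates) privately recovers $\wmu_r$ with $\|\wmu_r - \wSigma_r^{-1/2} U_r\transpose \mu\|_2 \leq \alpha$ using $m_3 = \wtilde{\Omega}(d/\alpha^2 + d\ln(1/\delta)/\alpha\eps)$ samples; again, \Fact{KLNormal} and \Lemma{Pinsker} convert this $\ell_2$ bound to TV distance $O(\alpha)$ in the whitened coordinates. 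In Stage~4, I handle the orthogonal-complement component: since $\Sigma = P\Sigma P$, every clean sample satisfies $(I-P)X_i = (I-P)\mu$ with probability~$1$, so a lazy stability-based histogram (or propose-test-release) over the final $m_4 = O(\log(1/\delta)/\eps)$ samples identifies $(I-P)\mu$ exactly with high probability, yielding $\wmu_\perp$.

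The final output is $\wmu = U_r \wSigma_r^{1/2} \wmu_r + \wmu_\perp$ and $\wSigma = U_r \wSigma_r U_r\transpose$. Since total variation distance is invariant under bijective affine transformations, pulling back through $x \mapsto U_r \wSigma_r^{1/2} x + \wmu_\perp$ together with a triangle-inequality argument (separating the covariance error from the mean error exactly as in the displayed calculation in the excerpt) gives $\dtv{\cN(\mu, \Sigma)}{\cN(\wmu, \wSigma)} \leq O(\alpha)$. Privacy follows by parallel composition across the four stages (each acts on a disjoint batch), so the overall guarantee is $(O(\eps), O(e^\eps \delta))$-DP, which after rescaling $\eps, \delta$ by absolute constants matches the theorem. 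Running time is polynomial since every subroutine invoked is polynomial. The main obstacle is the bookkeeping: ensuring that the constant-spectral-factor guarantee from Stage~2 is tight enough to make the input to Stage~3 well-conditioned (so that Karwa--Vadhan applies), and that the TV errors, accumulated in three different coordinate systems (ambient $\bR^d$, the range of $U_r$, and the whitened space), really do add up to $O(\alpha)$ after the final affine push-forward; the key reason this works is that both Stages~2 and~3 produce errors measured in the \emph{intrinsic} Mahalanobis geometry of $\wSigma_r$, which is exactly the geometry preserved by the final bijection.
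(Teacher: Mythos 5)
Your proposal follows the paper's four-stage pipeline essentially verbatim (differencing to remove the mean, subspace recovery via \Theorem{subspace}, preconditioned covariance estimation via \Algorithm{covariance} and \Theorem{WellConditionedGaussian}, whitened mean estimation via Karwa--Vadhan, and exact recovery of the orthogonal-complement component), with privacy by parallel composition across disjoint batches and utility by a TV-distance chain using invariance under affine bijections. The one imprecision is in Stage~3: the paper invokes the multivariate statement \cite[Theorem 1.3]{karwa2018finite} directly to get $m_3 = \wtilde{\Omega}(d/\alpha^2 + d\ln(1/\delta)/\alpha\eps)$, whereas your description of ``coordinate-wise Karwa--Vadhan plus advanced composition'' would, if taken literally, incur an extra $\sqrt{\ln(1/\delta)}$ factor in the privacy term (since advanced composition costs $\eps_1 \approx \eps/\sqrt{d\ln(1/\delta)}$ per coordinate), which is not absorbed by the $\polylog(\ln(1/\delta))$ allowance in $\wtilde{\Omega}$; this is a bookkeeping issue rather than a structural gap, and is resolved by citing the multivariate theorem as the paper does.
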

\section{Efficient and robust learning of the covariance matrix}
\SectionName{covariance_robust}
From the previous sections we know that the spectral $\dist$ function forms a convex semimetric space for positive definite matrices. Furthermore, we already have a masking mechanism for covariance matrices (see~\Lemma{CovarianceTechnical}) which is shown to be concentrated (see~\Lemma{CovarianceAccuracy}). In this section we show how one can replace the non-robust subroutines of \Section{covariance}--- namely \Lemma{CovarConcentration} and
\Theorem{WellConditionedGaussian}---with robust ones to get a robust and efficient learning method for Gaussians. 

Following \cite{diakonikolas2019robust, lai2016agnostic}, we use the notion of $\alpha$-corrupted sample from a distribution $\cD$, where an adversary receives an i.i.d. sample from $\cD$, alters at most an $\alpha$-fraction of them arbitrarily, and outputs the corrupted sample. The following result is based on Theorem~4.35 in \cite{diakonikolas2019robust} and is our main tool for robust learning of the covariance matrix.

\begin{theorem}[\cite{diakonikolas2019robust}]
\TheoremName{covariance_nonprivate_robust}
There are constants $C_4 > 0, 0 < \alpha_0 < 1/2$ such that the following holds.
There exist an efficient algorithm that receives $0<\alpha<\alpha_0$, $\beta>0$, and $X_1, \ldots, X_m$ as input, and outputs $\what{\Sigma}$ with the following guarantee: if $X_1, \ldots, X_m$ are $\alpha$-corrupted samples from $\cN(0,\Sigma)$ for some $\Sigma\in \pd^d$, and if $m=\wtilde{\Omega}\left(\frac{d^2\log^5 1/\beta}{\alpha^2}\right)$, then $\left\|\Sigma^{1/2}\what{\Sigma}^{-1} \Sigma^{1/2} - I\right\|_F \leq C_4\alpha \ln(1/\alpha) < 1/9$.
\end{theorem}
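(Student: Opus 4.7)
The plan is to directly invoke Theorem 4.35 of \cite{diakonikolas2019robust}, which provides a polynomial-time robust covariance estimator under $\alpha$-contamination, and then translate the error guarantee from the natural distance notion used there to the one appearing in the statement. The standard output of that theorem bounds a quantity of the form $\|\Sigma^{-1/2}\what{\Sigma}\Sigma^{-1/2} - I\|_F$, whereas the form we need is $\|\Sigma^{1/2}\what{\Sigma}^{-1}\Sigma^{1/2} - I\|_F$, which is the Frobenius distance to the identity of the \emph{inverse} of the same rescaled matrix. The remainder is a routine matrix perturbation argument.

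First I would run the Diakonikolas--Kane robust covariance estimator on the $\alpha$-corrupted sample $X_1,\ldots,X_m$ with $m = \wtilde{\Omega}(d^2 \log^5(1/\beta)/\alpha^2)$. With probability $1-\beta$, this yields $\what{\Sigma} \in \pd^d$ such that
\[
  \|\Sigma^{-1/2}\what{\Sigma}\Sigma^{-1/2} - I\|_F \leq C'\alpha \ln(1/\alpha)
\]
for some universal constant $C' > 0$, valid whenever $\alpha$ is below some threshold $\alpha_0' < 1/2$. I would then shrink $\alpha_0 \leq \alpha_0'$ further so that $C'\alpha_0 \ln(1/\alpha_0) \leq 1/2$, to ensure the subsequent perturbation step is safe.

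Next I would perform the distance conversion. Set $M = \Sigma^{-1/2}\what{\Sigma}\Sigma^{-1/2}$, so that $M^{-1} = \Sigma^{1/2}\what{\Sigma}^{-1}\Sigma^{1/2}$. Using the identity $M^{-1} - I = -M^{-1}(M-I)$ and the submultiplicative bound $\|AB\|_F \leq \|A\| \cdot \|B\|_F$, we obtain
\[
  \|M^{-1} - I\|_F \;\leq\; \|M^{-1}\| \cdot \|M - I\|_F.
\]
Because $\|M-I\| \leq \|M-I\|_F \leq 1/2$ by the choice of $\alpha_0$, all eigenvalues of $M$ lie in $[1/2, 3/2]$, so $\|M^{-1}\| \leq 2$. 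Combining with the bound above gives $\|M^{-1} - I\|_F \leq 2C'\alpha\ln(1/\alpha)$, which establishes the claim with $C_4 = 2C'$. Finally, taking $\alpha_0$ possibly smaller guarantees $C_4\alpha \ln(1/\alpha) < 1/9$ as required.

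This argument is essentially a citation plus an elementary matrix-inversion perturbation bound, so there is no genuine obstacle; the only thing to verify carefully is that the Diakonikolas--Kane estimator indeed delivers a bound in the $\Sigma$-rescaled Frobenius norm with the stated sample complexity and error rate, which is exactly the content of Theorem 4.35 of \cite{diakonikolas2019robust}.
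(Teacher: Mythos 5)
The paper states this result as a black-box citation to Theorem~4.35 of~\cite{diakonikolas2019robust} without supplying a proof, and your proposal does exactly the same thing plus makes explicit the small conversion between the two $\Sigma$-rescaled Frobenius distances. Your perturbation step is correct: $M^{-1}-I = -M^{-1}(M-I)$, the bound $\|AB\|_F \leq \|A\|\,\|B\|_F$, and the eigenvalue containment $[1/2,3/2]$ giving $\|M^{-1}\|\le 2$ are all sound, so the conversion costs only a constant factor, and choosing $\alpha_0$ small enough makes $C_4\alpha\ln(1/\alpha) < 1/9$. This is the same approach as the paper, just with the routine translation spelled out rather than absorbed into the phrase ``based on Theorem~4.35.''
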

Our plan is use a very similar algorithm to \Algorithm{covariance}.
The key differences are that we use \Theorem{covariance_nonprivate_robust} for $\cA$ and a smaller value for $\eta$.
The proof of the following proposition is very similar to that of \Lemma{CovarianceAccuracy} and is omitted for brevity.

\begin{proposition}
    \PropositionName{CovarianceAccuracy_robust}
    Let $\Sigma \succ 0$ and $\alpha\in (0,1)$. Set $\eta = \frac{\alpha}{C_1(d + \sqrt{d\ln(4/\beta)})}$ for a sufficiently large (universal) constant $C_1 > 0$.
    Then the masking mechanism $\cB(\Sigma)$ defined in \Lemma{CovarianceTechnical} is $(\alpha/\sqrt{d}, \beta/2)$-concentrated.    
\end{proposition}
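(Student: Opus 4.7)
The plan is to mimic the proof of \Lemma{CovarianceAccuracy} almost verbatim, with the smaller value of $\eta$ pushing the operator-norm deviation from a constant $1/100$ down to the desired target $\alpha/\sqrt{d}$.

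Let $\wSigma = \cB(\Sigma) = \Sigma^{1/2}(I + \eta G)(I + \eta G)^\top \Sigma^{1/2}$. First I expand
\[
    \Sigma^{-1/2}\wSigma\Sigma^{-1/2} - I_d \;=\; \eta(G + G^\top) + \eta^2 G G^\top,
\]
so that by the triangle inequality and submultiplicativity,
\[
    \|\Sigma^{-1/2}\wSigma\Sigma^{-1/2} - I_d\| \;\leq\; 2\eta\|G\| + \eta^2 \|G\|^2.
\]
Next I invoke the standard Gaussian matrix spectral-norm bound (\Theorem{GaussianSpectralNorm}, as in the proof of \Lemma{CovarianceAccuracy}) to conclude that with probability $1-\beta/2$,
\[
    \|G\| \;\leq\; C'\bigl(\sqrt{d} + \sqrt{\ln(4/\beta)}\bigr)
\]
for a universal constant $C' > 0$.

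The key observation is the algebraic identity
\[
    d + \sqrt{d\ln(4/\beta)} \;=\; \sqrt{d}\,\bigl(\sqrt{d} + \sqrt{\ln(4/\beta)}\bigr),
\]
which is exactly the reason the denominator in the definition of $\eta$ was chosen to scale like $d$ rather than $\sqrt{d}$. Consequently,
\[
    2\eta\|G\| \;\leq\; \frac{2C'\alpha}{C_1\sqrt{d}}, \qquad \eta^2\|G\|^2 \;\leq\; \frac{(C')^2\alpha^2}{C_1^2\, d},
\]
and, since $\alpha \leq 1$, both terms are at most $\tfrac{\alpha}{4\sqrt{d}}$ provided $C_1$ is taken large enough relative to $C'$. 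Therefore
\[
    \|\Sigma^{-1/2}\wSigma\Sigma^{-1/2} - I_d\| \;\leq\; \frac{\alpha}{2\sqrt{d}}
\]
with probability at least $1-\beta/2$.

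To control the other direction of $\dist$, I apply \Lemma{SpectralBounds} exactly as in the proof of \Lemma{CovarianceAccuracy}: since $\alpha/(2\sqrt{d}) < 1/2$, that lemma converts the bound on $\|\Sigma^{-1/2}\wSigma\Sigma^{-1/2} - I_d\|$ into
\[
    \|\wSigma^{-1/2}\Sigma\wSigma^{-1/2} - I_d\| \;\leq\; \frac{\alpha}{\sqrt{d}}
\]
(up to the usual factor that comes from inverting $1\pm x$ near $x=0$; enlarging $C_1$ absorbs any constant). Combining the two inequalities via the definition of $\dist$ in \Equation{CovarianceDistance} yields $\dist(\cB(\Sigma),\Sigma) \leq \alpha/\sqrt{d}$ on the same event of probability $1-\beta/2$, establishing $(\alpha/\sqrt{d},\beta/2)$-concentration.

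There is no real obstacle here: the only thing to verify carefully is the bookkeeping of constants, and in particular that the single normalization change from $\eta \asymp 1/(\sqrt{d} + \sqrt{\ln(1/\beta)})$ (in \Lemma{CovarianceAccuracy}) to $\eta \asymp \alpha/(d + \sqrt{d\ln(1/\beta)})$ (here) scales both the linear and quadratic error terms by a factor of $\alpha/\sqrt{d}$, which is exactly what the proposition requires.
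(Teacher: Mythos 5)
Your proof is correct and is essentially the same argument the paper intends: the paper omits the proof of this proposition, stating only that it is ``very similar to that of \Lemma{CovarianceAccuracy},'' and your proposal fills in exactly that adaptation, with the crucial bookkeeping observation $d + \sqrt{d\ln(4/\beta)} = \sqrt{d}\bigl(\sqrt{d}+\sqrt{\ln(4/\beta)}\bigr)$ explaining why the revised $\eta$ rescales both error terms to $O(\alpha/\sqrt{d})$. The only care needed, which you correctly flag, is that \Lemma{SpectralBounds} costs a factor of $4$ on the second direction of $\dist$, so the first-direction target must be $\alpha/(4\sqrt{d})$ rather than $\alpha/(2\sqrt{d})$; enlarging $C_1$ absorbs this.
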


\begin{algorithm}
    \caption{A private and robust algorithm for learning the covariance matrix}
    \AlgorithmName{covariance_robust}
    \textbf{Input:} Dataset $D = (X_1, \ldots, X_m)$; parameters $\eps, \delta, \alpha, \beta \in (0,1)$

    \begin{algorithmic}[1]
        \LineComment{{\footnotesize Some parameter settings.}}
        \State $\phi \gets 1$, $t \gets 3/2$, $r \gets 1$
        \State $\eta \gets \frac{\alpha}{C_1(d + \sqrt{d\ln(4/\beta)})}$ \Comment{{\footnotesize $C_1$ from \Proposition{CovarianceAccuracy_robust}}}
        \State $\gamma \gets \min\left\{
                \sqrt{\frac{\eps}{2d(d+1/\eta^2)}},
                \frac{\eps}{8d \sqrt{\ln(2/\delta)}},
                \frac{\eps}{8\ln(2/\delta)},
                \frac{\eps \eta}{12 \sqrt{d} \sqrt{\ln(2/\delta)}}
            \right\}$. 
        \State $k \gets \max\left\{ \frac{400(r+\phi)}{\gamma}, \frac{20}{\eps} \ln\left( 1 + \frac{e^{\eps} - 1}{2\delta} \right) \right\}$

        \vspace{.3em}
        \Function{$\cA$}{$X_1, \ldots, X_s,\alpha, \beta$}
        \State $\wSigma = $ LearnCovariance$(\alpha, \beta/k, X_1, \ldots, X_s)$\Comment{{\footnotesize Algorithm from \cite{diakonikolas2019robust}}}
        \State \textbf{Return} $\wSigma$.
        \EndFunction
        \vspace{.3em}
        \Function{$\cB$}{$\wSigma$}
            \State Let $G$ be $d \times d$ matrix with independent $\cN(0,1)$ entries.
            \State \textbf{Return} $\wSigma^{1/2} (I + \eta G)(I + \eta G)\transpose \wSigma^{1/2}$.
        \EndFunction
        \vspace{.3em}

        \comment{
        \Function{LearnCovariance}{$D, \eps, \delta$}
        \State Let $s \gets \lfloor m / k \rfloor$.
        \State For $i \in [k]$, let $\wSigma_i \gets \cA(\{X_\ell\}_{\ell = (i-1)s+1}^{is})$.
        \State For $i \in [k]$, let $q_i \gets \frac{1}{k+3} |\{ j \in [k] \,:\, \dist(\wSigma_i, \wSigma_j) \leq \gamma / 4 \}|$.
        \Comment{{\footnotesize $\dist$ defined in \Equation{CovarianceDistance}.}}
        \State Let $Q \gets \frac{1}{k} \sum_{i \in [k]} q_i$.
        \State Let $Z \sim \TLap(2/k, \eps, \delta)$.
        \State Let $\what{Q} \gets Q + Z$.
        \State If $\widehat{Q} < 0.7 + \frac{2}{k\eps} \ln\left( 1 + \frac{e^{\eps} - 1}{2\delta} \right)$, fail and return $\perp$.
        \State Let $i \in \argmax_{j \in [k]} q_j$.
        \State Return $\cB(\wSigma_i)$.
        \EndFunction
        }
    \end{algorithmic}
\end{algorithm}

\begin{theorem}
    \TheoremName{covariance_spectral_robust}
    Applying \Lemma{new_reduction} with parameters specified in \Algorithm{covariance_robust} gives a mechanism $\cM$ satisfying the following.
    For all $m, d \in \bN$, $\eps, \delta, \alpha, \beta \in (0, 1)$, and dataset $D = (X_1, \ldots, X_m)$:
    \begin{enumerate}
        \item Given $\eps, \delta, \beta, \alpha, D$ as input, $\cM$ is $(2\eps, 4e^\eps \delta)$-DP and runs in $poly(m,d)$ time.
        \item If $D$ is an $\alpha$-corrupted i.i.d. sample of size $m$ from $\cN(0, \Sigma)$, for $\Sigma \succ 0$, $\alpha$ is sufficiently small, and $m = \wtilde{\Omega}\left( \frac{d^{3.5} \ln^{5.5}(1/\beta)\ln(1/\delta) }{\eps\alpha^3} \right)$,
        then with probability at least $1-\beta$, $\cM$ outputs a matrix $\wSigma$ such that $\|\wSigma^{-1/2} \Sigma \wSigma^{-1/2} - I_d\|_F = O(\alpha \ln(1/\alpha))$.
    \end{enumerate}
\end{theorem}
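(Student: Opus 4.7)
The plan is to mirror the proof of \Theorem{covariance_spectral}, swapping the non-robust ingredients for their robust counterparts and, crucially, tracking a Frobenius (rather than operator) norm bound at the end. Privacy is essentially free: the ambient convex semimetric space $(\cS^d,\dist)$ and the masking mechanism $\cB$ are identical to those used in the non-robust algorithm, so \Lemma{semimetric_cov} together with \Lemma{CovarianceTechnical} shows that $\cB$ is a $(\gamma,\eps,\delta)$-masking mechanism, and the choice $k\geq 400(r+\phi)/\gamma$ upgrades this to a $(400(r+\phi)/k,\eps,\delta)$-masking mechanism. Applying \Lemma{new_reduction} then yields $(2\eps,4e^{\eps}\delta)$-DP, and the running time is $\poly(m,d)$ because the robust covariance oracle from \Theorem{covariance_nonprivate_robust}, the masking step, and the $O(k^2)$ evaluations of $\dist$ are all polynomial.

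For utility, I first run \Theorem{covariance_nonprivate_robust} on each of the $k$ disjoint subsets of size $s=\lfloor m/k\rfloor$. With the claimed sample size $m=\wtilde{\Omega}(d^{3.5}\log^{5.5}(1/\beta)\log(1/\delta)/\eps\alpha^{3})$, one has $s=\wtilde{\Omega}(d^{2}\log^{5}(k/\beta)/\alpha^{2})$, so each $\wSigma_i$ satisfies $\|\Sigma^{1/2}\wSigma_i^{-1}\Sigma^{1/2}-I\|_F\leq C_4\alpha\ln(1/\alpha)<1/9$ with probability $1-\beta/(2k)$; a union bound yields all $k$ simultaneously with probability $1-\beta/2$. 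The Frobenius version of \Lemma{SpectralBounds} then gives $\|\Sigma^{-1/2}\wSigma_i\Sigma^{-1/2}-I\|_F=O(\alpha\ln(1/\alpha))$, which in particular implies $\dist(\wSigma_i,\Sigma)\ll r/t$ once $\alpha$ is small. Thus in \Algorithm{reduction} every pair $(\wSigma_i,\wSigma_j)$ lies within $r/t$ in spectral distance, every $q_i=1$, every $w_i=1$, and the preaveraged estimate is simply $\mu=\tfrac{1}{k}\sum_i\wSigma_i$. \Proposition{CovarianceAccuracy_robust} then supplies $(\alpha/\sqrt{d},\beta/2)$-concentration of $\cB$.

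The main obstacle is to pass from the spectral-distance conclusion of \Lemma{new_reduction} to the Frobenius statement in the theorem without losing a factor of $\sqrt d$. I handle this by carrying Frobenius bounds through the pipeline directly. Since $A\mapsto\|\Sigma^{-1/2}A\Sigma^{-1/2}-I\|_F$ is a seminorm of an affine function of $A$, the triangle inequality gives
\[
\|\Sigma^{-1/2}\mu\Sigma^{-1/2}-I\|_F\leq \tfrac{1}{k}\sum_{i=1}^k\|\Sigma^{-1/2}\wSigma_i\Sigma^{-1/2}-I\|_F = O(\alpha\ln(1/\alpha)).
\]
\Proposition{CovarianceAccuracy_robust} gives $\|\mu^{-1/2}\wSigma\mu^{-1/2}-I\|\leq\alpha/\sqrt d$, which in turn yields the Frobenius bound $\|\mu^{-1/2}\wSigma\mu^{-1/2}-I\|_F\leq\alpha$ via $\|\cdot\|_F\leq\sqrt d\,\|\cdot\|$. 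Combining the two by a triangle inequality in the $\Sigma^{-1/2}\cdot\Sigma^{-1/2}$-transformed Frobenius norm,
\[
\|\Sigma^{-1/2}\wSigma\Sigma^{-1/2}-I\|_F\leq \|\Sigma^{-1/2}\mu^{1/2}\|^{2}\cdot\|\mu^{-1/2}\wSigma\mu^{-1/2}-I\|_F + \|\Sigma^{-1/2}\mu\Sigma^{-1/2}-I\|_F,
\]
where $\|\Sigma^{-1/2}\mu^{1/2}\|^{2}=\|\Sigma^{-1/2}\mu\Sigma^{-1/2}\|\leq 1+O(\alpha\ln(1/\alpha))=O(1)$; this is $O(\alpha)+O(\alpha\ln(1/\alpha))=O(\alpha\ln(1/\alpha))$. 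A final invocation of the Frobenius version of \Lemma{SpectralBounds} flips this into the desired $\|\wSigma^{-1/2}\Sigma\wSigma^{-1/2}-I\|_F=O(\alpha\ln(1/\alpha))$, completing the proof.
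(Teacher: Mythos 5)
Your argument follows the paper's strategy up through the point where the weighted average collapses to $\mu=\frac1k\sum_i\wSigma_i$ and the masking mechanism is $(\alpha/\sqrt d,\beta/2)$-concentrated; the privacy and runtime parts are identical to the paper's. Where you diverge is the finish. The paper converts its two intermediate Frobenius bounds, $\|\Sigma^{-1/2}\wSigma_\mu\Sigma^{-1/2}-I\|_F=O(\alpha\ln(1/\alpha))$ and $\|\wSigma^{-1/2}\wSigma_\mu\wSigma^{-1/2}-I\|_F=O(\alpha)$, into total-variation bounds via \Fact{KLNormal} and Pinsker, and then uses the (genuine) triangle inequality for TV to conclude $\dtv{\cN(0,\Sigma)}{\cN(0,\wSigma)}=O(\alpha\ln(1/\alpha))$. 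You instead chain the Frobenius bounds directly, writing $\Sigma^{-1/2}\wSigma\Sigma^{-1/2}-I=\Sigma^{-1/2}\mu^{1/2}(\mu^{-1/2}\wSigma\mu^{-1/2}-I)\mu^{1/2}\Sigma^{-1/2}+(\Sigma^{-1/2}\mu\Sigma^{-1/2}-I)$ and using $\|ABC\|_F\le\|A\|\,\|B\|_F\,\|C\|$. Both routes are valid, but yours actually lands on the quantity the theorem asserts, $\|\wSigma^{-1/2}\Sigma\wSigma^{-1/2}-I_d\|_F$, whereas the paper's proof stops at a TV bound (a strictly weaker statement, since a small TV distance does not by itself give back a small Frobenius error). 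In that sense your ending is cleaner and more faithful to the theorem as stated.

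One caution: at two points you invoke a ``Frobenius version of \Lemma{SpectralBounds},'' once to pass from $\|\Sigma^{1/2}\wSigma_i^{-1}\Sigma^{1/2}-I\|_F$ (what \Theorem{covariance_nonprivate_robust} controls) to $\|\Sigma^{-1/2}\wSigma_i\Sigma^{-1/2}-I\|_F$, and once at the very end to flip $\|\Sigma^{-1/2}\wSigma\Sigma^{-1/2}-I\|_F$ into $\|\wSigma^{-1/2}\Sigma\wSigma^{-1/2}-I\|_F$. No such lemma appears in the paper; \Lemma{SpectralBounds} is stated only for operator norm. The claim you need is true and easy to prove: if $\|A^{-1/2}BA^{-1/2}-I\|_F\le\gamma\le1/2$, let $\lambda_1,\dots,\lambda_d\ge1-\gamma\ge1/2$ be the eigenvalues of $A^{-1/2}BA^{-1/2}$; since $B^{-1/2}AB^{-1/2}$ and $A^{1/2}B^{-1}A^{1/2}$ are symmetric, similar, and hence unitarily similar, one has $\|B^{-1/2}AB^{-1/2}-I\|_F^2=\sum_i(1/\lambda_i-1)^2=\sum_i(\lambda_i-1)^2/\lambda_i^2\le4\gamma^2$. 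You should include this short argument (or a citation for it), since otherwise there is a genuine gap in the chain. Note the paper itself silently performs the first of these two conversions, so the same omission is present there as well.

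Finally, a small bookkeeping point: the paper's proof explicitly remarks that it ``does not directly make use of the utility statement in \Lemma{new_reduction}.'' Your write-up reads as if utility follows from the black-box lemma plus the concentration of $\cB$; it would be clearer to say, as the paper does, that you are re-deriving utility by hand because the final bound you need is in Frobenius norm (and of a different order, $O(\alpha\ln(1/\alpha))$ versus $O(\alpha)$), not the spectral-distance bound that \Lemma{new_reduction} would hand you.
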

\begin{proof}
    The privacy claim follows from the exact same argument as in the proof of \Theorem{covariance_spectral} (still we have $400(r+\phi)/k \leq \gamma$). Also, the time complexity of the algorithm remains polynomial since the non-private robust covariance learning method runs in polynomial time.
    
    We now prove utility although the proof does not directly make use of the utility statement in \Lemma{new_reduction}.
    For $i \in [k]$, let $\wSigma_i$ be the candidate covariance matrix computed by $\cA$ with inputs $\{X_{\ell}\}_{\ell=(i-1)s+1}^{is}$.
    Provided that $s = \wtilde{\Omega} \left( \frac{d^2\log^5(k/\beta)}{\alpha^2} \right)$, we have that $\| \Sigma^{-1/2} \wSigma_i \Sigma^{-1/2} - I_d \|_F \leq C_4 \alpha \ln(1/\alpha) < 1/9$ for $i \in [k]$ with probability $1-\beta/2$.
    Note that a sufficient condition for this is that $m \geq k \cdot s = \wtilde{\Omega} \left( k \cdot \frac{d^2 \log^5(k / \beta)}{\alpha^2}  \right) = \wtilde{\Omega}\left( \frac{d^{3.5} \ln(1/\delta) \ln(1/\beta)^{5.5}}{\eps \alpha^3} \right)$.
    Next, using the trivial fact that Frobenius norm is an upper bound on spectral norm, we also have  $\| \Sigma^{-1/2} \wSigma_i \Sigma^{-1/2} - I_d \| < 1/9$.
    By \Lemma{SpectralBounds}, we also have $\|\wSigma_i^{-1/2} \Sigma^{1/2} \wSigma_i^{-1/2} - I_d \| < 4/9$.
    Thus $\dist(\Sigma, \wSigma_i) < 4/9$, where $\dist$ is as defined in \Equation{CovarianceDistance}.
    Now, using the fact that $\dist(\cdot, \cdot)$ satisfies a $3/2$-approximate triangle inequality, we have $\dist(\wSigma_i, \wSigma_j) < 2/3$ for all $i, j \in [k]$.
    In particular, since $k$ is sufficiently large, the algorithm does not fail with probability $1$ (conditioned on the event that $\dist(\wSigma_j, \wSigma_j) < 2/3$ for all $i, j \in [k]$.
    
    Now, let $\wSigma_{\mu}$ be the weighted average computed in \Line{WeightedAverage} of \Algorithm{reduction}.
    Note that if $\dist(\wSigma_i, \wSigma_j) < 2/3$ for all $i,j\in[k]$ then $\wSigma_{\mu} = \frac{1}{k} \sum_{i \in [k]} \wSigma_i$.
    It is easy to check that since $\| \Sigma^{-1/2} \wSigma_i \Sigma^{-1/2} - I_d \|_F = O(\alpha \ln(1/\alpha))$, we also have $\| \Sigma^{-1/2} \wSigma_{\mu} \Sigma^{-1/2} - I_d \|_F = O(\alpha \ln(1/\alpha))$.
    
    Next, note that by our choice of $\eta$ in \Algorithm{covariance_robust}, \Proposition{CovarianceAccuracy_robust} shows that $\cB$ is $(\alpha/\sqrt{d}, \beta/2)$-concentrated.
    In other words, if $\wSigma = \cB(\wSigma_{\mu})$ then $\|\wSigma^{-1/2} \wSigma_{\mu} \wSigma^{-1/2} - I_d \| = O(\alpha/\sqrt{d})$.
    In particular, $\|\wSigma^{-1/2} \wSigma_{\mu} \wSigma^{-1/2} - I_d \|_F = O(\alpha)$.
    
    We are now ready to complete the proof. We have that $\dtv{\cN(0, \Sigma)}{\cN(0, \wSigma_{\mu})} \leq O(\alpha \ln(1/\alpha))$ and $\dtv{\cN(0, \wSigma_{\mu})}{\cN(0, \wSigma)} \leq O(\alpha)$ by \Fact{KLNormal} and \Lemma{Pinsker}.
    By the triangle inequality, we conclude that $\dtv{\cN(0, \Sigma)}{\cN(0, \wSigma)} \leq O(\alpha \ln(1/\alpha))$.
\end{proof}
\section{Efficient and robust mean estimation}
\SectionName{mean_robust}


As another application of our general framework, we show how to efficiently and robustly learn the mean of a Gaussian distribution with identity covariance matrix. The following proposition indicates that every normed vector space induces a convex semimetric space, which means the Euclidean space is a convex semimetric space too.

\begin{proposition}
\PropositionName{semimetric_normed}
Every normed vector space equipped with its induced metric forms a convex semimetric space. In particular, it satisfies triangle inequality ($r=\infty$, $t=1$) and $0$-locality.
\end{proposition}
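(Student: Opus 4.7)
The plan is to verify each of the five conditions of \Definition{semimetric} directly from the properties of the norm, where we set $\dist(Y_1, Y_2) = \|Y_1 - Y_2\|$. Conditions 1 (non-negativity with $\dist(Y,Y) = 0$) and 2 (symmetry) are immediate from the definition of a norm. Condition 3 (triangle inequality) is also immediate: the norm triangle inequality gives $\|Y_1 - Y_3\| \leq \|Y_1 - Y_2\| + \|Y_2 - Y_3\|$, which is the $1$-approximate triangle inequality with $r = \infty$, hence $t = 1$.

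For condition 4 (convexity), I would use that $\alpha \in \Delta_k$ satisfies $\sum_i \alpha_i = 1$. Then writing $Y = \sum_i \alpha_i Y$ and applying the triangle inequality gives
\begin{equation*}
    \Bigl\|\sum_i \alpha_i Y_i - Y\Bigr\| = \Bigl\|\sum_i \alpha_i (Y_i - Y)\Bigr\| \leq \sum_i \alpha_i \|Y_i - Y\|,
\end{equation*}
which is exactly the convexity property.

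The main (and only slightly nontrivial) step is condition 5, where I would show $0$-locality. The key observation is that since $\alpha, \alpha' \in \Delta_k$ are both probability distributions, $\sum_i (\alpha_i - \alpha_i') = 0$. Therefore, for any fixed index $j \in [k]$, we may subtract $\bigl(\sum_i (\alpha_i - \alpha_i')\bigr) Y_j = 0$ inside the norm without changing its value:
\begin{equation*}
    \Bigl\|\sum_i \alpha_i Y_i - \sum_i \alpha_i' Y_i\Bigr\| = \Bigl\|\sum_i (\alpha_i - \alpha_i')(Y_i - Y_j)\Bigr\| \leq \sum_i |\alpha_i - \alpha_i'| \cdot \|Y_i - Y_j\|.
\end{equation*}
Choosing $j$ to maximize $\|Y_i - Y_j\|$ (or, more simply, bounding each $\|Y_i - Y_j\|$ by $\max_{i,j} \|Y_i - Y_j\|$) yields $\sum_i |\alpha_i - \alpha_i'| \cdot \max_{i,j} \dist(Y_i, Y_j)$, establishing $\phi$-locality with $\phi = 0$. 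I anticipate no real obstacle here; the only mild subtlety is remembering that the telescoping trick exploits the fact that the signed weights sum to zero, which is what allows us to pass from $Y_i$ to the differences $Y_i - Y_j$.
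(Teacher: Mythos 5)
Your proof is correct and follows the same approach as the paper: the first four conditions are straightforward from norm properties, and for $0$-locality both you and the paper use the telescoping trick of subtracting a fixed reference point (the paper uses $Y_1$; you use a general $Y_j$), which is valid precisely because $\sum_i(\alpha_i - \alpha_i') = 0$.
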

\begin{proof}
The first four properties of  \Definition{semimetric} hold for norm-induced metrics by standard arguments. Locality holds because
\begin{align*}
\left\|\sum_{i\in[k]} \alpha_i Y_i -  \sum_{i \in [k]}\alpha_i' Y_i \right\| =&
\left\|\sum_{i \in [k]} (\alpha_i-\alpha_i') (Y_i -  Y_1) \right\| \leq \\
&\sum_{i \in [k]} |\alpha_i-\alpha_i'|\left\| (Y_i-Y_1) \right\| \leq \sum_{i \in [k]} |\alpha_i-\alpha_i'|\max_{i,j}\|Y_i-Y_j \|,
\end{align*}
where in the first equality we used that $\sum_{i \in [k]} \alpha_i = \sum_{i \in [k]} \alpha_i' = 1$.
\end{proof}

The Gaussian mechanism is known to be a private mechanism for functions with low $\ell_2$-sensitivity. The following theorem presents this observation in the terminology of the masking mechanisms. 

\begin{lemma}
    \LemmaName{Gaussian_masking}
    For $\mu\in \bR^d$, define $\cB(\mu)=\mu + \eta X$ where $X\sim \cN\left(0, I\right)$ and $\eta=\frac{2\gamma\ln(1.25/\delta)}{\varepsilon}$. Then $\cB$ is a $(\gamma, \varepsilon, \delta)$-masking mechanism. Moreover, for all $\beta>0$, $\cB$ is $(\eta\sqrt{2d\ln(2/\beta)},\beta/2)$ concentrated.
\end{lemma}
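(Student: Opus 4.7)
The plan is to prove the two parts separately. For the masking property, let $\mu_1,\mu_2\in\bR^d$ with $\|\mu_1-\mu_2\|_2\leq \gamma$ and observe that $\cB(\mu_1)\sim \cN(\mu_1,\eta^2 I_d)$ and $\cB(\mu_2)\sim \cN(\mu_2,\eta^2 I_d)$. By \Lemma{privLossDP}, it suffices to bound the privacy loss random variable. A direct computation using \Claim{privLossNormal} with $\Sigma_1=\Sigma_2=\eta^2 I_d$ gives
\[
  \privLoss{\cN(\mu_1,\eta^2 I_d)}{\cN(\mu_2,\eta^2 I_d)}(Y)
  \;=\; \frac{\langle Y-\mu_1,\mu_1-\mu_2\rangle}{\eta^2} + \frac{\|\mu_1-\mu_2\|_2^2}{2\eta^2}.
\]
For $Y\sim\cN(\mu_1,\eta^2 I_d)$, the inner product $\langle Y-\mu_1,\mu_1-\mu_2\rangle$ is distributed as $\eta\|\mu_1-\mu_2\|_2\cdot Z$ with $Z\sim\cN(0,1)$, so the privacy loss is a one-dimensional Gaussian with mean $\|\mu_1-\mu_2\|_2^2/(2\eta^2)$ and standard deviation $\|\mu_1-\mu_2\|_2/\eta$. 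A Gaussian tail bound (\Lemma{gaussConcentration}) then gives that the privacy loss exceeds $\eps$ with probability at most $\delta$ as long as $\eta$ dominates the standard Gaussian-mechanism threshold $\gamma\sqrt{2\ln(1.25/\delta)}/\eps$. Since $\eta=2\gamma\ln(1.25/\delta)/\eps \geq \gamma\sqrt{2\ln(1.25/\delta)}/\eps$ for $\delta\leq 1$ (as $2\ln(1.25/\delta)\geq 1$), the bound holds, and by symmetry the same argument handles $\privLoss{\cN(\mu_2,\eta^2 I_d)}{\cN(\mu_1,\eta^2 I_d)}$. Applying \Lemma{privLossDP} concludes that $\cB(\mu_1),\cB(\mu_2)$ are $(\eps,\delta)$-indistinguishable.

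For the concentration bound, note that $\dist(\cB(\mu),\mu)=\eta\|X\|_2$ with $X\sim \cN(0,I_d)$. I apply \Lemma{chiSquareConcentration} with $A=I_d$ (so $\Tr A=d$, $\|A\|_F=\sqrt{d}$, $\|A\|=1$) and $x=\ln(2/\beta)$ to get
\[
  \|X\|_2^2 \;\leq\; d + 4\sqrt{d\,\ln(2/\beta)} + 4\ln(2/\beta)
\]
with probability at least $1-\beta/2$. A routine AM-GM step (e.g., $4\sqrt{d\,\ln(2/\beta)}\leq d+4\ln(2/\beta)$ combined with $d\leq d\ln(2/\beta)$ when $\ln(2/\beta)\geq 1$) shows the right-hand side is bounded by $2d\ln(2/\beta)$, so $\|X\|_2\leq \sqrt{2d\ln(2/\beta)}$, as required.

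The computations are standard so there is no real obstacle: the only item requiring care is matching the stated constant in $\eta=2\gamma\ln(1.25/\delta)/\eps$ against the usual Gaussian mechanism threshold $\gamma\sqrt{2\ln(1.25/\delta)}/\eps$, which holds with slack whenever $\ln(1.25/\delta)\geq 1/2$.
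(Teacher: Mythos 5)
Your privacy argument is fine and is essentially the textbook proof of the Gaussian mechanism (compute the privacy-loss random variable, note it is Gaussian with mean $c^2/2\eta^2$ and standard deviation $c/\eta$ with $c=\|\mu_1-\mu_2\|_2\le\gamma$, bound its tail, and feed the result to \Lemma{privLossDP}). The paper proves the same claim simply by citing Theorem A.1 of \cite{dwork2014algorithmic}, so you and the paper are taking the same route; your observation that $\eta = 2\gamma\ln(1.25/\delta)/\eps$ dominates the standard threshold $\gamma\sqrt{2\ln(1.25/\delta)}/\eps$ precisely when $\ln(1.25/\delta)\ge 1/2$ is a correct and sensible sanity check.

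The concentration half, however, has a genuine gap in the last step. You correctly invoke \Lemma{chiSquareConcentration} with $A=I_d$, $x=\ln(2/\beta)$ to obtain
\[
  \|X\|_2^2 \;\le\; d + 4\sqrt{d\ln(2/\beta)} + 4\ln(2/\beta)
\]
with probability at least $1-\beta/2$. But the asserted ``routine AM-GM step'' that this right-hand side is at most $2d\ln(2/\beta)$ is false. Writing $u=\ln(2/\beta)$, your chain $4\sqrt{du}\le d + 4u$ together with $d\le du$ only yields
\[
  d + 4\sqrt{du} + 4u \;\le\; 2d + 8u \;\le\; 2du + 8u,
\]
which overshoots $2du$ by an additive $8u$. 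More concretely, for $d=1$ and $u=1$ (i.e., $\beta=2/e$) the left-hand side equals $9$ while $2du=2$, and one can check the inequality also fails for $d$ as large as, say, $10$ when $u$ is close to $1$. In fact, the route through a Bernstein-type $\chi^2$ tail cannot prove the stated radius $\eta\sqrt{2d\ln(2/\beta)}$ for small $d$, because that route incurs a loss relative to the sharper one-sided Gaussian tail that makes the $d=1$ case work. The paper instead appeals to sub-Gaussian concentration of the Lipschitz function $X\mapsto\|X\|_2$ (the Hoeffding-type bound of \cite[Proposition 2.5]{Wainwright19}), which gives $\|X\|_2\le\mathbb{E}\|X\|_2 + t$ with failure probability $e^{-t^2/2}$ and $\mathbb{E}\|X\|_2\le\sqrt{d}$; this is the right tool here, though reconciling $\sqrt{d}+\sqrt{2\ln(2/\beta)}$ with $\sqrt{2d\ln(2/\beta)}$ still requires a little care rather than a one-line AM-GM. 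You should replace your final step with a bound derived from Lipschitz concentration of the norm, and if you want to keep the precise radius $\eta\sqrt{2d\ln(2/\beta)}$ you will need either to restrict the range of $d$ and $\beta$ or to use a sharper one-sided Gaussian/chi-squared tail.
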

\begin{proof}
The privacy result is standard and is identical to that of the Gaussian mechanism; see~Theorem A.1. in~ \cite{dwork2014algorithmic} for details.
The concentration result is also standard, and  follows directly from the Hoeffding's inequality (see~\cite[Proposition 2.5]{Wainwright19}).
\end{proof}

Now we have the ingredients to reduce the problem of privately learning the mean to the non-private one. The following result is about the polynomial learnability of the mean in the robust (non-private) setting and is a direct consequence of Theorem~1.1 in \cite{dong2019quantum}.
\begin{theorem}[\cite{dong2019quantum}]
\TheoremName{robust_mean_non_private}
There an absolute constant $C_3$ and an algorithm that receives $0<\alpha<1/2$, $\beta>0$, and $X_1, \ldots, X_m$ as input, runs in $poly(m, d)$, and outputs $\what{\mu}$ with the following guarantee: if $X_1, \ldots, X_m$ are $\alpha$-corrupted samples from $\cN(\mu,\Sigma)$ for some $(1/2)I_d \preceq \Sigma \preceq 2I_d$, and if $m={\Omega}\left(\frac{d + \log 1/\beta}{\alpha^2}\right)$, then $\left\|\mu-\what{\mu}\right\|_2\leq C_3\alpha \sqrt{\log(1/\alpha)}$.
\end{theorem}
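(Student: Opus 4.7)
The plan is to adapt the iterative filtering paradigm for robust mean estimation developed by Diakonikolas--Kamath--Kane--Li--Moitra--Stewart, but replace the expensive semidefinite programming or exact top-eigenvector computation with the matrix multiplicative weights (MMW) scheme from Dong--Hopkins--Li to achieve the $\poly(m,d)$ runtime. At a high level, let $S \subseteq [m]$ denote the current candidate set of ``good'' indices, initialized to $[m]$. On each round, compute the empirical mean $\bar X_S$ and the mean-centered empirical covariance $\hat{\Sigma}_S$. If $\|\hat{\Sigma}_S - I\|$ is below a threshold of order $\alpha \log(1/\alpha)$, halt and output $\bar X_S$. Otherwise, assign each point $X_i$ a score $\tau_i$ measuring how much it contributes to the large eigenvalues of $\hat\Sigma_S - I$, and delete each point with probability proportional to $\tau_i$.

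First I would verify the Gaussian concentration facts that drive the analysis. With $m = \Omega((d + \log(1/\beta))/\alpha^2)$, with probability $1 - \beta/2$ the uncorrupted sample $G$ satisfies $\|\bar X_G - \mu\|_2 \leq \alpha$ and $\|\hat\Sigma_G - \Sigma\| \leq \alpha$, so in particular $\hat\Sigma_G \preceq (2+\alpha) I_d$. Moreover, along any unit direction $v$, the uncorrupted scores $\{\langle v, X_i - \mu\rangle\}_{i \in G}$ are sub-Gaussian, so only an $\alpha$-fraction of the good points can lie further than $O(\sqrt{\log(1/\alpha)})$ from $\mu$ in any single direction. This bound on the ``worst direction'' of the good sample is exactly what yields the $\alpha\sqrt{\log(1/\alpha)}$ rate rather than the weaker $O(\alpha\sqrt{d})$ that a naive triangle inequality would give.

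The main technical obstacle is establishing the filtering invariant: each round of the removal step deletes corrupted points at a strictly higher rate than good points, so the algorithm terminates with most good points intact. Write $\hat\Sigma_S - I = M_G + M_B$, the contributions of surviving good and bad points. By the concentration above, $\|M_G\|$ stays $O(\alpha)$ throughout, so whenever $\|\hat\Sigma_S - I\| \gg \alpha$, $\|M_B\|$ must carry a constant fraction of the spectral norm, forcing the surviving corruptions to have disproportionately large scores $\tau_i$. Summing these $\tau_i$ over the corrupted set and comparing with the analogous sum over the good set gives a potential-function decrease that drives the iteration. The speed-up from \cite{dong2019quantum} replaces ``top eigenvector of $\hat\Sigma_S - I$'' by the matrix exponential $\exp(\lambda(\hat\Sigma_S - I))$, approximated via MMW, smoothing the scores and making each round runnable in $\wtilde O(md)$ time. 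When the loop halts we have $\|\hat\Sigma_S - I\| = O(\alpha \log(1/\alpha))$; combining this with the directional tail bound for the good sample and bounding the displacement of $\bar X_S$ contributed by the remaining outliers yields $\|\bar X_S - \mu\|_2 = O(\alpha\sqrt{\log(1/\alpha)})$, as required.
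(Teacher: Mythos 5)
The paper does not prove this statement: it explicitly treats it as an imported result, ``a direct consequence of Theorem~1.1 in \cite{dong2019quantum},'' and uses it as a black-box subroutine inside \Lemma{new_reduction}. So there is no in-paper proof to compare your sketch against; you are reproving a cited theorem.

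As an outline of the Dong--Hopkins--Li filtering-plus-MMW argument your sketch has the right shape, but it contains a gap that would make the algorithm as written fail to terminate. You stop when $\|\hat{\Sigma}_S - I\|$ falls below a threshold of order $\alpha\log(1/\alpha)$. That criterion is correct only when $\Sigma = I_d$ (or when $\Sigma$ is known and can be whitened out). Here $\Sigma$ is unknown and only assumed to satisfy $\tfrac{1}{2}I_d \preceq \Sigma \preceq 2I_d$: even with zero corruption the empirical covariance of the clean sample satisfies $\|\hat{\Sigma}_G - I_d\| = \Theta(1)$, so $\|\hat{\Sigma}_S - I_d\|$ never drops near $\alpha\log(1/\alpha)$ and the loop would run forever. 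What the analysis actually needs to certify is a bound on the \emph{excess} second moment attributable to outliers, something like $\hat{\Sigma}_S \preceq \Sigma + \lambda I$ with $\lambda = O(\alpha\log(1/\alpha))$; via the decomposition $\hat{\Sigma}_S^{\mu} = (1-w)\hat{\Sigma}_{G'}^{\mu} + w\hat{\Sigma}_B^{\mu}$ and Jensen one gets $w\|\bar{X}_B - \mu\|_2^2 \lesssim w\|\Sigma\| + \alpha\log(1/\alpha)$, hence the claimed $O(\alpha\sqrt{\log(1/\alpha)})$ error. Since $\Sigma$ itself is unknown this cannot be checked by eyeballing $\hat{\Sigma}_S - I$; the point of the matrix-exponential/MMW potential in \cite{dong2019quantum} is precisely to get a regret-type certificate that does not require knowing $\Sigma$. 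Separately, your sketch only asserts the filtering invariant (bad points are removed faster than good ones) rather than proving it, and that invariant is the technical core of the argument; a complete proof would need to carry it out, including the uniform-over-directions concentration for the good sample that your phrase ``in any single direction'' silently relies on.
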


Finally, we are ready to apply the \Lemma{new_reduction}.  \Algorithm{mean_robust} summarizes the parameters that we use for this reduction.

\begin{algorithm}
    \caption{A private and robust algorithm for learning the mean}
    \AlgorithmName{mean_robust}
    \textbf{Input:} Dataset $D = (X_1, \ldots, X_m)$; parameters $\eps, \delta, \alpha, \beta \in (0,1)$

    \begin{algorithmic}[1]
        \LineComment{{\footnotesize Some parameter settings.}}
        \State $t \gets 1, \phi \gets 0$  \Comment{{\footnotesize See \Proposition{semimetric_normed}.}}
        \State $r \gets C_3\alpha \log(1/\alpha)$
        \Comment{{\footnotesize $C_3$ is from \Theorem{robust_mean_non_private}.}}
        \State $\eta \gets \frac{\alpha}{\sqrt{2d\ln(2/\beta)}}$ 
        \State $\gamma \gets \frac{\eta \varepsilon}{2\ln(1.25/\delta)}$ 
        \State $k \gets \max\left\{ \frac{400r}{\gamma}, \frac{20}{\eps} \ln\left( 1 + \frac{e^{\eps} - 1}{2\delta} \right) \right\}$

        \vspace{.3em}
        \Function{$\cA$}{$X_1, \ldots, X_s,\alpha, \beta$}
        \State $\what{\mu} = $ QUEScoreFilter$(\alpha, \beta/k, X_1, \ldots, X_s)$\Comment{{\footnotesize Algorithm from \cite{dong2019quantum}}}
        \State \textbf{Return} $\what{\mu}$.
        \EndFunction
        \vspace{.3em}
        \Function{$\cB$}{$\what{\mu}$}
            \State Let $X\sim \cN(0, I_d)$
            \State \textbf{Return} $\what{\mu} + \eta X$.
        \EndFunction
        \vspace{.3em}

        \comment{
        \Function{LearnMean}{$D, \eps, \delta$}
        \State Let $s \gets \lfloor m / k \rfloor$.
        \State For $i \in [k]$, let $\wSigma_i \gets \cA(\{X_\ell\}_{\ell = (i-1)s+1}^{is})$.
        \State For $i \in [k]$, let $q_i \gets \frac{1}{k+3} |\{ j \in [k] \,:\, \dist(\wSigma_i, \wSigma_j) \leq \gamma / 4 \}|$.
        \Comment{{\footnotesize $\dist$ defined in \Equation{CovarianceDistance}.}}
        \State Let $Q \gets \frac{1}{k} \sum_{i \in [k]} q_i$.
        \State Let $Z \sim \TLap(2/k, \eps, \delta)$.
        \State Let $\what{Q} \gets Q + Z$.
        \State If $\widehat{Q} < 0.7 + \frac{2}{k\eps} \ln\left( 1 + \frac{e^{\eps} - 1}{2\delta} \right)$, fail and return $\perp$.
        \State Let $i \in \argmax_{j \in [k]} q_j$.
        \State Return $\cB(\wSigma_i)$.
        \EndFunction
        }
    \end{algorithmic}
\end{algorithm}

\begin{theorem}
    \TheoremName{mean_robust}
    Applying \Lemma{new_reduction} with parameters specified in \Algorithm{mean_robust} gives a mechanism $\cM$ satisfying the following.
    For all $m, d \in \bN$, $\eps, \delta, \alpha, \beta \in (0, 1)$, and dataset $D = (X_1, \ldots, X_m)$:
    \begin{enumerate}
        \item Given $\eps, \delta, \beta, \alpha, D$ as input, $\cM$ is $(2\eps, 4e^\eps \delta)$-DP and runs in $poly(m,d)$ time.
        \item If $D$ is an $\alpha$-corrupted i.i.d. sample of size $m$ from $\cN(\mu, \Sigma)$, for $(1/2) I \preceq \Sigma \preceq 2I$ , $\alpha\in (0,0.5)$, and $m = \wtilde{\Omega}\left( \frac{d^{3/2} \ln(1/\delta) }{\eps\alpha^2} \right)$,
            then with probability at least $1-\beta$, $\cM$ outputs a vector $\wtilde{\mu}$ such that $\|\wtilde{\mu}-\mu\|_2 =O(\alpha \sqrt{\ln(1/\alpha)})$.
    \end{enumerate}
\end{theorem}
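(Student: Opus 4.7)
The plan is to instantiate Lemma \ref{lem:new_reduction} with the parameters given in Algorithm \ref{alg:mean_robust}, working in the convex semimetric space $(\bR^d, \|\cdot\|_2)$ which, by Proposition \ref{prop:semimetric_normed}, satisfies the triangle inequality with $t=1$ and $\phi=0$. The non-private base algorithm $\cA$ is the robust mean estimator of Theorem \ref{thm:robust_mean_non_private}, and the masking mechanism $\cB$ is the Gaussian mechanism with noise scale $\eta$.

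For privacy, the plan is to verify the hypotheses of the privacy part of Lemma \ref{lem:new_reduction}. By Lemma \ref{lem:Gaussian_masking}, $\cB$ is a $(\gamma, \eps, \delta)$-masking mechanism for $\gamma = \eta\eps/(2\ln(1.25/\delta))$, which is exactly the value chosen in the algorithm. Since $k$ is set so that $k \geq 400(r+\phi)/\gamma$ and $\phi=0$, we have $400(r+\phi)/k \leq \gamma$, hence $\cB$ is also a $(400(r+\phi)/k, \eps, \delta)$-masking mechanism. The other requirement $k \geq (20/\eps)\ln(1 + (e^\eps-1)/(2\delta))$ is built into the definition of $k$, so Lemma \ref{lem:new_reduction} yields $(2\eps, 4e^\eps\delta)$-DP.

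For utility, the strategy is to identify $Y^\ast = \mu$ and bound the two error contributions $\alpha_1$ (from each non-private call) and $\alpha_2$ (from the masking step). Applying Theorem \ref{thm:robust_mean_non_private} with failure probability $\beta/(2k)$ to each of the $k$ disjoint blocks of size $s = \lfloor m/k\rfloor$, a union bound gives that every candidate $\what\mu_i$ satisfies $\|\what\mu_i - \mu\|_2 \leq C_3 \alpha\sqrt{\log(1/\alpha)}$ with probability $1-\beta/2$, provided $s = \Omega((d + \log(k/\beta))/\alpha^2)$. I set $\alpha_1 = C_3\alpha\sqrt{\log(1/\alpha)}$, and check the condition $\alpha_1 \leq r/t = r = C_3 \alpha\log(1/\alpha)$: this holds for $\alpha$ sufficiently small (concretely $\alpha \leq 1/e$, which can be absorbed into the ``$\alpha$ sufficiently small'' hypothesis). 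By Lemma \ref{lem:Gaussian_masking}, $\cB$ is $(\eta\sqrt{2d\ln(2/\beta)}, \beta/2)$-concentrated, and with the chosen $\eta = \alpha/\sqrt{2d\ln(2/\beta)}$ this is $(\alpha, \beta/2)$-concentrated, so I take $\alpha_2 = \alpha$. Invoking Lemma \ref{lem:new_reduction} then produces $\wtilde\mu$ with $\|\wtilde\mu - \mu\|_2 \leq \alpha_1 + \alpha_2 = O(\alpha\sqrt{\log(1/\alpha)})$ with probability $1-\beta$.

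It remains to pin down the sample complexity. Plugging in $r = C_3\alpha\log(1/\alpha)$, $\eta = \alpha/\sqrt{2d\ln(2/\beta)}$, and $\gamma = \eta\eps/(2\ln(1.25/\delta))$, a direct computation shows $400r/\gamma = \wtilde{O}(\sqrt{d}\,\ln(1/\delta)/\eps)$; the second term in the $\max$ defining $k$ is smaller, so $k = \wtilde{O}(\sqrt{d}\,\ln(1/\delta)/\eps)$. Combined with $s = \wtilde{O}(d/\alpha^2)$, the total requirement is $m \geq ks = \wtilde{O}(d^{3/2}\ln(1/\delta)/(\eps\alpha^2))$, matching the claim. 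The runtime is polynomial because $\cA$ and $\cB$ are polynomial and Lemma \ref{lem:new_reduction}'s overhead is $k^2$ distance computations plus a single call to $\cB$. The main ``obstacle'' is really just the bookkeeping: all conceptual work is done in Lemma \ref{lem:new_reduction}, Lemma \ref{lem:Gaussian_masking}, and Theorem \ref{thm:robust_mean_non_private}, and the only technical check specific to this instance is the inequality $\alpha_1 \leq r$, which is why $r$ carries a $\log(1/\alpha)$ rather than $\sqrt{\log(1/\alpha)}$ factor.
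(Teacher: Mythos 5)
Your proposal is correct and follows essentially the same route as the paper's own proof: instantiate Lemma~\ref{lem:new_reduction} on $(\bR^d,\|\cdot\|_2)$ with $\cA$ from Theorem~\ref{thm:robust_mean_non_private} and $\cB$ the Gaussian masking mechanism of Lemma~\ref{lem:Gaussian_masking}, verify the masking-parameter inequality $400(r+\phi)/k \leq \gamma$ for privacy, and feed $\alpha_1 = C_3\alpha\sqrt{\log(1/\alpha)}$, $\alpha_2 = \alpha$ into the utility clause. One small point in your favor: you keep $r = C_3\alpha\log(1/\alpha)$ as actually set in Algorithm~\ref{alg:mean_robust} and explicitly verify $\alpha_1\leq r$ for small $\alpha$, whereas the paper's proof quietly replaces $r$ with $C_3\alpha\sqrt{\log(1/\alpha)}$ (an internal inconsistency that is asymptotically harmless but which your version handles more faithfully).
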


\begin{proof}
    \Proposition{semimetric_normed} indicates that $(\bR^d, \|.\|_2)$ form a convex semimetric space with non-restricted triangle inequality and $0$-locality. We  take $r=C_3\alpha\sqrt{\log(1/\alpha)}<\infty$.
    By \Lemma{Gaussian_masking}, $\cB$ is $(\gamma, \eps, \delta)$-masking mechanism with $\gamma, \eta$ as defined in \Algorithm{mean_robust}.
    Our choice of $k$ in \Algorithm{mean_robust} means that $400r / k \leq \gamma$ so $\cB$ is also a $(400r/k, \eps, \delta)$-masking mechanism.
    Therefore, the privacy of the algorithm follows from \Lemma{new_reduction}. Also, the time complexity of the algorithm remains polynomial since the non-private robust mean estimation runs in polynomial time.
    
    \Lemma{Gaussian_masking} shows that $\cB$ is $(\eta\sqrt{2d\ln(2/\beta)}, \beta/2)$-concentrated. By our choice of $\eta$ in \Algorithm{mean_robust}, we conclude that $\cB$ is $(\alpha, \beta/2)$-concentrated.
    
    Let $\mu_i=\cA(\{X_\ell\}^{is}_{\ell = (i-1)s + 1}, \alpha, \beta)$ be the outputs of the non-private mean estimation method. Since $m = \wtilde{\Omega} \left(k \cdot \frac{d+\log(k/\beta)}{\alpha^2} \right)$, we conclude that $\|\mu_i - \mu\|_2=O(\alpha \sqrt{\log(1/\alpha)})$ for all $i \in [k]$ with probability $1-\beta/2$ by \Theorem{robust_mean_non_private} and a union bound over $k$.

    Finally, setting $Y_i = \mu_i$, $Y^* = \mu$, and $\what{Y} = \wtilde{\mu}$ for \Algorithm{reduction},  with the modules specified in \Algorithm{covariance_robust}.
    Then, taking $\alpha_1 = \alpha_2 = O(\alpha\sqrt{\log(1/\alpha)})$ in \Lemma{new_reduction} gives that $\dist(\wtilde{\mu}, \mu) =O(\alpha\sqrt{\log(1/\alpha)})$ with probability $1-\beta$. Moreover, it is enough for the number of samples to be $m=\wtilde{\Omega} \left(k\cdot \frac{d+\log(k/\beta)}{\alpha^2} \right)
    =\wtilde{\Omega}\left( \frac{d^{3/2} \ln(1/\delta) }{\eps\alpha^2} \right)$, which concludes the proof. 
\end{proof}

Finally, we can combine the mean estimation approach of \Theorem{mean_robust} with the covariance estimation of \Theorem{covariance_spectral_robust} to have a complete private algorithm for robust learning of Gaussians. The sample complexity bottleneck will be that of covariance estimation. The proof of the following corollary is very similar to that of \Subsection{complete_Gaussian_learning} and is omitted for brevity. 

\chris{Check, below should probably be $d^{3.5}$.}
\begin{corollary}
\CorollaryName{robust_gaussian}
There exist an $(O(\varepsilon), O(e^\eps\delta))$-DP algorithm that receives $0<\alpha<1/2$, $\beta>0$, and $X_1, \ldots, X_m$ as input, and outputs $\what{\Sigma}, \what{\mu}$ with the following guarantee: if $X_1, \ldots, X_m$ are $\alpha$-corrupted samples from $\cN(\mu,\Sigma)$ for $\Sigma\in \pd^d,\mu \in \bR^d$ then $d_{TV}(\cN(\mu, \Sigma),\cN(\what{\mu}, \what{\Sigma}))=O(\alpha \log(1/\alpha))$ as long as $m={\Omega}\left(\frac{d^{4.5}\ln(1/\delta)\ln^5(d/\varepsilon\alpha\beta)}{\varepsilon\alpha^3}\right)$.
\end{corollary}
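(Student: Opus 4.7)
The plan is to mimic the strategy laid out in \Subsection{complete_Gaussian_learning}, substituting the robust variants of the covariance and mean subroutines in place of their non-robust counterparts. Because $\Sigma$ is assumed to be full-rank, the subspace-learning step of that subsection is unnecessary. More importantly, unlike the non-robust case, the robust covariance routine (\Theorem{covariance_spectral_robust}) already delivers a matrix $\wSigma$ with $\|\wSigma^{-1/2}\Sigma\wSigma^{-1/2} - I_d\|_F = O(\alpha\log(1/\alpha))$ directly in Frobenius norm, so no ``fine'' covariance-estimation stage after preconditioning is needed. The whole algorithm therefore reduces to two pieces: a robust covariance estimator, followed by a robust mean estimator on preconditioned samples.

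Concretely, I would split the dataset into two disjoint batches of sizes $m_1$ and $m_2$. On the first batch I would run \Algorithm{covariance_robust} to obtain $\wSigma$ satisfying the above Frobenius bound with probability $1-\beta/2$; by \Theorem{covariance_spectral_robust} it suffices to take $m_1 = \wtilde{\Omega}\big(d^{3.5}\ln(1/\delta)/(\eps\alpha^3)\big)$. For small enough $\alpha$, this Frobenius bound in particular implies $\tfrac{1}{2} I_d \preceq \wSigma^{-1/2}\Sigma\wSigma^{-1/2} \preceq 2 I_d$, which is exactly the well-conditioning hypothesis required by \Theorem{mean_robust}. On the second batch I would apply the linear preconditioning $Y_i = \wSigma^{-1/2} X_{m_1 + i}$; since $\wSigma^{-1/2}$ is a fixed invertible map, the adversarial corruption survives, so the $Y_i$ form an $\alpha$-corrupted sample from $\cN(\wSigma^{-1/2}\mu,\, \wSigma^{-1/2}\Sigma\wSigma^{-1/2})$. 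Feeding these into \Algorithm{mean_robust} with confidence $\beta/2$ yields $\wmu_Y$ with $\|\wmu_Y - \wSigma^{-1/2}\mu\|_2 = O(\alpha\sqrt{\log(1/\alpha)})$ provided $m_2 = \wtilde{\Omega}\big(d^{3/2}\ln(1/\delta)/(\eps\alpha^2)\big)$, and I would return $\wmu = \wSigma^{1/2}\wmu_Y$ along with $\wSigma$.

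For the utility, I would use the triangle inequality
\[
\dtv{\cN(\mu,\Sigma)}{\cN(\wmu,\wSigma)} \leq \dtv{\cN(\mu,\Sigma)}{\cN(\mu,\wSigma)} + \dtv{\cN(\mu,\wSigma)}{\cN(\wmu,\wSigma)}.
\]
For the first summand, \Fact{KLNormal} combined with \Lemma{Pinsker} bounds the TV distance by the Frobenius norm of $\wSigma^{-1/2}\Sigma\wSigma^{-1/2} - I_d$, giving $O(\alpha\log(1/\alpha))$. For the second summand, I would change variables via $x \mapsto \wSigma^{-1/2}x$ (which preserves TV) and apply \Fact{KLNormal} plus \Lemma{Pinsker} once more; the well-conditioning of $\wSigma^{-1/2}\Sigma\wSigma^{-1/2}$ then controls this term by $\|\wSigma^{-1/2}\mu - \wmu_Y\|_2 = O(\alpha\sqrt{\log(1/\alpha)})$. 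Summing the two contributions yields the claimed $O(\alpha\log(1/\alpha))$ TV guarantee. Privacy is immediate from basic composition of two $(2\eps, 4e^\eps\delta)$-DP routines, and polynomial runtime is inherited from both subroutines.

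The one genuine subtlety to verify is that adversarial $\alpha$-corruption is preserved under the data-dependent preconditioner $\wSigma^{-1/2}$; this is where the disjoint split matters, because $\wSigma$ is computed from the first batch only, so on the second batch the map is effectively a \emph{fixed} linear transformation from the adversary's point of view. Given that, everything else is a fairly routine triangle-inequality/Pinsker calculation, and the total sample complexity is dominated by the covariance step, giving a bound of $\wtilde{\Omega}(d^{3.5}\ln(1/\delta)/(\eps\alpha^3))$ --- which is in fact stronger than the $d^{4.5}$ exponent written in the statement (consistent with the adjacent editorial comment suggesting that the exponent should read $d^{3.5}$).
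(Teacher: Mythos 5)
Your approach matches what the paper intends (the paper omits the proof and refers to \Subsection{complete_Gaussian_learning}): split into disjoint batches, run the robust covariance estimator to get a preconditioner, transform the remaining samples, run the robust mean estimator, and combine via triangle inequality and Pinsker. You correctly observe that the subspace step is unnecessary (since $\Sigma \succ 0$ is assumed) and that no ``fine-grained'' covariance step is needed (since \Theorem{covariance_spectral_robust} already controls the Frobenius norm). Your observation that the exponent should read $d^{3.5}$ rather than $d^{4.5}$ is also consistent with the editorial comment adjacent to the statement. The privacy and utility bookkeeping are correct (basic composition suffices, though you could even do better since the batches are disjoint), and your point about the preconditioner being a fixed linear map from the adversary's perspective is the right way to justify that the second batch remains an $\alpha$-corrupted sample.

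The one genuine gap is that \Theorem{covariance_spectral_robust} (and the underlying \Theorem{covariance_nonprivate_robust}, which is Theorem~4.35 of \cite{diakonikolas2019robust}) is stated for $\alpha$-corrupted samples from $\cN(0, \Sigma)$, but in the corollary $\mu$ is unknown and arbitrary. You cannot feed the raw batch directly into \Algorithm{covariance_robust}. As in steps 1--2 of \Subsection{complete_Gaussian_learning}, you must first form the differenced samples $Z_i = (X_{2i-1} - X_{2i})/\sqrt{2}$, which are i.i.d.\ $\cN(0,\Sigma)$ in the uncorrupted case. Note that since each corrupted $X_j$ can spoil at most one pair, the resulting $Z_i$'s are at worst $2\alpha$-corrupted, which only affects constants and leaves the $O(\alpha\ln(1/\alpha))$ guarantee intact. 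Once this pairing step is inserted before the covariance estimation, the rest of your argument goes through as written.
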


\begin{remark}
Unlike \Theorem{full_gaussian} for the non-robust case, here the utility guarantee is conditioned on the assumption that $\Sigma \succ 0$. In order to address this, one needs to come up with either a variant of the robust covariance estimation approach of \cite{diakonikolas2019robust} which works for singular covariance matrices, or an efficient and robust subspace recovery method which can be a hard task \cite{hardt2013algorithms}.
\end{remark}

\begin{remark}
The sample complexities of \Theorem{covariance_spectral_robust} (and therefore \Corollary{robust_gaussian}) as well as \Theorem{mean_robust} are not optimal. It seems possible to improve these sample complexities by a more careful analysis. We leave this as a future research direction.
\end{remark}

\bibliographystyle{plain}
\bibliography{refs}

\appendix
\section{Standard Facts}
\AppendixName{StandardFacts}
\begin{claim}
    \ClaimName{ratio-ineq}
    If $x \leq 1/2$ then $\frac{1+x}{1-x} \leq 1+4x$.
\end{claim}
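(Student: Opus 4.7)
The plan is to reduce the inequality to a simple polynomial nonnegativity statement by cross-multiplying. Since the claim is applied (implicitly) in regimes where $x \ge 0$ and we need $1-x$ positive, I would first observe that the hypothesis $x \le 1/2$ guarantees $1-x \ge 1/2 > 0$, so we are allowed to multiply the target inequality through by $1-x$ without flipping its direction.

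Multiplying $\frac{1+x}{1-x} \le 1+4x$ by $1-x$, the inequality becomes $1+x \le (1+4x)(1-x)$. Expanding the right-hand side gives $(1+4x)(1-x) = 1 + 3x - 4x^2$, so the desired inequality is equivalent to $1+x \le 1 + 3x - 4x^2$, i.e.\ $0 \le 2x - 4x^2 = 2x(1-2x)$. The factorization makes it transparent: under the standing assumption $0 \le x \le 1/2$, both factors $2x$ and $1-2x$ are nonnegative, so the product is nonnegative and the inequality holds.

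There is essentially no obstacle here; this is a two-line algebraic verification. The only point worth flagging is the implicit nonnegativity assumption $x \ge 0$ (since for $x < 0$ the stated inequality can fail, as seen by $x = -1/2$ where the left-hand side is $1/3$ while the right-hand side is $-1$). In the places where \Claim{ratio-ineq} is invoked, the quantity $x$ will be a nonnegative error parameter, so this is consistent with the intended usage, and I would make that assumption explicit (or note it) when writing out the one-paragraph proof.
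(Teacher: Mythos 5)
Your proof is correct, but it takes a different route from the paper's. The paper sets $f(x) = 1+4x - \frac{1+x}{1-x}$, computes $f''$ to show $f$ is concave on $[0,1)$, and then uses $f(0) = f(1/2) = 0$ together with concavity to conclude $f \ge 0$ on $[0, 1/2]$. You instead clear the (positive) denominator and reduce the claim to $0 \le 2x(1-2x)$, which is immediate on $[0,1/2]$. The two arguments are of comparable length, but yours is purely algebraic and avoids differentiation entirely, so it is arguably the more elementary option; it also makes the factorization transparent, which is what actually drives the inequality. Your remark that the claim requires $x \ge 0$ (and fails for negative $x$, e.g.\ $x=-1/2$) is a legitimate observation: the paper's own proof also only establishes $f \ge 0$ on $[0,1/2]$ despite the claim's hypothesis being stated as $x \le 1/2$, and the claim is only invoked with nonnegative arguments, so this is a harmless imprecision in the statement rather than a flaw in either proof.
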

\begin{proof}
    Let $f(x) = 1+4x - \frac{1+x}{1-x}$.
    Then $f''(x) = -\frac{2(1+x)}{(1-x)^3} - \frac{2}{(1-x)^2}$.
    So $f$ is concave on the interval $[0,1)$.
    Since $f(0) = f(1/2) = 0$, we have that $f(x) \geq 0$ for $x \in [0, 1/2]$ as desired.
\end{proof}
\begin{claim}
    \ClaimName{ratio-ineq2}
    For all $x \in \bR$, $\frac{1-x}{1+x} \geq 1-2x$.
\end{claim}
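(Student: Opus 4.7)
The plan is to prove the inequality by moving all terms to one side, clearing the denominator, and observing that the resulting expression is a manifest square. Implicit in the statement (since the left-hand side is otherwise undefined or the inequality flips sign) is the restriction $x > -1$; I will carry the proof out under that assumption, which is how the claim will be applied in context.

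First I would form the difference
\[
    \frac{1-x}{1+x} - (1 - 2x)
    = \frac{(1-x) - (1-2x)(1+x)}{1+x}.
\]
Expanding $(1-2x)(1+x) = 1 - x - 2x^2$, the numerator simplifies to $(1-x) - (1 - x - 2x^2) = 2x^2$. So the difference equals $\frac{2x^2}{1+x}$.

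Since $2x^2 \geq 0$ always, and the denominator $1+x$ is positive whenever $x > -1$, the ratio $\frac{2x^2}{1+x}$ is nonnegative. This gives $\frac{1-x}{1+x} \geq 1-2x$ as claimed.

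There is essentially no obstacle: the only care needed is to note the sign of $1+x$ so that the inequality is preserved when clearing the denominator, and to recognize the numerator as a square. The proof is a one-line algebraic manipulation analogous to (and slightly simpler than) the preceding \Claim{ratio-ineq}, with the factorization $2x^2$ playing the role of the nonnegativity witness.
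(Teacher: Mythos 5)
Your proof is correct. You form the difference, clear the denominator over $1+x$, and exhibit the numerator as the manifest square $2x^2$; this yields the inequality for all $x > -1$. The paper takes a slightly different route: it writes $\frac{1-x}{1+x} \geq (1-x)^2 = 1-2x+x^2 \geq 1-2x$, where the first step amounts to observing $\frac{1}{1+x} \geq 1-x$ (equivalently $1 \geq 1-x^2$). That first step implicitly needs $1-x \geq 0$ as well, so the paper's chain is valid only on $-1 < x \leq 1$, whereas your direct computation $\frac{1-x}{1+x} - (1-2x) = \frac{2x^2}{1+x}$ works on all of $x > -1$. Both arguments reduce to $x^2 \geq 0$ and are equally short; yours is marginally more robust in its domain and, as you note, makes the required sign restriction on $1+x$ explicit (the claim as literally stated with ``for all $x \in \bR$'' is false for $x < -1$, so the intended reading is indeed $x > -1$, which is how it is used).
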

\begin{proof}
    We have $\frac{1-x}{1+x} \geq (1-x)^2 = 1-2x+x^2 \geq 1-2x$.
\end{proof}

\begin{lemma}[\protect{\cite[Lemma 1]{LM00}}]
    \LemmaName{ChiSquareBound}
    Let $g_1, \ldots, g_k$ be i.i.d.~$\cN(0, 1)$ random variables.
    Then
    \[
        \prob{\sum_{i=1}^k g_i^2 - k \geq 2 \sqrt{k t} + 2t } \leq e^{-t}.
    \]
\end{lemma}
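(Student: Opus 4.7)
The final statement (\Lemma{ChiSquareBound}) is a standard chi-square concentration inequality, attributed to Laurent--Massart. My plan is to prove it by the Chernoff--Cramér method applied to the moment generating function of a chi-square random variable.

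The plan is to start with the Chernoff bound: for any $\lambda \in (0, 1/2)$,
\[
    \prob{\textstyle \sum_{i=1}^k g_i^2 - k \geq s} \leq e^{-\lambda s} \cdot \expect{e^{\lambda(\sum_{i=1}^k g_i^2 - k)}}.
\]
Since each $g_i \sim \cN(0,1)$, a direct computation gives $\expects{}{e^{\lambda g_i^2}} = (1-2\lambda)^{-1/2}$ for $\lambda < 1/2$, and by independence the centered MGF equals $e^{-\lambda k}(1-2\lambda)^{-k/2}$. Taking logs,
\[
    \ln \expect{e^{\lambda(\sum g_i^2 - k)}} = -\lambda k - \tfrac{k}{2}\ln(1-2\lambda).
\]

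Next I would use the Taylor series $-\ln(1-u) = \sum_{j \geq 1} u^j/j$ to write
\[
    -u - \ln(1-u) = \sum_{j \geq 2} \frac{u^j}{j} \leq \frac{1}{2}\sum_{j \geq 2} u^j = \frac{u^2}{2(1-u)},
\]
valid for $u \in [0,1)$. Substituting $u = 2\lambda$ yields the clean bound
\[
    \ln \expect{e^{\lambda(\sum g_i^2 - k)}} \leq \frac{k\lambda^2}{1-2\lambda}.
\]
Thus the Chernoff bound becomes $\exp\bigl(-\lambda s + k\lambda^2/(1-2\lambda)\bigr)$.

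The final step, which is the only mildly delicate one, is to choose $\lambda$ to exhibit the specific tail $s = 2\sqrt{kt} + 2t$. I would set $\lambda = \frac{1}{2}\cdot\frac{\sqrt{t/k}}{1 + \sqrt{t/k}}$ (equivalently, $\frac{1}{1-2\lambda} = 1 + \sqrt{t/k}$). A direct calculation then gives
\[
    -\lambda s + \frac{k\lambda^2}{1-2\lambda} = -\lambda\bigl(2\sqrt{kt} + 2t\bigr) + k\lambda^2\bigl(1 + \sqrt{t/k}\bigr) = -t,
\]
so the Chernoff bound yields $e^{-t}$, as required. The main obstacle is purely algebraic: verifying that the prescribed $\lambda$ cancels the cross terms to leave exactly $-t$; once the MGF bound $k\lambda^2/(1-2\lambda)$ is in hand, the rest is routine computation.
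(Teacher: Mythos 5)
This lemma is not proved in the paper --- it is quoted directly from Laurent--Massart \cite{LM00}, so there is no paper proof to compare against. Your overall strategy (Chernoff bound, closed-form chi-square MGF, the inequality $-u-\ln(1-u)\leq u^2/(2(1-u))$, then a careful choice of $\lambda$) is exactly the Laurent--Massart argument and is the right way to prove this.

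However, there is an arithmetic error in the final step. Write $a=\sqrt{t/k}$, so that $s = 2\sqrt{kt}+2t = 2ka(1+a)$. Your choice $\lambda = \tfrac{a}{2(1+a)}$ (i.e.\ $\tfrac{1}{1-2\lambda}=1+a$) gives
\[
    \lambda s = \frac{a}{2(1+a)}\cdot 2ka(1+a) = ka^2 = t,
    \qquad
    \frac{k\lambda^2}{1-2\lambda} = \frac{ka^2}{4(1+a)} = \frac{t}{4(1+a)} > 0,
\]
so the exponent is $-t + \tfrac{t}{4(1+a)} > -t$ and the Chernoff bound evaluates to something strictly larger than $e^{-t}$; the claimed cancellation to $-t$ does not occur. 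To make the calculation close, you need instead to minimize $-\lambda s + k\lambda^2/(1-2\lambda)$ over $\lambda\in(0,1/2)$. Setting $u=1-2\lambda$, the optimum is $u^*=(1+2s/k)^{-1/2}$, and since $1+2s/k = 1+4a+4a^2 = (1+2a)^2$ this gives $u^* = \tfrac{1}{1+2a}$, i.e.\ the correct choice is
\[
    \lambda = \frac{\sqrt{t/k}}{1+2\sqrt{t/k}}
    \quad\text{equivalently}\quad
    \frac{1}{1-2\lambda} = 1 + 2\sqrt{t/k}.
\]
With this $\lambda$ one checks $\lambda s = \tfrac{2ka^2(1+a)}{1+2a}$ and $\tfrac{k\lambda^2}{1-2\lambda}=\tfrac{ka^2}{1+2a}$, whose difference is exactly $ka^2 = t$, yielding the bound $e^{-t}$. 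So the structure of your proof is sound; only the explicit value of $\lambda$ (and hence the final ``direct calculation'') needs to be corrected.
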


\begin{lemma}[\protect{\cite[Exercise~4.7.3]{Ver18}}]
    \LemmaName{covarEstimation}
    There is an absolute constant $C > 0$ such that the following holds.
    Let $\Sigma \succ 0$ and $\beta > 0$.
    Let $X_1, \ldots, X_m \sim \cN(0, \Sigma)$ and $\what{\Sigma} = \frac{1}{m} \sum_{i=1}^m X_i X_i\transpose$.
    Then $\|\Sigma^{-1/2} \wSigma \Sigma^{-1/2} - I_d \| \leq C\left( \sqrt{\frac{d + \ln(2/\beta)}{m}} + \frac{d + \ln(2/\beta)}{m} \right)$ with probability $1-\beta$.
\end{lemma}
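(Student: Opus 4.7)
The plan is to reduce to the isotropic case and then combine a chi-squared tail bound with a covering argument on the unit sphere. First, I would set $Y_i = \Sigma^{-1/2} X_i$, so that $Y_1, \ldots, Y_m$ are i.i.d.~$\cN(0, I_d)$ and
\[
    M \coloneqq \Sigma^{-1/2} \wSigma \Sigma^{-1/2} - I_d = \frac{1}{m}\sum_{i=1}^m Y_i Y_i\transpose - I_d.
\]
Since $M$ is symmetric, $\|M\| = \sup_{v \in S^{d-1}} |v\transpose M v|$, so the task reduces to uniformly controlling the quadratic form $v \mapsto \frac{1}{m}\sum_i (v\transpose Y_i)^2 - 1$ over the unit sphere.

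Next, I would invoke a standard $1/4$-net argument: there is a $1/4$-net $\cN \subseteq S^{d-1}$ with $|\cN| \leq 9^d$, and a routine approximation lemma yields $\|M\| \leq 2 \max_{v \in \cN} |v\transpose M v|$. For a fixed $v \in S^{d-1}$, the random variables $v\transpose Y_i$ are i.i.d.~$\cN(0,1)$, so $\sum_{i=1}^m (v\transpose Y_i)^2$ is chi-squared with $m$ degrees of freedom. Applying \Lemma{ChiSquareBound} (and its matching lower-tail variant from the same reference) gives, for any $t > 0$,
\[
    \prob{\left| \tfrac{1}{m}\textstyle\sum_{i=1}^m (v\transpose Y_i)^2 - 1 \right| \geq 2\sqrt{t/m} + 2t/m} \leq 2e^{-t}.
\]

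To conclude, I would union bound this tail estimate over $v \in \cN$ with the choice $t = d \ln 9 + \ln(4/\beta)$, so that the total failure probability is at most $9^d \cdot 2 e^{-t} \leq \beta$. On the complementary event, the net approximation yields
\[
    \|M\| \leq 2\left(2\sqrt{t/m} + 2t/m\right) \leq C\!\left(\sqrt{\frac{d + \ln(2/\beta)}{m}} + \frac{d + \ln(2/\beta)}{m}\right),
\]
for a suitable absolute constant $C > 0$, which is exactly the claimed bound since $t = \Theta(d + \ln(1/\beta))$.

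All of the individual ingredients (the reduction to isotropy, the sphere net, and chi-squared concentration) are standard, and no step is genuinely difficult; the only thing that requires attention is bookkeeping — making sure the constant $C$ absorbs both the factor of $2$ from the $1/4$-net approximation and the factor $\ln 9$ in the exponent, and combining the sub-Gaussian ($\sqrt{t/m}$) and sub-exponential ($t/m$) contributions from the chi-squared tail into the advertised Bernstein-style form.
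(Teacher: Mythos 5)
Your proof is correct. The paper itself gives no proof of this lemma, citing it directly to \cite[Exercise~4.7.3]{Ver18}, and your argument (reduction to the isotropic case, a $1/4$-net of the sphere with the factor-$2$ approximation for symmetric matrices, chi-squared tails as in \Lemma{ChiSquareBound} plus a union bound with $t = d\ln 9 + \ln(4/\beta)$) is exactly the standard route behind that exercise; the only cosmetic difference from Vershynin's suggested proof is that you use exact Laurent--Massart chi-squared tails in place of the sub-exponential Bernstein inequality, which changes nothing of substance.
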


\begin{theorem}[\protect{\cite[Theorem 4.4.5]{Ver18}}]
    \TheoremName{GaussianSpectralNorm}
    There is a universal constant $C > 0$ such that the following holds.
    Let $G$ be a $d\times d$ random matrix where each entry is an independent $\cN(0, 1)$ random variable.
    Then $\|G\| \leq C(\sqrt{d} + \sqrt{\ln(2/\beta)})$ with probability $1-\beta$.
\end{theorem}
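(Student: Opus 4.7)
The plan is to prove this via the standard $\eps$-net argument, which is the textbook approach for controlling the operator norm of a Gaussian matrix. Recall that $\|G\| = \sup_{x, y \in S^{d-1}} x\transpose G y$, where $S^{d-1}$ denotes the unit sphere in $\bR^d$. The key observation is that for any fixed unit vectors $x, y \in S^{d-1}$, the scalar $x\transpose G y$ is a linear combination of independent $\cN(0,1)$ entries with coefficient vector of $\ell_2$ norm $\|x\|_2 \cdot \|y\|_2 = 1$, so $x\transpose G y \sim \cN(0,1)$. In particular, by \Lemma{gaussConcentration} (already available in the paper), $\prob{x\transpose G y > t} \leq e^{-t^2/2}$.

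First I would fix a $(1/4)$-net $\cN \subseteq S^{d-1}$ of the unit sphere with $|\cN| \leq 9^d$ (this cardinality bound is standard via a volumetric argument, e.g.\ the corresponding covering number bound in Vershynin). Then a standard net-to-sphere approximation argument shows that
\begin{equation*}
\|G\| = \sup_{x,y \in S^{d-1}} x\transpose G y \leq 2 \cdot \max_{x, y \in \cN} x\transpose G y.
\end{equation*}
Indeed, for any $x \in S^{d-1}$ one can find $x' \in \cN$ with $\|x - x'\| \leq 1/4$, and similarly $y'$ for $y$; expanding $x\transpose G y = x'\transpose G y' + (x - x')\transpose G y + x'\transpose G (y - y')$ and iterating yields the factor $2$ (the factor comes from the geometric series $\sum_k (1/2)^k$).

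Next I would apply a union bound over $|\cN|^2 \leq 81^d$ pairs combined with the Gaussian tail bound:
\begin{equation*}
\prob{\max_{x,y \in \cN} x\transpose G y > t} \leq 81^d \cdot e^{-t^2/2}.
\end{equation*}
Setting $t = c_0(\sqrt{d} + \sqrt{\ln(2/\beta)})$ for a sufficiently large universal constant $c_0$ makes $81^d e^{-t^2/2} \leq \beta$ (using $(a+b)^2 \geq a^2 + b^2$ to separate the two contributions). Multiplying by the factor $2$ from the net argument and absorbing constants into $C$ yields $\|G\| \leq C(\sqrt{d} + \sqrt{\ln(2/\beta)})$ with probability at least $1-\beta$.

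There is no real obstacle here; the result is classical and every step is routine. The only mild care is in the net-to-sphere step (to get the constant $2$ rather than something diverging) and in choosing $c_0$ large enough that the union bound absorbs the $81^d$ factor. Both are standard and appear essentially verbatim in, e.g., Theorem 4.4.5 of Vershynin, which is the reference cited in the statement.
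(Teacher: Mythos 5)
Your proof is correct: the paper does not prove this statement but imports it directly from Vershynin (Theorem 4.4.5), and your $\eps$-net argument --- a $1/4$-net of cardinality $9^d$, the net-to-sphere factor of $2$, a union bound over $81^d$ pairs with the $\cN(0,1)$ tail, and a large enough constant to absorb the exponential factor --- is exactly the standard proof given in that cited reference. Nothing is missing; the constant bookkeeping is handled correctly.
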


\begin{lemma}
    \LemmaName{SpectralBounds}
    Let $A, B$ be $d \times d$ matrices and suppose that $\|A^{-1/2} B A^{-1/2} - I_d\| \leq \gamma \leq 1/2$.
    Then $\|B^{-1/2} A B^{-1/2} - I_d\| \leq 4 \gamma$.
\end{lemma}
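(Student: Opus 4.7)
The plan is to reduce the bound on $\|B^{-1/2} A B^{-1/2} - I_d\|$ to a bound on the eigenvalues of $M := A^{-1/2} B A^{-1/2}$. The assumption implicitly requires $A, B \succ 0$ (so that $A^{-1/2}, B^{-1/2}$ are well defined), which makes $M$ symmetric positive definite. Writing $N := B^{-1/2} A B^{-1/2}$, the matrix $N$ is likewise symmetric positive definite, so both $\|M - I_d\|$ and $\|N - I_d\|$ equal the largest absolute deviation of an eigenvalue from $1$.

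The first step is to convert the hypothesis $\|M - I_d\| \leq \gamma$ into the spectral statement that every eigenvalue of $M$ lies in $[1-\gamma, 1+\gamma] \subseteq [1/2, 3/2]$, which uses $\gamma \leq 1/2$. The second step is to exhibit the relationship between the spectra of $M$ and $N$. Observe that $M$ is similar to $A^{-1} B$ (conjugate by $A^{1/2}$), while $N$ is similar to $B^{-1} A$ (conjugate by $B^{1/2}$); since $A^{-1} B$ and $B^{-1} A$ are inverses of one another, the eigenvalues of $N$ are exactly the reciprocals of the eigenvalues of $M$.

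The third and final step is to bound $\max_i |1/\lambda_i(M) - 1|$. For any $\lambda \in [1-\gamma, 1+\gamma]$ we have
\[
    \left| \tfrac{1}{\lambda} - 1 \right| = \frac{|\lambda - 1|}{\lambda} \leq \frac{\gamma}{1-\gamma} \leq 2\gamma \leq 4\gamma,
\]
where the penultimate inequality uses $\gamma \leq 1/2$ (and can alternatively be deduced directly from \Claim{ratio-ineq}). Taking the maximum over eigenvalues gives $\|N - I_d\| \leq 4\gamma$, as required.

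This argument is essentially all symbolic linear algebra; there is no real obstacle. The only point requiring a moment of care is verifying the similarity chain $M \sim A^{-1}B$, $N \sim B^{-1} A = (A^{-1}B)^{-1}$, which underlies the reciprocal-of-eigenvalues claim and is the key reason the bound works without introducing extra dimension-dependent factors.
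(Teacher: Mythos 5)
Your proof is correct and takes essentially the same approach as the paper's: both reduce to the observation that the spectrum of $B^{-1/2} A B^{-1/2}$ consists of the reciprocals of the spectrum of $A^{-1/2} B A^{-1/2}$, and then bound the image of $[1-\gamma, 1+\gamma]$ under $x \mapsto 1/x$. The paper phrases the inversion step via Löwner-order manipulations ($(1-\gamma)A \preceq B \preceq (1+\gamma)A$ implies $\tfrac{1}{1+\gamma} B \preceq A \preceq \tfrac{1}{1-\gamma} B$) together with \Claim{RecipLinearBound}, while you phrase it via the similarity chain $M \sim A^{-1}B$, $N \sim B^{-1}A = (A^{-1}B)^{-1}$; these are two presentations of the same idea. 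Incidentally, your bound $\gamma/(1-\gamma) \leq 2\gamma$ shows the lemma actually holds with constant $2$ in place of $4$, so the stated claim is not tight (consistent with the looseness built into \Claim{RecipLinearBound}).
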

\begin{proof}
    The assumption that $\|A^{-1/2} B A^{-1/2} - I_d \| \leq \gamma$ means that $(1-\gamma)A \preceq B \preceq (1+\gamma) A$.
    Hence, we also have that $\frac{1}{1+\gamma} B \preceq A \preceq \frac{1}{1-\gamma} B$.
    The asserted inequality is then proved using the bounds in \Claim{RecipLinearBound}.
\end{proof}
\begin{claim}
    \ClaimName{RecipLinearBound}
    Suppose that $\gamma \in [0,1/2]$.
    Then $\frac{1}{1+\gamma} \geq 1-4\gamma$ and $\frac{1}{1-\gamma} \leq 1+4\gamma$.
\end{claim}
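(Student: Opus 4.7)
The plan is to verify each of the two scalar inequalities by clearing denominators and checking an elementary polynomial inequality on the interval $[0, 1/2]$. There is no real obstacle here; both bounds are weak and hold by direct computation.

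For the first inequality $\frac{1}{1+\gamma} \geq 1-4\gamma$, I would actually establish the stronger statement $\frac{1}{1+\gamma} \geq 1-\gamma$, which implies the claim since $1-\gamma \geq 1-4\gamma$ whenever $\gamma \geq 0$. The stronger bound follows from $(1-\gamma)(1+\gamma) = 1-\gamma^2 \leq 1$, combined with the fact that $1+\gamma > 0$ on the interval in question, so we may divide through by $1+\gamma$ without flipping the inequality.

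For the second inequality $\frac{1}{1-\gamma} \leq 1+4\gamma$, since $1-\gamma > 0$ on $[0, 1/2]$, this is equivalent to $1 \leq (1+4\gamma)(1-\gamma) = 1 + 3\gamma - 4\gamma^2$, i.e.~$\gamma(3-4\gamma) \geq 0$. For $\gamma \in [0, 1/2]$ both factors are nonnegative (indeed $3 - 4\gamma \geq 1$), so this holds. Alternatively, one can invoke \Claim{ratio-ineq} with $x = \gamma$ to get $\frac{1+\gamma}{1-\gamma} \leq 1+4\gamma$, and then observe $\frac{1}{1-\gamma} \leq \frac{1+\gamma}{1-\gamma}$ since $\gamma \geq 0$ and $1-\gamma > 0$.

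Both steps are one-line algebraic manipulations, so the writeup should simply be two short chains of inequalities. The only thing to be careful about is verifying the sign of $1-\gamma$ before cross-multiplying, which is automatic from the hypothesis $\gamma \leq 1/2$.
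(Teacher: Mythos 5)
Your proof is correct, and it takes a genuinely different route from the paper's. The paper proves both inequalities by calculus: it defines $f(\gamma)$ as the difference of the two sides, shows $f'\geq 0$ on $[0,1/2]$, and concludes from $f(0)=0$ that $f\geq 0$. (As a side note, in the paper's write-up of the first inequality the auxiliary function $f$ appears to be defined with the wrong sign and the derivative contains a typo, though the intended monotonicity argument is sound.) You instead clear denominators and reduce to elementary polynomial inequalities: for the first bound you go through the strictly stronger statement $\frac{1}{1+\gamma}\geq 1-\gamma$, which is immediate from $(1-\gamma)(1+\gamma)=1-\gamma^2\leq 1$; for the second you observe $(1+4\gamma)(1-\gamma)-1=\gamma(3-4\gamma)\geq 0$ on $[0,1/2]$, and you also offer a second derivation by composing the already-proved Claim~\ref{claim:ratio-ineq} with $\frac{1}{1-\gamma}\leq\frac{1+\gamma}{1-\gamma}$. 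Your approach is more elementary (no differentiation needed) and avoids the paper's sign slip; the paper's monotonicity argument has the minor advantage of being a uniform template it reuses for several nearby claims (e.g.\ Claims~\ref{claim:ratio-ineq}, \ref{claim:KLInequality}, and~\ref{claim:sqrtIneq}). Either is a perfectly good proof.
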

\begin{proof}
    To prove the first inequality, let $f(\gamma) = 1-4\gamma - \frac{1}{1+\gamma}$.
    We need to show that $f(\gamma) \geq 0$ for all $\gamma \in [0, 1/2]$.
    Differentiating, we have $f'(\gamma) = 4 - \frac{1}{1+x^2}$ so $f$ is increasing.
    Since $f(0) = 0$, the first inequality follows.
    
    For the second inequality, let $f(\gamma) = 1+4\gamma - \frac{1}{1-\gamma}$.
    The derivative $f'(\gamma) = 4 - \frac{1}{(1-\gamma)^2} \geq 0$ for $\gamma \in [0, 1/2]$.
    So $f$ is increasing on $[0, 1/2]$ and $f(0) = 0$ so the second inequality is proved.
\end{proof}
\section{Missing Proofs From \Section{prelim}}
\AppendixName{prelim}
\subsection{Proof of \Fact{KLNormal}}
\AppendixName{KLNormal}
\begin{proof}[Proof of \Fact{KLNormal}]
    The first assertion is a well-known formula for the KL divergence between two normal distributions (see \cite[\S 9.1]{Kullback68} or \cite[Eq.~A.23]{RW06}).
    
    Next, let $\lambda_1, \ldots, \lambda_d$ be the eigenvalues of $\Sigma_2^{-1} \Sigma_1$.
    Then
    \begin{align*}
        \Tr(\Sigma_2^{-1} \Sigma_1 - I) - \ln \det(\Sigma_2^{-1} \Sigma_1)
        = \sum_{i=1}^d \lambda_i - 1 - \ln \lambda_i
        \leq \sum_{i=1}^d (\lambda_i - 1)^2
        = \| \Sigma_2^{-1} \Sigma_1 - I \|_F^2,
    \end{align*}
    where the inequality follows from \Claim{KLInequality}.
    The lemma now follows because the two matrices $\Sigma_2^{-1} \Sigma_1$ and $\Sigma_2^{-1/2} \Sigma_1 \Sigma_2^{-1/2}$ have the same spectrum.
\end{proof}
\begin{claim}
    \ClaimName{KLInequality}
    Suppose that $x \geq 1/2$. Then $x - 1 - \ln(x) \leq (x-1)^2$.
\end{claim}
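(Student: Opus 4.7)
The plan is to prove this by a routine one-variable calculus argument applied to the auxiliary function
\[
    f(x) \;=\; (x-1)^2 - \bigl(x - 1 - \ln x\bigr) \;=\; (x-1)^2 - (x-1) + \ln x.
\]
The claim is equivalent to the assertion that $f(x) \geq 0$ for all $x \geq 1/2$, so it suffices to locate the minimum of $f$ on $[1/2, \infty)$ and check it is non-negative. Since $f(1) = 0$, my expectation is that $x=1$ will turn out to be the global minimizer on this interval, and the bulk of the work is just verifying this.

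The key computation is the derivative. Differentiating directly and putting everything over a common denominator yields
\[
    f'(x) \;=\; 2(x-1) - 1 + \frac{1}{x} \;=\; \frac{2x^2 - 3x + 1}{x} \;=\; \frac{(2x-1)(x-1)}{x}.
\]
On the interval $[1/2, 1]$ the factor $(2x-1)$ is non-negative while $(x-1)$ is non-positive, so $f'(x) \leq 0$ and $f$ is non-increasing there. On $[1, \infty)$ both factors are non-negative, so $f'(x) \geq 0$ and $f$ is non-decreasing. Consequently $x = 1$ is the unique minimizer of $f$ on $[1/2, \infty)$, and $f(1) = 0 - 0 + 0 = 0$.

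Combining these observations gives $f(x) \geq f(1) = 0$ for every $x \geq 1/2$, which is exactly the stated inequality $x - 1 - \ln x \leq (x-1)^2$. There is no real obstacle here: the only mildly nontrivial step is recognizing that $2x^2 - 3x + 1$ factors as $(2x-1)(x-1)$, which is what makes the sign analysis on $[1/2, 1]$ versus $[1, \infty)$ immediate. Note that the hypothesis $x \geq 1/2$ is used only to exclude the region where the second factor $(2x-1)$ changes sign, so it is tight for this particular argument (below $1/2$ the monotonicity pattern of $f$ changes, though the inequality still holds on a slightly larger interval).
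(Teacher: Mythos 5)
Your proof is correct and matches the paper's argument essentially line for line: the same auxiliary function $f$, the same factored derivative $\frac{(2x-1)(x-1)}{x}$, and the same sign analysis showing $x=1$ is the minimizer on $[1/2,\infty)$. No discrepancies.
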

\begin{proof}
    Let $f(x) = (x-1)^2 - (x-1) + \ln(x)$.
    Then $f'(x) = 2(x-1) - 1 + \frac{1}{x} = \frac{2x^2 - 3x + 1}{x} = \frac{(2x-1)(x-1)}{x}$.
    So $f'(x) \leq 0$ on $[1/2, 1]$ and $f'(x) \geq 0$ on $[1, \infty)$.
    Hence, on the interval $[1/2, \infty)$, $f(x)$ is minimized at $x = 1$ and $f(x) \geq f(1) = 0$ for all $x \in [1/2, \infty)$.
\end{proof}
\begin{claim}
    \ClaimName{sqrtIneq}
    Suppose $x \leq 1/2$.
    Then $\sqrt{1-x} \geq 1-x/2 - x^2/2$.
\end{claim}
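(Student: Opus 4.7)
The plan is to reduce the claim to a polynomial inequality by squaring both sides, since both sides are non-negative on the relevant range. Concretely, for $x \in [0,1/2]$ the right-hand side $1-x/2-x^2/2$ is at least $5/8$, hence positive, and $\sqrt{1-x}$ is non-negative, so it suffices to prove
\[
(1-x) - \left(1 - \tfrac{x}{2} - \tfrac{x^2}{2}\right)^2 \geq 0.
\]

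The core step is to expand the square and collect terms. Doing this will leave a polynomial in $x$ with no constant or linear term, and I expect the factorization
\[
(1-x) - \left(1 - \tfrac{x}{2} - \tfrac{x^2}{2}\right)^2 \;=\; \frac{x^2 (3+x)(1-x)}{4}.
\]
On $x \in [0,1/2]$ every factor is non-negative, which immediately gives the desired inequality after taking square roots. The statement is written for the full range $x \leq 1/2$; negative $x$ is not needed for the application (this claim is a helper for $\sqrt{1-x}$ bounds in \Appendix{StandardFacts}), but if one wants to cover it, the same factorization handles $x \in [-2, 1/2]$ because $(3+x)(1-x) \geq 0$ there and the right-hand side $1-x/2-x^2/2$ remains non-negative (its roots are at $x=1$ and $x=-2$), while for $x \leq -2$ the right-hand side is non-positive and the inequality is trivial since $\sqrt{1-x} \geq 0$.

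There is no genuine obstacle here: the only thing to be careful about is that squaring requires the right-hand side to be non-negative, which is why I separate out the (trivial) $x \leq -2$ case. The entire argument is a one-line factorization after expansion, and the key identity $3 - 2x - x^2 = (3+x)(1-x)$ is elementary.
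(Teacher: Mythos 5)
Your proof is correct, and it takes a genuinely different route from the paper's. The paper considers $f(x) = \sqrt{1-x} - 1 + x/2 + x^2/2$ and argues via calculus: it computes $f'' (x) = 1 - \frac{1}{4(1-x)^{3/2}} \geq 0$ for $x \leq 1/2$, so $f$ is convex there, and since $f'(0) = 0$ and $f(0) = 0$, convexity forces $f \geq 0$ on $(-\infty, 1/2]$. You instead square both sides and verify the algebraic identity
\[
(1-x) - \left(1 - \tfrac{x}{2} - \tfrac{x^2}{2}\right)^2 = \frac{x^2(3+x)(1-x)}{4},
\]
which I have checked is correct, and then observe that the right-hand side is nonnegative precisely on $[-3,1] \supseteq [-2, 1/2]$, while for $x \leq -2$ the original right-hand side $1 - x/2 - x^2/2$ is nonpositive (its roots are $-2$ and $1$) so the inequality is trivial. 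Your case split correctly handles the subtlety that squaring is only order-preserving when both sides are nonnegative. The two approaches are comparable in length; yours is more elementary (no derivatives) and exhibits an explicit certificate of nonnegativity via factorization, while the paper's convexity argument is arguably more systematic and generalizes more readily to similar Taylor-remainder bounds. Either is a fine proof.
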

\begin{proof}
    Let $f(x) = \sqrt{1-x} - 1 + x/2 + x^2 / 2$.
    Then $f'(x) = x - \frac{1}{2\sqrt{1-x}} + \frac{1}{2}$ and $f''(x) = 1 - \frac{1}{4(1-x)^{3/2}}$.
    Note that $f'(0) = 0$ so $x = 0$ is a critical point of $f$.
    Moreover, $f''(x) \geq 0$ for all $x \leq 1/2$ so $f$ is convex on $(-\infty, 1/2)$.
    Hence, the minimizer of $f$ on the interval $(-\infty, 1/2]$ is at $x = 0$ and $f(x) \geq f(0) = 0$ for all $x \in (-\infty, 1/2]$.
\end{proof}

\subsection{Proof of \Lemma{chiSquareConcentration}}
\AppendixName{chiSquareConcentration}
\begin{claim}[see \protect{\cite[Example~2.8]{Wainwright19}}]
    \ClaimName{chiMgf}
    Let $X \sim \cN(0,1)$ and $|u| \leq 1/4$.
    Then $\expect{e^{u (X^2 - 1)}} \leq e^{2u^2}$.
\end{claim}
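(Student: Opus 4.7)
The plan is to reduce the claim to an elementary analytic inequality via the moment generating function of a chi-squared random variable, and then prove that inequality with a short Taylor expansion.

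First I would compute $\expect{e^{uX^2}}$ directly by integrating against the standard Gaussian density. Completing the square inside the exponent gives $\expect{e^{uX^2}} = (1-2u)^{-1/2}$, valid for all $u < 1/2$; in particular for $|u| \le 1/4$. Multiplying by $e^{-u}$ and taking logarithms, the claim $\expect{e^{u(X^2-1)}} \le e^{2u^2}$ is equivalent to the scalar inequality
\[
    \psi(u) \;\coloneqq\; -u - \tfrac{1}{2}\ln(1-2u) \;\le\; 2u^2 \qquad \text{for } |u| \le 1/4.
\]

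Next I would prove this scalar inequality. Since $|2u| \le 1/2 < 1$, the series $\ln(1-2u) = -\sum_{k\ge 1} (2u)^k/k$ converges absolutely, and after cancelling the $k=1$ term against the leading $-u$ one obtains
\[
    \psi(u) \;=\; \sum_{k \ge 2} \frac{2^{k-1} u^k}{k}.
\]
For $u \in [0, 1/4]$ I would bound term by term using $2^{k-1}/k \le 2^{k-2}$ for $k \ge 2$, giving $\psi(u) \le \sum_{k\ge 2} 2^{k-2} u^k = u^2/(1-2u) \le 2u^2$, where the last step uses $u \le 1/4$. For $u \in [-1/4, 0]$, set $v = -u \in [0, 1/4]$, so $\psi(u) = v - \tfrac{1}{2}\ln(1+2v)$. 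A one-line derivative check shows $\ln(1+x) \ge x - x^2/2$ for $x \ge 0$, and applying this with $x = 2v$ yields $\psi(u) \le v^2 \le 2v^2 = 2u^2$.

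There isn't really a hard step: the whole claim is a standard sub-exponential MGF bound, and the main (trivial) obstacle is just handling the two signs of $u$ cleanly, which the split above accomplishes. Alternatively, one could unify the two cases by showing $\psi''(u) \le 4$ on a neighbourhood of $0$ together with $\psi(0) = \psi'(0) = 0$, but the series argument above is cleaner because convexity of $\psi$ fails near $u = 1/4$.
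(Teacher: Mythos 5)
Your proof is correct, and since the paper does not prove this claim itself (it is cited to Wainwright's \emph{High-Dimensional Statistics}, Example~2.8), there is no in-paper argument to compare against; your MGF computation plus scalar reduction is essentially the standard derivation found there. The identity $\expect{e^{uX^2}} = (1-2u)^{-1/2}$, the term-by-term bound $2^{k-1}/k \le 2^{k-2}$ for $k\ge 2$, the geometric-sum estimate $u^2/(1-2u) \le 2u^2$ for $0 \le u \le 1/4$, and the one-sided inequality $\ln(1+x) \ge x - x^2/2$ for $x \ge 0$ (verified by $f(x)=\ln(1+x)-x+x^2/2$ having $f(0)=0$ and $f'(x)=x^2/(1+x)\ge 0$) all check out, and the inequality holds with the stated non-strict bounds at the endpoint $u=1/4$ as well. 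One small imprecision in your closing aside: $\psi(u) = -u - \tfrac12\ln(1-2u)$ is in fact convex on its entire domain $(-\infty,1/2)$, since $\psi''(u) = 2/(1-2u)^2 > 0$; what fails near $u = 1/4$ is not convexity but the quantitative bound $\psi''(u) \le 4$ (indeed $\psi''(1/4) = 8$). The series argument does sidestep this, so the conclusion of the aside stands, but the stated reason is misworded.
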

We first prove \Lemma{chiSquareConcentration} for the special case where the matrix is diagonal.
The general case follows easily from rotation invariance.
\begin{lemma}
    \LemmaName{chiSquareConcentration2}
    Let $\Lambda$ be a $d \times d$ diagonal matrix and $g \sim \cN(0, I_d)$.
    Then for all $x > 0$,
    \[
        \prob{g\transpose \Lambda g \geq \Tr(\Lambda) + 4\|\Lambda\|_F \cdot \sqrt{x} + 4 \|\Lambda\| \cdot x} \leq e^{-x}.
    \]
\end{lemma}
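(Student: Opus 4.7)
\textbf{Proof plan for Lemma~\ref{lem:chiSquareConcentration2}.} The plan is to apply a Chernoff--Bernstein style argument driven by the single-variable moment generating function (MGF) bound already provided in Claim~\ref{claim:chiMgf}. Since $\Lambda$ is diagonal with entries $\lambda_1, \ldots, \lambda_d$ and $g = (g_1, \ldots, g_d)$ has i.i.d.\ $\cN(0,1)$ coordinates, I would first rewrite
\[
    g\transpose \Lambda g - \Tr(\Lambda) = \sum_{i=1}^d \lambda_i (g_i^2 - 1)
\]
as a sum of independent, centered, weighted $\chi^2_1$ random variables. For any $u$ with $|u\lambda_i| \leq 1/4$ for every $i$ --- equivalently $|u| \leq 1/(4\|\Lambda\|)$, since $\|\Lambda\| = \max_i |\lambda_i|$ for diagonal $\Lambda$ --- Claim~\ref{claim:chiMgf} applied coordinatewise together with independence yields
\[
    \bE\bigl[e^{u(g\transpose \Lambda g - \Tr(\Lambda))}\bigr] \;=\; \prod_{i=1}^d \bE\bigl[e^{u\lambda_i(g_i^2-1)}\bigr] \;\leq\; \prod_{i=1}^d e^{2u^2\lambda_i^2} \;=\; e^{2u^2\|\Lambda\|_F^2}.
\]
Markov's inequality then gives, for every $0 < u \leq 1/(4\|\Lambda\|)$ and every $t > 0$,
\[
    \Pr\bigl[\,g\transpose \Lambda g - \Tr(\Lambda) \geq t\,\bigr] \;\leq\; \exp\bigl(-ut + 2u^2\|\Lambda\|_F^2\bigr).
\]

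Next I would substitute $t = 4\|\Lambda\|_F \sqrt{x} + 4\|\Lambda\|\,x$ and choose $u$ adaptively in the two standard regimes. In the sub-Gaussian regime $2\sqrt{x}\,\|\Lambda\| \leq \|\Lambda\|_F$, I would take $u = \sqrt{x}/(2\|\Lambda\|_F)$, which lies in the admissible range; a short calculation shows the exponent becomes $-2x - 2\|\Lambda\|\,x^{3/2}/\|\Lambda\|_F + x/2 \leq -3x/2 \leq -x$. In the sub-exponential regime $\|\Lambda\|_F < 2\sqrt{x}\,\|\Lambda\|$, I would instead take $u = 1/(4\|\Lambda\|)$ (the boundary of the admissible range); the exponent then equals $-x - \|\Lambda\|_F\sqrt{x}/\|\Lambda\| + \|\Lambda\|_F^2/(8\|\Lambda\|^2)$, and the regime hypothesis $\|\Lambda\|_F < 2\sqrt{x}\,\|\Lambda\|$ implies $\|\Lambda\|_F^2/(8\|\Lambda\|^2) \leq \|\Lambda\|_F\sqrt{x}/\|\Lambda\|$, so the exponent is again at most $-x$.

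There is no genuine obstacle here: the two terms in the bound $4\|\Lambda\|_F\sqrt{x} + 4\|\Lambda\|\,x$ are precisely tuned so that exactly one regime suffices at a time --- the $\sqrt{x}$ term controls the sub-Gaussian tail through $\|\Lambda\|_F$, while the linear-in-$x$ term controls the sub-exponential tail through $\|\Lambda\|$. The only real work is the two-regime bookkeeping, which is mirrored in every textbook proof of Bernstein's inequality for weighted $\chi^2$ sums. Once the single-coordinate MGF bound from Claim~\ref{claim:chiMgf} is in place, the remainder is a one-paragraph optimization over the Chernoff tilt parameter $u$.
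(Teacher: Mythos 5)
Your proposal is correct and follows essentially the same route as the paper: rewrite $g\transpose\Lambda g - \Tr(\Lambda)$ as a sum of independent centered weighted $\chi^2_1$'s, use \Claim{chiMgf} to bound the MGF by $e^{2u^2\|\Lambda\|_F^2}$ for $|u|\le 1/(4\|\Lambda\|)$, and apply Markov with a choice of $u$ depending on the regime. The only cosmetic difference is your tilt parameter ($u = \sqrt{x}/(2\|\Lambda\|_F)$ in the sub-Gaussian regime) versus the paper's $u = \min\{1/(4\|\Lambda\|),\ \sqrt{x}/\|\Lambda\|_F\}$; both choices lie in the admissible range and deliver the stated bound.
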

\begin{proof}
    Note that $g\transpose \Lambda g = \sum_{i=1}^d \lambda_i (g_i^2 - 1)$ where $\lambda_1, \ldots, \lambda_d$ are the diagonal entries of $\Lambda$.
    From \Claim{chiMgf}, we have $\expect{e^{u \lambda_i(g_i^2 - 1)}} \leq e^{2u^2 \lambda_i^2}$ provided $|u| \leq 1 / 4 \|\Lambda\|$.
    Hence, $\expect{e^{u(g\transpose \Lambda g - \Tr(\Lambda))}} = \prod_{i=1}^d \expect{e^{u\lambda_i (g_i^2 - 1)}} \leq e^{2u^2 \|\Lambda\|_F^2}$.
    Applying Markov's Inequality, we thus have, for $0 \leq u \leq 1 / 4\|\Lambda\|$,
    \begin{align*}
        \prob{ g\transpose \Lambda g - \Tr(\Lambda) \geq 4 \|\Lambda\|_F \cdot \sqrt{x} + 4 \|\Lambda\| \cdot x}
        & \leq e^{-u(4 \|\Lambda\|_F \cdot \sqrt{x} + 4 \|\Lambda\| \cdot x) + 2u^2 \|\Lambda\|_F^2}.
    \end{align*}
    It is a straightforward calculation to verify that choosing $u = \min\left\{ \frac{1}{4\|\Lambda\|}, \frac{\sqrt{x}}{\|\Lambda\|_F} \right\}$ makes the RHS at most $e^{-x}$ which proves the lemma.
\end{proof}
\begin{proof}[Proof of \Lemma{chiSquareConcentration}]
    Let $A = U \Lambda U\transpose$ be the eigendecomposition of $A$.
    Then $g\transpose A g = g\transpose U \Lambda U\transpose g \eqdist g\transpose \Lambda g$ where the last equality in distribution is by rotation invariance (\Fact{RotationInvariance}).
    The claim now follows from \Lemma{chiSquareConcentration2} since $A$ and $\Lambda$ have the same spectrum.
\end{proof}
\begin{fact}
    \FactName{RotationInvariance}
    Let $g \sim \cN(0, I_d)$ and $U$ be a $d \times d$ orthogonal matrix.
    Then $Ug \sim \cN(0, I_d)$.
\end{fact}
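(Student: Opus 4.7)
The plan is to invoke the two defining properties of a multivariate Gaussian: that any affine image of a Gaussian vector is again Gaussian, and that a Gaussian distribution is determined entirely by its mean vector and covariance matrix. So I would first observe that since each coordinate of $Ug$ is a linear combination $\sum_j U_{ij} g_j$ of the independent standard Gaussian coordinates $g_1, \ldots, g_d$, the vector $Ug$ is jointly Gaussian.

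Next, I would compute the two parameters of this Gaussian. For the mean, by linearity of expectation, $\expect{Ug} = U \expect{g} = U \cdot 0 = 0$. For the covariance, I would use the standard formula for the covariance of an affine transformation: $\text{Cov}(Ug) = U \cdot \text{Cov}(g) \cdot U\transpose = U I_d U\transpose = U U\transpose$. Since $U$ is orthogonal, $U U\transpose = I_d$, so $\text{Cov}(Ug) = I_d$.

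Combining these two calculations, $Ug$ is a Gaussian random vector with mean $0$ and covariance $I_d$, i.e.~$Ug \sim \cN(0, I_d)$. There is essentially no obstacle here; the statement is a one-line consequence of the affine-transformation rule for Gaussians, and the only ``work'' is using orthogonality ($U U\transpose = I_d$) to simplify the covariance.
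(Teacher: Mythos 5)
Your proof is correct, and it is exactly the standard argument: an orthogonal linear map of a jointly Gaussian vector is Gaussian, the mean is preserved at $0$, and the covariance transforms as $U I_d U\transpose = I_d$ by orthogonality. The paper states this as a ``Fact'' without supplying a proof, so there is nothing to compare against; your one-line affine-transformation argument is the intended justification.
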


\subsection{Proof of \Lemma{privLossDP}}
\AppendixName{privLossDP}
\begin{proof}[Proof of \Lemma{privLossDP}]
    Let $f_1, f_2$ be the densities of $\cD_1, \cD_2$, respectively.
    Let $S \subseteq \bR^d$ and let $T = \{ y \in \bR^d \,:\, f_1(y) \geq e^{\eps} f_2(y) \}$.
    The assumption $\probs{Y \sim \cD_1}{\privLoss{\cD_1}{\cD_2}(Y) \geq \eps} \leq \delta$
    means that $\probs{Y \sim \cD_1}{Y \in T} \leq \delta$.
    Thus, we have
    \begin{align*}
        \probs{Y \sim \cD_1}{Y \in S}
        & = \int_S f_1(y)\, \dd y \\
        & =  \int_{S \setminus T} f_1(y) \, \dd y + \int_{S \cap T} f_1(y) \, \dd y \\
        & \leq \int_{S \setminus T} e^{\eps} f_2(y) \, \dd y + \delta \\
        & \leq e^{\eps} \probs{Y \sim \cD_2}{S} + \delta.
    \end{align*}
    An identical argument shows that $\probs{Y \sim \cD_2}{Y \in S} \leq e^{\eps} \probs{Y \sim \cD_1}{S} + \delta$,
    which proves that $\cD_1, \cD_2$ are $(\eps, \delta)$-indistinguishable.
\end{proof}

\section{Missing proofs from \Section{reduction}}
\AppendixName{reduction}

\subsection{Proof of \Lemma{semimetric_cov}}
\AppendixName{semimetric_cov}

\begin{proof}
    The first two conditions in \Definition{semimetric} are trivial.
    
    \textbf{Triangle Inequality.}
    
    Next, we prove that $\dist$ satisfies a $(3/2)$-approximate $1$-restricted triangle inequality.
    Let $A, B, C \in \cS^d$ and let $\rho_{ab} = \dist(A, B)$, $\rho_{bc} = \dist(B, C)$ and $\rho_{ac} = \dist(A, C)$.
    We need to show that if $\rho_{ab}, \rho_{bc} \leq 1$ then $\rho_{ac} \leq \frac{3}{2} (\rho_{ab} + \rho_{bc})$.
    Indeed, we have
    \begin{align*}
        A
        & \preceq (1+\rho_{ab}) B
        \preceq (1+\rho_{ab})(1+\rho_{bc}) C
        = (1 + \rho_{ab} + \rho_{bc} + \rho_{ab}\rho_{bc})C \\
        & \preceq (1 + \rho_{ab} + \rho_{bc} + \rho_{ab}^2/2 + \rho_{bc}^2/2) C
        \preceq (1 + \frac{3}{2} (\rho_{ab} + \rho_{bc}) ) C,
    \end{align*}
    where the first two inequalities used that $\dist(A,B) = \rho_{ab}$ and $\dist(B, C) = \rho_{bc}$, respectively, the third inequality used that $xy \leq x^2/2 + y^2 / 2$ for any $x, y \in \bR$ and the last inequality used that $\rho_{ab}, \rho_{bc} \leq 1$.
    Similarly, we can show that
    \[
        A \succeq (1 - \frac{3}{2} (\rho_{ab} + \rho_{bc})) C,
    \]
    which imply, with the previous quality, that $\|C^{-1/2} A C^{-1/2} - I \| \leq \frac{3}{2} (\rho_{ab} + \rho_{bc})$.
    We can similarly show that $\|A^{-1/2} C A^{-1/2} - I \| \leq \frac{3}{2} (\rho_{ab} + \rho_{bc})$ so $\rho_{ac} \leq \frac{3}{2}(\rho_{ab} + \rho_{bc})$, as desired.

    \textbf{Convexity.}
    
    For this, it would suffice to show that for all $\alpha \in [0,1]$,
    $\dist(\alpha A + (1-\alpha) B, C) \leq \alpha \dist(A, C) + (1-\alpha) \dist(B, C)$, and the rest follows from induction over $k$.
    Let $\rho_{a} = \dist(A, C), \rho_{b} = \dist(B, C)$.
    We need to show the following four inequalities:
    \begin{enumerate}
        \item $C \preceq (1+\alpha \rho_a + (1-\alpha) \rho_b)(\alpha A + (1-\alpha) B)$;
        \item $C \succeq (1-\alpha \rho_a - (1-\alpha) \rho_b)(\alpha A + (1-\alpha) B)$;
        \item $\alpha A + (1-\alpha) B \preceq (1+\alpha \rho_a + (1-\alpha) \rho_b) C$; and
        \item $\alpha A + (1-\alpha) B \succeq (1-\alpha \rho_a - (1-\alpha) \rho_b) C$.
    \end{enumerate}
    The third and fourth items are easy to check.
    Indeed, for the third item, we have $A \preceq (1+\rho_a) C$ and $B \preceq (1+\rho_b) C$ so
    \[
        \alpha A + (1-\alpha)B \preceq \alpha(1+\rho_a) C + (1-\alpha)(1 + \rho_b) C
        = (1 +\alpha \rho_a + (1-\alpha)\rho_b) C.
    \]
    The fourth item follows similarly.
    
    It remains to prove the first two inequalities.
    Here, we prove only the first inequality since the second follows by an analogous argument.
    Let
    \[
        K = \left(
        \frac{\alpha(1+\alpha \rho_a + (1-\alpha) \rho_b)}{1+\rho_a}
        +
        \frac{(1-\alpha)(1+\alpha \rho_a +(1-\alpha) \rho_b)}{1 + \rho_b} 
        \right)^{-1}
    \]
    and set
    \[
        w = K \frac{\alpha(1+\alpha \rho_a + (1-\alpha) \rho_b)}{1+\rho_a}.
    \]
    Our choice of $K$ means that
    \[
        1-w = K \frac{(1-\alpha)(1+\alpha \rho_a + (1-\alpha) \rho_b)}{1+\rho_b}.
    \]
    Note that $w \in [0,1]$.
    Since $C \preceq (1+\rho_a) A$ and $C \preceq (1+\rho_b) B$, we have
    \begin{align*}
        C
        & = wC + (1-w)C \\
        & \preceq w(1+\rho_a) C + (1-w) (1+\rho_b) B \\
        & = K \left[ \alpha(1+\alpha \rho_a + (1-\alpha) \rho_b) A + (1-\alpha)(1+\alpha \rho_a + (1-\alpha) \rho_b) B \right] \\
        & = K \left[ (1+\alpha \rho_a + (1-\alpha) \rho_b)(\alpha A + (1-\alpha) B) \right].
    \end{align*}
    \Claim{spectralNormConvexityIneq} shows that $K \leq 1$ so we have the desired inequality.

\begin{claim}
    \ClaimName{spectralNormConvexityIneq}
    Let $\alpha \in [0,1]$ and let $\rho_a, \rho_b \in \bR_{\geq 0}$.
    Then
    \[
        \frac{\alpha(1+\alpha \rho_a + (1-\alpha) \rho_b)}{1+\rho_a} + \frac{(1-\alpha)(1+\alpha \rho_a + (1-\alpha)\rho_b)}{1+\rho_b} \geq 1. 
    \]
\end{claim}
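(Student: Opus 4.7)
The plan is to first factor the common quantity $s := 1 + \alpha \rho_a + (1-\alpha)\rho_b$ out of the two numerators on the left-hand side. The claim then becomes equivalent to
\[
    s \cdot \left( \frac{\alpha}{1+\rho_a} + \frac{1-\alpha}{1+\rho_b} \right) \geq 1,
\]
or after dividing through by $s > 0$,
\[
    \frac{\alpha}{1+\rho_a} + \frac{1-\alpha}{1+\rho_b} \geq \frac{1}{1+\alpha \rho_a + (1-\alpha)\rho_b}.
\]

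The second step is simply to recognize the reduced inequality as a Jensen-type statement. The function $f(x) = 1/(1+x)$ satisfies $f''(x) = 2/(1+x)^3 \geq 0$ on $[0,\infty)$, so it is convex there. Applying Jensen's inequality to $f$ at the points $\rho_a, \rho_b \geq 0$ with weights $\alpha, 1-\alpha$ yields
\[
    \alpha f(\rho_a) + (1-\alpha) f(\rho_b) \geq f\bigl(\alpha \rho_a + (1-\alpha) \rho_b\bigr),
\]
which is exactly the reduced inequality. Combining with the factoring step proves the claim.

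I do not anticipate any real obstacle. The only things to double-check are that both denominators $1+\rho_a$ and $1+\rho_b$ are strictly positive (true because $\rho_a, \rho_b \geq 0$), and that the factoring step is legitimate (both summands on the original left-hand side share the common numerator factor $1+\alpha\rho_a+(1-\alpha)\rho_b$, so it can be pulled out cleanly). Alternatively, if one wished to avoid invoking Jensen, the same inequality can be verified by clearing denominators: multiplying through by $(1+\rho_a)(1+\rho_b)(1+\alpha\rho_a+(1-\alpha)\rho_b)$ and expanding reduces the inequality, after cancellation, to $\alpha(1-\alpha)(\rho_a-\rho_b)^2 \geq 0$, which is manifestly true. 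Either route gives the claim immediately.
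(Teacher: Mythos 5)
Your proof is correct, and it takes a genuinely different route from the paper's. The paper treats the left-hand side as a function $f(\alpha)$, observes $f(0)=f(1)=1$, and shows $f''(\alpha)\leq 0$ so that concavity on $[0,1]$ forces $f(\alpha)\geq 1$. You instead factor out the common quantity $s=1+\alpha\rho_a+(1-\alpha)\rho_b$ and recognize the reduced inequality
\[
\alpha\,\frac{1}{1+\rho_a}+(1-\alpha)\,\frac{1}{1+\rho_b}\;\geq\;\frac{1}{1+\alpha\rho_a+(1-\alpha)\rho_b}
\]
as Jensen's inequality for the convex function $x\mapsto 1/(1+x)$ on $[0,\infty)$, evaluated at $\rho_a,\rho_b$ with weights $\alpha,1-\alpha$. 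Both are short, but yours exposes more structure: the paper's concavity computation in $\alpha$ is a bit opaque (one has to check $f''\leq 0$ by hand), whereas the Jensen formulation makes clear exactly which convexity is doing the work and extends with no extra effort to convex combinations of more than two $\rho$'s. Your back-up verification by clearing denominators, reducing to $\alpha(1-\alpha)(\rho_a-\rho_b)^2\geq 0$, is also correct (one finds that after cancellation the difference of the two sides is exactly this quantity) and gives the most elementary argument of the three.
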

\begin{proof}
    Let
    \[
        f(\alpha) \coloneqq \frac{\alpha(1+\alpha \rho_a + (1-\alpha) \rho_b)}{1+\rho_a} + \frac{(1-\alpha)(1+\alpha \rho_a + (1-\alpha)\rho_b)}{1+\rho_b}.
    \]
    Observe that $f(0) = f(1) = 1$.
    Thus, it suffices to prove that $f$ is concave.
    Indeed, we have
    \[
        f'(\alpha) = \frac{1+\alpha \rho_a + (1-\alpha) \rho_b}{1+\rho_a} + \frac{\alpha(\rho_a - \rho_b)}{1+\rho_a}
        - \frac{1+\alpha \rho_a + (1-\alpha)\rho_b}{1+\rho_b} + \frac{(1-\alpha)(\rho_a - \rho_b)}{1+\rho_b}.
    \]
    and
    \[
        f''(\alpha) = 2\frac{\rho_a - \rho_b}{1+\rho_a} + 2 \frac{\rho_b - \rho_a}{1+\rho_b}
        = 2(\rho_a - \rho_b) \cdot \left( \frac{1}{1+\rho_a} - \frac{1}{1+\rho_b} \right).
    \]
    Finally, we check that $f'' \leq 0$.
    Indeed, if $\rho_a = \rho_b$ then $f'' = 0$ and if $\rho_1 \neq \rho_b$ then 
    $\rho_a - \rho_b$ and $1/(1+\rho_a) - 1/(1+\rho_b)$ have opposite signs so $f'' < 0$.
\end{proof}
\textbf{Locality.}

Let $\wSigma = \sum_{i \in [k]} w_i \wSigma_i$ and $\wSigma' = \sum_{i \in [k]} w_i' \wSigma_i$.
Let $\rho = \max_{i,j \in [k]} \dist(\wSigma_i, \wSigma_j)$.
Then
\begin{align*}
    \sum_{i \in [k]} w_i \wSigma_i & \preceq \sum_{i \in [k]} w_i' \wSigma_i + \sum_{i \in [k]} |w_i - w_i'| \wSigma_i \\
    & \preceq \wSigma' + \sum_{i \in [k]} |w_i - w_i'| (1+\rho) \wSigma' \\
    & = \left( 1 + (1+\rho) \sum_{i \in [k]} |w_i - w_i'| \right) \wSigma'
\end{align*}
For the second inequality, we used $\wSigma_i \preceq (1+\rho) \wSigma_j$ for all $i, j \in [k]$, which imply $\wSigma_i \preceq (1+\rho) \wSigma'$ for all $i \in [k]$.
\end{proof}

\subsection{Proof of \Lemma{new_reduction}}
\AppendixName{reduction_proof}

\begin{proof}
We first prove the utility and then the privacy result.
\paragraph{Utility.}
We first show that the algorithm will not return $\perp$ if the assumptions of the utility guarantee hold.
Based on the assumption, $\dist(Y_i, Y^*) \leq \alpha_1/t$ for all $i\in [k]$.
By the $t$-approximate $r$-restricted triangle inequality we have $\dist(Y_i, Y_j) \leq \alpha_1\leq r/t$ for all $i,j\in [k]$.
Therefore, for all $i\in [k]$, $q_i=1$.
Finally, $Q = 1 > 0.8+\frac{2}{k\varepsilon}\ln{\left(1+\frac{e^\varepsilon-1}{2\delta}\right)}$
and therefore the algorithm does not fail in \Line{ReductionTLap}.

Let $W = \sum_{i \in [k]}{w_i}$.
To show that $\what{Y}$ concentrates around $Y^*$ we have
\begin{align*}
&\prob{\dist(\hat{Y}, Y^*) > \alpha_1 + \alpha_2} \leq
\prob{\dist(\hat{Y}, \mu) + \dist(\mu, Y^*) > \alpha_1/t + \alpha_2/t} \leq \\
&
\prob{\dist(\hat{Y}, \mu) + \sum_{i \in [k]} \frac{w_i}{W} \dist(Y_i, Y^*) > \alpha_1/t + \alpha_2/t} \leq \\
&\prob{\dist(\hat{Y}, \mu) + \alpha_1/t  > \alpha_1/t + \alpha_2/t} \leq
\prob{\dist(\hat{Y}, \mu) > \alpha_2/t} \leq \beta
\end{align*}

where the first inequality follows from the $t$-approximate triangle inequality, and the second one follows from the convexity of the $\dist$ function and the fact that $\mu$ is a convex combination of $Y_i$'s.
The third inequality uses the assumption that $\dist(Y_i, Y^*) \leq \alpha_1 / t$ for all $i \in [k]$.
Finally, the last inequality follows from the assumption that the masking mechanism $\cB$ is $(\alpha_2/t, \beta)$ concentrated. 

\paragraph{Privacy.}

Let $D, D'$ be two neighbouring datasets and let $\cM$ denote the mechanism specified by \Algorithm{reduction}.
Note that the statistic $Q$ computed in \Line{ReductionQ} has sensitivity less than $2/k$.
This is because if we modify a single datapoint, there is at most a single $i$ such that $Y_i$ changes.
Hence, the value of $q_i$ changes by at most $1$ and for $j \neq i$, the value of $q_j$ changes by less than $1/k$.
Thus, $Q$ has sensitivity at most $2/k$.
In particular, since we apply the Truncated Laplace mechanism in \Line{ReductionTLap} (see \Theorem{TLap}), we have
\begin{equation}
    \EquationName{ReductionTLap}
    \prob{\cM(D) = \perp} \leq e^{\eps} \prob{\cM(D') = \perp} + \delta
\end{equation}

We now show that for any $T \subseteq \cY$, we have
\begin{align}
    \prob{\cM(D) \in T} & \leq e^{2\eps} \prob{\cM(D') \in T} + 4e^{\eps} \delta \quad \text{and} \EquationName{ReductionDP1} \\
    \prob{\cM(D) \in T \cup \{\perp\}} & \leq e^{2\eps} \prob{\cM(D') \in T \cup \{\perp\}} + 4e^{\eps} \delta \EquationName{ReductionDP2}
\end{align}
which establishes that \Algorithm{reduction} is $(\eps, \delta)$-DP.
To this end, we consider two different cases.

\paragraph{Case 1: $Q < 0.8$.}
In this case, $\what{Q} < 0.8 + \frac{2}{k\eps} \ln\left( 1 + \frac{e^{\eps} - 1}{2\delta} \right)$ with probability $1$ so
$\prob{\cM(D) = \perp} = 1$.
Now, we verify that \Equation{ReductionDP1} and \Equation{ReductionDP2} hold.
For any $T \subseteq \cY$, we have $\prob{\cM(D) \in T} = 0$ so \Equation{ReductionDP1} is trivially satisfied.
To check \Equation{ReductionDP2} holds, we apply \Equation{ReductionTLap} to see that
\[
    \prob{\cM(D) \in T \cup \{\perp\}}
    = \prob{\cM(D) = \perp}
    \leq e^{\eps} \prob{\cM(D') = \perp} + \delta
    \leq e^{\eps} \prob{\cM(D') \in T \cup \{\perp\}} + \delta.
\]

\paragraph{Case 2: $Q \geq 0.8$.}
Let $Y_1, \ldots, Y_k$ (resp.~$Y_1', \ldots, Y_k'$) be the output of the non-private algorithm $\cA$ in
\Line{ReductionNonPrivate} when the dataset is $D$ (resp.~$D'$).
Further, define let $q_i, w_i, \mu$ be as defined in \Algorithm{reduction} when the input is $D$ and $q_i', w_i',\mu'$ be the analogous quantities when the input is $D'$.
For notation, let $W = \sum_{i=1}^k w_i$ and $W' = \sum_{i=1}^k W'$.
Our goal is to show that $\dist(\mu, \mu') \leq 400(r+\phi)/k$ so that
the $(400(r+\phi)/k, \eps, \delta)$-masking mechanism guarantees the output is $(\eps, \delta)$-indistinguishable.

We proceed by proving a series of claims.
The first claim gives a lower bound on $W$.
\begin{claim}
    \ClaimName{W_LB}
    $W \geq k/3$.
\end{claim}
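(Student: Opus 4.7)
The plan is to exploit the fact that $w_i = \min(1, 10 \max(0, q_i - 0.6))$ saturates at $1$ as soon as $q_i \geq 0.7$. So I would simply lower bound $W$ by counting the indices on which $w_i$ has already saturated, and then use the hypothesis $Q \geq 0.8$ of Case 2 to show this count is at least $k/3$.

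Concretely, let $S = \{ i \in [k] \,:\, q_i \geq 0.7 \}$. For $i \in S$ we have $w_i = 1$, and for all $i$ we have $w_i \geq 0$, so $W \geq |S|$. On the other hand, bounding $q_i \leq 1$ for $i \in S$ and $q_i < 0.7$ for $i \notin S$ gives
\[
    0.8 k \;\leq\; k Q \;=\; \sum_{i \in [k]} q_i \;\leq\; |S| + 0.7(k - |S|) \;=\; 0.7 k + 0.3 |S|,
\]
which rearranges to $|S| \geq k/3$. Combining, $W \geq |S| \geq k/3$.

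There is no real obstacle here: the claim is essentially a one-line pigeonhole argument using the definition of $w_i$ from \Line{WeightDef} and the Case~2 hypothesis. The only mild subtlety is remembering that $w_i$ is capped at $1$, so that indices with $q_i$ well above $0.7$ do not ``overcount'' toward $W$ and force us to use $Q \geq 0.8$ rather than, say, $Q \geq 0.7$.
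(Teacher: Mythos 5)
Your proof is correct and follows essentially the same approach as the paper: define the set of indices with $q_i \geq 0.7$ (the paper uses its normalized size $\alpha$), use $Q \geq 0.8$ and the bound $q_i \leq 1$ to show this set has size at least $k/3$, and conclude $W \geq k/3$ since each such index contributes $w_i = 1$.
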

\begin{proof}
    Let $\alpha = |\{i \in [k]\,:\, q_i \geq 0.7\}| / k$.
    Then $0.8 \leq Q \leq \alpha + 0.7(1-\alpha)$ so $\alpha \geq 1/3$.
    Finally, $W \geq \alpha k$ because if $q_i \geq 0.7$ then $w_i = 1$.
    We conclude that $W \geq k/3$.
\end{proof}
Let us make one simple observation about the sequence $Y_1, \ldots, Y_k$ and $Y_1', \ldots, Y_k'$.
Since only one datapoint has been changed, it must be the case that $|\{i \in [k] \,:\, Y_i \neq Y_i'\}| \leq 1$.
Without loss of generality (perhaps by reindexing), we assume that $Y_i = Y_i'$ for $2 \leq i \leq k$.
In addition, we have the following straightforward claim which follows from the aforementioned observation.
\begin{claim}
    \ClaimName{W_bound}
    For $2 \leq i \leq k$, we have $|w_i - w_i'| \leq 10/k$.
    In addition, $|W - W'| \leq 11$.
\end{claim}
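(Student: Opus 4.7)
The plan is to leverage the fact that changing one data point in $D$ versus $D'$ only affects a single one of the outputs $Y_i$. By the indexing assumption just made in the paper, $Y_i = Y_i'$ for $i \geq 2$, and so $Y_1$ is the only index at which the sequences can differ. This structural observation does almost all of the work; the rest is quantifying how the count $q_i$ and the weight $w_i$ propagate that change.

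First I would bound $|q_i - q_i'|$ for $i \geq 2$. Since $Y_i = Y_i'$ and $Y_j = Y_j'$ for all $j \geq 2$, the indicator $\mathbf{1}[\dist(Y_i, Y_j) \leq r/t]$ is the same under $D$ and $D'$ for every $j \geq 2$. The only index whose contribution can differ is $j = 1$, which contributes at most $1/k$ to $q_i$. Therefore $|q_i - q_i'| \leq 1/k$ for every $i \geq 2$.

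Next I would pass from $q_i$ to $w_i$. The definition $w_i = \min(1, 10 \max(0, q_i - 0.6))$ on \Line{WeightDef} is the composition of $10$-Lipschitz truncations with a $10$-scaling of $q_i$, hence the map $q \mapsto w(q)$ is $10$-Lipschitz on $[0,1]$. Combining with the previous step gives $|w_i - w_i'| \leq 10 |q_i - q_i'| \leq 10/k$ for all $i \geq 2$, which is the first assertion.

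Finally, for the bound on $|W - W'|$, I would split the sum as $|W - W'| \leq |w_1 - w_1'| + \sum_{i=2}^{k} |w_i - w_i'|$. The first term is at most $1$ because $w_1, w_1' \in [0,1]$, and the second term is at most $(k-1) \cdot 10/k \leq 10$ by the first part of the claim. Adding gives $|W - W'| \leq 11$. There is no genuine obstacle here: the only subtlety is correctly observing that although $w_1$ could swing by a full unit, each of the other $k-1$ weights swings by only $O(1/k)$, which is exactly the stability property that will be needed to control $\dist(\mu, \mu')$ in the subsequent privacy argument.
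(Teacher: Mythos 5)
Your argument is correct and matches the paper's proof: bound $|q_i - q_i'| \leq 1/k$ for $i \geq 2$ since only the $j=1$ term can differ, pass through the $10$-Lipschitz weight function to get $|w_i - w_i'| \leq 10/k$, and sum to get $|W - W'| \leq 1 + (k-1)\cdot 10/k < 11$. The paper compresses the first two steps into one remark, but the underlying reasoning is identical.
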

\begin{proof}
    That $|w_i - w_i'| \leq 10/k$ for $2 \leq i \leq k$ follows from the fact that $Y_i = Y_i'$ for $2 \leq i \leq k$
    (and because \Line{WeightDef} implies that changing a single datapoint can cause $w_i$ can change by at most $10/k$).
    Next, $|W - W'| \leq |w_1 - w_1'| + \sum_{2 \leq i \leq k} |w_i - w_i'| \leq 1 + (k-1) \cdot 10 / k < 11$.
\end{proof}

Now, let us proceed to bound $\dist(\mu, \mu')$.
Using repeated applications of convexity, we have
\begin{align}
    \dist(\mu, \mu')
    & = \dist\left( \frac{w_1}{W} Y_1 + \sum_{i=2}^k \frac{w_i}{W} Y_i, \frac{w_1'}{W'} Y_1' + \sum_{j=2}^k \frac{w_j'}{W'} Y_j \right) \notag \\
    & \leq \frac{w_1}{W} \dist\left( Y_1, \frac{w_1'}{W'} Y_1' + \sum_{j=2}^k \frac{w_j'}{W'} Y_j \right) \notag \\
    & +
    \left(1 - \frac{w_1}{W} \right) \dist\left( \frac{1}{1-w_1/W} \sum_{i=2}^k \frac{w_i}{W} Y_i, \frac{w_1'}{W'} Y_1' + \sum_{j=2}^k \frac{w_j'}{W'} Y_j \right) \notag \\
    & \leq \frac{w_1}{W}\frac{w_1'}{W'} \dist\left( Y_1, Y_1'\right) \EquationName{dist1} \\
    & + \frac{w_1}{W} \left( 1 - \frac{w_1'}{W'} \right) \dist\left( Y_1, \frac{1}{1-w_1'/W'} \sum_{j=2}^k \frac{w_j'}{W'} Y_j \right) \EquationName{dist2} \\
    & + \left(1 - \frac{w_1}{W}\right) \frac{w_1'}{W'} \dist\left( \frac{1}{1-w_1/W} \sum_{i=2}^k \frac{w_i}{W} Y_i, Y_1' \right) \EquationName{dist3} \\
    & + \left(1 - \frac{w_1}{W} \right) \left(1 - \frac{w_1'}{W'} \right) \dist\left( 
        \frac{1}{1-w_1/W} \sum_{i=2}^k \frac{w_i}{W} Y_i, \frac{1}{1-w_1'/W'} \sum_{j=2}^k \frac{w_i'}{W'} Y_j \EquationName{dist4}
    \right),
\end{align}
where we used convexity in the first argument for the first inequality and convexity in the second argument for the second inequality.
We now bound each \Equation{dist1}, \Equation{dist2}, \Equation{dist3}, and \Equation{dist4} separately.
To do so, we make use of the following claim.
\begin{claim}
    \ClaimName{PrivacyTriangle}
    Suppose that $w_i, w_j' > 0$. Then $\dist(Y_i, Y_j') \leq 2r$.
\end{claim}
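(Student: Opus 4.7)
The plan is a short pigeonhole argument followed by one application of the approximate triangle inequality from \Definition{semimetric}. The key observation is that $D$ and $D'$ differ in at most one coordinate, so after the reindexing already performed in the parent proof we have $Y_\ell = Y_\ell'$ for every $\ell \in \{2, \ldots, k\}$. Any witness index drawn from this range is therefore the \emph{same} point in both the primed and unprimed instances, which is exactly what will let us connect $Y_i$ to $Y_j'$ through a common intermediate.

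First I would unpack the hypotheses via \Line{WeightDef}: $w_i > 0$ is equivalent to $q_i > 0.6$, and likewise for $w_j' > 0$. Writing $A = \{\ell \in [k] : \dist(Y_i, Y_\ell) \leq r/t\}$ and $B = \{\ell \in [k] : \dist(Y_j', Y_\ell') \leq r/t\}$, this means $|A|, |B| > 0.6\, k$. Restricting each set to $\{2, \ldots, k\}$ discards at most one index on each side, so inclusion--exclusion yields
\[
    \bigl|A \cap B \cap \{2, \ldots, k\}\bigr| \;\geq\; (0.6\, k - 1) + (0.6\, k - 1) - (k-1) \;=\; 0.2\, k - 1,
\]
which is strictly positive under the standing assumption $k \geq 140$. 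I then pick any $\ell^\star$ in this intersection.

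For this $\ell^\star$ we have $\dist(Y_i, Y_{\ell^\star}) \leq r/t$ directly, and using symmetry together with $Y_{\ell^\star} = Y_{\ell^\star}'$ we also obtain $\dist(Y_{\ell^\star}, Y_j') = \dist(Y_j', Y_{\ell^\star}') \leq r/t$. Both distances are at most $r$ (since $t \geq 1$), so the $t$-approximate $r$-restricted triangle inequality applies and gives
\[
    \dist(Y_i, Y_j') \;\leq\; t \cdot \bigl(\dist(Y_i, Y_{\ell^\star}) + \dist(Y_{\ell^\star}, Y_j')\bigr) \;\leq\; t \cdot \left( \frac{r}{t} + \frac{r}{t} \right) \;=\; 2r,
\]
which is the claim.

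I do not expect any real obstacle. The only step that requires any care is the pigeonhole count producing a nonempty intersection, and it is precisely this count that forces the lower bound $k \geq 140$ appearing in the privacy hypothesis of \Lemma{new_reduction}; everything else is an immediate appeal to the definitions of $q_i$, $w_i$ and to the semimetric axioms.
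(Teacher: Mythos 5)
Your proof is correct and follows essentially the same route as the paper's: both arguments use the pigeonhole/inclusion–exclusion bound to find a common witness index $\ell^\star \geq 2$ (so that $Y_{\ell^\star} = Y_{\ell^\star}'$), and then invoke the $t$-approximate $r$-restricted triangle inequality. The only cosmetic difference is that the paper first shows $|S_i \cap S_j'| \geq 0.2k \geq 2$ and then observes at least one element of the intersection lies in $\{2,\ldots,k\}$, whereas you intersect with $\{2,\ldots,k\}$ up front and count directly; both counting steps only actually require $k$ to be a small constant (the paper uses $k \geq 10$), not the full $k \geq 140$ you cite.
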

\begin{proof}
    Let $S_i = \{ s \in [k] \,:\, \dist(Y_i, Y_s) \leq r / t\}$ and $S_j' = \{s \in [k] \,:\, \dist(Y_j', Y_s') \leq r / t\}$.
    Since $w_i, w_j'>0$, we must have $|S_i| \geq 0.6k$ and $|S_j'| \geq 0.6k$.
    Hence, $|S_i \cap S_j'| = |S_i| + |S_j'| - |S_i \cup S_j'| \geq 0.2k \geq 2$ since $k \geq 10$.
    So there exists $s \in S_i \cap S_j'$ such that $2 \leq s \leq k$ which implies that $Y_s = Y_s'$.
    In particular, for this $s$, we have $\dist(Y_i, Y_s) \leq r/t$ and $\dist(Y_j', Y_s) \leq r/t$.
    Recalling that $\dist$ satisfies a $t$-approximate $r$-restricted triangle inequality, we have
    $\dist(Y_i, Y_j') \leq t(\dist(Y_i, Y_s) + \dist(Y_j', Y_s)) \leq 2r$.
\end{proof}
By \Claim{PrivacyTriangle}, we have that \Equation{dist1} is bounded by 
\begin{equation}
    \EquationName{distBound1}
    \frac{w_1 w_1'}{WW'} \dist(Y_1, Y_1') \leq \frac{w_1 w_1'}{WW'} 2r
\end{equation}
We now bound \Equation{dist2} which is very similar to the bound of \Equation{dist1} we just obtained.
Using convexity, we have
\begin{equation}
    \EquationName{distBound2}
    \begin{aligned}
    \frac{w_1}{W} \left( 1 - \frac{w_1'}{W'} \right) \dist\left( Y_1, \frac{1}{1-w_1'/W'} \sum_{j=2}^k \frac{w_j'}{W'} Y_j \right)
    & \leq \frac{w_1}{W} \sum_{j=2}^k \frac{w_j'}{W'} \dist(Y_1, Y_j) \\
    & \leq \frac{w_1}{W} \sum_{j=2}^k \frac{w_j'}{W'} \cdot 2r,
    \end{aligned}
\end{equation}
where we applied \Claim{PrivacyTriangle} for the last inequality.
Similarly, we have
\begin{equation}
    \EquationName{distBound3}
    \begin{aligned}
    \left(1 - \frac{w_1}{W}\right) \frac{w_1'}{W'} \dist\left( \frac{1}{1-w_1/W} \sum_{i=2}^k \frac{w_i}{W} Y_i, Y_1' \right)
    \leq \frac{w_1'}{W'} \sum_{i=2}^k \frac{w_j}{W} 2r.
    \end{aligned}
\end{equation}
Note that \Equation{distBound1}, \Equation{distBound2}, \Equation{distBound3} shows that the sum of \Equation{dist1}, \Equation{dist2}, \Equation{dist3} is bounded above by $\frac{2r}{W} + \frac{2r}{W'}$.

Finally, we bound \Equation{dist4}.
We require the following claim whose proof is similar to that of \Claim{PrivacyTriangle}.
\begin{claim}
    \ClaimName{PrivacyTriangle2}
    Let $S = \{ 2 \leq i \leq k \,:\, \max(w_i, w_i') > 0 \}$.
    Then for all $i, j \in S$, we have $\dist(Y_i, Y_j) \leq 2r$.
\end{claim}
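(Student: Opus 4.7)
The plan is to mimic the proof of \Claim{PrivacyTriangle} with one small bookkeeping adjustment that accounts for the fact that the claim uses only the unprimed quantities $Y_i, Y_j$ while membership in $S$ can be witnessed by $w_i'$ or $w_j'$. First, for each index $\ell \in [k]$, I would define the ``close neighbors'' set $S_\ell = \{ s \in [k] \,:\, \dist(Y_\ell, Y_s) \leq r/t \}$, so that $q_\ell = |S_\ell|/k$. By the definition of $w_\ell$ on \Line{WeightDef}, the condition $w_\ell > 0$ is equivalent to $q_\ell > 0.6$, which immediately gives $|S_\ell| > 0.6 k$.

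The key observation is that $\ell \in S$ only guarantees that at least one of $w_\ell, w_\ell'$ is positive. In the case that $w_\ell = 0$ but $w_\ell' > 0$, I would instead work with $S_\ell' = \{ s \in [k] \,:\, \dist(Y_\ell', Y_s') \leq r/t\}$, which satisfies $|S_\ell'| > 0.6k$. Since $\ell \geq 2$ we have $Y_\ell = Y_\ell'$, and since $Y_s = Y_s'$ for all $s \geq 2$, the sets $S_\ell$ and $S_\ell'$ can differ at most in whether the index $1$ is included. Therefore $|S_\ell| \geq |S_\ell'| - 1 > 0.6k - 1$. Combining the two cases, for every $\ell \in S$ I obtain the uniform lower bound $|S_\ell| > 0.6k - 1$.

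Given any $i, j \in S$, inclusion–exclusion then yields
\[
    |S_i \cap S_j| \;\geq\; |S_i| + |S_j| - k \;>\; 0.2\,k - 2,
\]
which is strictly positive since the privacy hypothesis of \Lemma{new_reduction} supplies $k \geq 140$. Picking any $s \in S_i \cap S_j$ gives $\dist(Y_i, Y_s), \dist(Y_j, Y_s) \leq r/t \leq r$, so the $t$-approximate $r$-restricted triangle inequality in \Definition{semimetric} applies and yields
\[
    \dist(Y_i, Y_j) \;\leq\; t\left(\tfrac{r}{t} + \tfrac{r}{t}\right) \;=\; 2r,
\]
as claimed.

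I do not expect any serious obstacle here: the argument is essentially a cosmetic variant of \Claim{PrivacyTriangle}. The only subtlety is the ``$-1$'' correction coming from the fact that the neighboring change can shift membership of a single coordinate between $S_\ell$ and $S_\ell'$; this is easily absorbed by the constant $k \geq 140$ from the privacy hypothesis.
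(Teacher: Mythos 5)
Your proof is correct and takes essentially the same route as the paper: define the close-neighbor sets $S_i, S_j$, lower-bound their sizes by $0.6k - 1$ using the fact that membership in $S$ can be witnessed by either the primed or unprimed weight (with the primed case losing at most one element), apply inclusion–exclusion to find a common $s$, and close with the restricted triangle inequality. The only cosmetic difference is that the paper also insists on choosing $s \geq 2$, which is actually unnecessary here since both $S_i$ and $S_j$ are defined using the unprimed $Y$'s; your version omits that step, which is a small simplification.
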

\begin{proof}
    Let $S_i = \{ s \in [k] \,:\, \dist(Y_i, Y_s) \leq r / t\}$ and $S_j = \{s \in [k] \,:\, \dist(Y_j, Y_s) \leq r / t\}$.
    Since $\max(w_i, w_i'), \max(w_j, w_j') > 0$, we must have $|S_i| \geq 0.6k-1$ and $|S_j| \geq 0.6k-1$.
    Hence, $|S_i \cap S_j| = |S_i| + |S_j| - |S_i \cup S_j| \geq 0.2k - 2 > 1$ so $|S_i \cap S_j| \geq 2$.
    So there exists $s \in S_i \cap S_j$ such that $s \geq 2$.
    In particular, for this $s$, we have $\dist(Y_i, Y_s) \leq r/t$ and $\dist(Y_j, Y_s) \leq r/t$.
    Recalling that $\dist$ satisfies a $t$-approximate $r$-restricted triangle inequality, we have
    $\dist(Y_i, Y_j) \leq t(\dist(Y_i, Y_s) + \dist(Y_j, Y_s)) \leq 2r$.
\end{proof}

Note that $\sum_{i=2}^k \frac{w_i}{W} Y_i = \sum_{i \in S} \frac{w_i}{W} Y_i$
and $\sum_{i=2}^k \frac{w_i'}{W'} Y_i = \sum_{i \in S} \frac{w_i'}{W'} Y_i$.
Now, we have
\begin{align*}
    \dist & \left( \frac{1}{1-w_1/W} \sum_{i\in S} \frac{w_i}{W}Y_i, \frac{1}{1-w_1'/W'} \sum_{i\in S} \frac{w_i'}{W'} Y_i \right) \\
    & \leq\left({\phi} + \max_{i, j \in S} \dist(Y_i, Y_j) \right) \sum_{i\in S}\left| \frac{w_i/W}{1-w_1/W} - \frac{w_i'/W'}{1-w_1'/W'} \right|.
\end{align*}

Where we used the fact that $\dist$ satisfies locality in \Definition{semimetric}. Note that $\dist(Y_i, Y_j) \leq 2r$ for all $i, j \in S$.
Also, by the \Claim{Delta_Weights} we see that the right hand side is bounded by $100/k$. Therefore, the term in \Equation{dist4} is upper bounded by $100(2r+\phi)/k$. We now state and prove \Claim{Delta_Weights}.

\begin{claim}
\ClaimName{Delta_Weights}
    $\sum_{i\in S}\left| \frac{w_i/W}{1-w_1/W} - \frac{w_i'/W'}{1-w_1'/W'} \right| \leq 100/k$.
\end{claim}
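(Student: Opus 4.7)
The plan is to prove the claim by unfolding the definitions and applying a standard ``quotient'' identity. Note that $1 - w_1/W = W_S / W$ where $W_S \coloneqq W - w_1 = \sum_{i \in S} w_i$ (the last equality because $w_j = 0$ for $j \ge 2$ with $j \notin S$), and similarly $1-w_1'/W' = W_S'/W'$ with $W_S' = \sum_{i \in S} w_i'$. Thus $\frac{w_i/W}{1-w_1/W} = \frac{w_i}{W_S}$ and the sum I must bound is $\sum_{i\in S} \bigl| \tfrac{w_i}{W_S} - \tfrac{w_i'}{W_S'} \bigr|$.

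First, I would apply the identity
\[
    \frac{w_i}{W_S} - \frac{w_i'}{W_S'} \;=\; \frac{w_i\,(W_S' - W_S)}{W_S \, W_S'} \;+\; \frac{w_i - w_i'}{W_S'},
\]
take absolute values, sum over $i\in S$, and use $\sum_{i\in S} w_i/W_S = 1$ to obtain
\[
    \sum_{i \in S} \left| \frac{w_i}{W_S} - \frac{w_i'}{W_S'} \right|
    \;\le\; \frac{|W_S - W_S'|}{W_S'} \;+\; \frac{1}{W_S'} \sum_{i \in S} |w_i - w_i'|.
\]
By \Claim{W_bound}, $|w_i - w_i'| \le 10/k$ for every $i \ge 2$, so (since $|S| \le k-1$) the second numerator is at most $10$; moreover $|W_S - W_S'| = \bigl|\sum_{i=2}^k (w_i - w_i')\bigr| \le 10$ as well. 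So both terms share the numerator bound $10$ and together contribute at most $20/W_S'$.

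The main obstacle is obtaining a lower bound on $W_S'$, which really requires arguing that the ``mirror'' dataset $D'$ is also in the populous regime. Here I would appeal to the sensitivity calculation already made in the privacy proof: the statistic $Q$ has sensitivity $\le 2/k$, so in Case 2 we have $Q' \ge Q - 2/k \ge 0.8 - 2/k$. Re-running the counting argument of \Claim{W_LB} with this slightly weaker threshold (for $k \ge 140$, $2/k$ is negligible) gives $W' \ge k/4$, hence $W_S' = W' - w_1' \ge k/4 - 1 \ge k/5$.

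Combining, $\sum_{i \in S} \bigl| \tfrac{w_i}{W_S} - \tfrac{w_i'}{W_S'} \bigr| \le 20/W_S' \le 100/k$, as claimed. The rest of the privacy proof then uses this bound together with the $\phi$-locality and the $2r$ diameter bound from \Claim{PrivacyTriangle2} to conclude $\dist(\mu,\mu') \le 100(2r+\phi)/k \le 400(r+\phi)/k$, which is exactly the radius within which the masking mechanism $\cB$ is $(\eps,\delta)$-indistinguishable.
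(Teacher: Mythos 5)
Your proof is correct and uses the same algebraic decomposition as the paper: $\frac{w_i}{W_S} - \frac{w_i'}{W_S'} = \frac{w_i(W_S'-W_S)}{W_S W_S'} + \frac{w_i - w_i'}{W_S'}$, followed by summing over $S$ and using $\sum_{i\in S} w_i/W_S = 1$. The one place you diverge is in lower-bounding $W_S' = W'-w_1'$: you re-run the argument of \Claim{W_LB} on the dataset $D'$ after noting that the sensitivity of $Q$ (already established in the privacy proof) gives $Q' \ge 0.8 - 2/k$, yielding $W' \ge k/4$ and hence $W_S' \ge k/5$; the paper instead writes $W_S' = W'-w_1' \ge W - |W-W'| - w_1' \ge k/3 - 12$ directly from \Claim{W_LB} (applied to $D$) and \Claim{W_bound}. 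Both routes are sound and give $k$-linear lower bounds with comfortable constants for $k\ge 140$; the paper's is marginally tighter, but yours is self-contained in the sense that it does not need the $|W-W'|\le 11$ part of \Claim{W_bound}, only the $|w_i-w_i'|\le 10/k$ part (which you also use to bound $|W_S-W_S'|<10$ directly, slightly sharper than the paper's $\le 13$).
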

\begin{proof}
\begin{align*}
    &\sum_{i\in S}\left| \frac{w_i/W}{1-w_1/W} - \frac{w_i'/W'}{1-w_1'/W'} \right|=\sum_{i\in S}\left| \frac{w_i}{W-w_1} - \frac{w_i'}{W'-w_1'} \right|\\
    &\leq \sum_{i\in S}\left( \left| \frac{w_i}{W-w_1} - \frac{w_i}{W'-w_1'} \right| + \left| \frac{w_i}{W'-w_1'} - \frac{w_i'}{W'-w_1'} \right|\right)\\
    &\leq \sum_{i\in S}\left( \frac{w_i}{W}\left| \frac{(W'-W)+(w_1-w_1')}{(1-w_1/W)(W'-w_1')}\right| +  |w_i-w_i'| \left|\frac{1}{W'-w_1'}\right| \right)\\
    &\leq \sum_{i\in S}\left( \frac{w_i}{W}\left| \frac{11+2}{(1 - 3/k)(k/3 - 12)}\right| +  |10/k|\frac{1}{k/3 - 12}  \right)\\
    &\leq \left(\frac{39}{0.9(k-36)}\right) + \frac{30}{k-36}
    \leq \frac{74}{k - 36} \leq 100/k.
\end{align*}
In the second last line, we used that $|1-w_1/W| = 1 - w_1/W \geq 1 - 1/W \geq 1 - 3/k$ by \Claim{W_LB} (provided $k > 3$) and $|W' - w_1'| = W' - w_1' \geq W - |W - W'| - w_1' \geq k/3 - 12$ by \Claim{W_LB} and \Claim{W_bound} (provided $k > 36$).
For the last line, we used that $\sum_{i \in S} w_i / W \leq 1$, $\sum{i \in S} 1/k \leq 1$, and $1-3/k \geq 0.9$ for $k \geq 30$.
Finally, the last inequality is true provided $k \geq 140$.

We have now bounded \Equation{distBound1}, \Equation{distBound2}, \Equation{distBound3}, and \Equation{dist4}. The sum of these terms is at most $\frac{2r}{W} + \frac{2r}{W'} + 100(2r+\phi)/k$. Using the \Claim{W_LB} again we can upper bound this sum by 
$(212r+100\phi)/k < 400(r+\phi)/k$.
\end{proof}

We now verify that the algorithm is $(2\eps, {4e^\eps\delta})$-DP in the present case.
Since $\dist(\mu, \mu') \leq 400(r + \phi)/k$ and $\cB$ is a $(400(r+\phi)/k, \eps, \delta)$-masking mechanism,
we know that $\cB(\mu), \cB(\mu')$ are $(\eps, \delta)$-indistinguishable.
Let $T \subseteq \cY$.
Then
\begin{align*}
    \prob{\cM(D) \in T}
    & = \prob{\cM(D) \neq \perp} \prob{\cB(\mu) \in T} \\
    & \leq (e^{\eps} \prob{\cM(D') \neq \perp} + \delta) (e^{\eps} \prob{\cB(\mu') \in T} + \delta) \\
    & = e^{2\eps} \prob{\cM(D') \neq \perp} \prob{\cB(\mu') \in T} \\
    & + e^{\eps} \delta (\prob{\cM(D') \neq \perp} + \prob{\cB(\mu') \in T})
      + \delta^2 \\
    & \leq e^{2\eps} \prob{\cM(D') \in T} + 3e^{\eps} \delta,
\end{align*}
which establishes \Equation{ReductionDP1}.
Finally,
\begin{align*}
    \prob{\cM(D) \in T \cup \{\perp\}}
    & = \prob{\cM(D) = \perp} + \prob{\cM(D) \in T} \\
    & \leq (e^{\eps} \prob{\cM(D') = \perp} + \delta) + (e^{2\eps} \prob{\cM(D') \in T} + 3e^{\eps} \delta) \\
    & \leq e^{2\eps} \prob{\cM(D') \in T \cup \{\perp\}} + 4e^{\eps} \delta,
\end{align*}
which establishes \Equation{ReductionDP2}.
\end{proof}

\section{Missing proofs from \Section{covariance}}
\AppendixName{covariance}
\subsection{Proof of \Lemma{CovarianceTechnical}}
\AppendixName{CovarianceTechnical}
\Lemma{CovarianceTechnical} follows easily from the following lemma.
\begin{lemma}
    \LemmaName{covarIndistinguishable}
    Suppose that
    \[
        \max\{\| \Sigma_1^{-1/2} \Sigma_2 \Sigma_1^{-1/2} - I_d \|, \|\Sigma_2^{-1/2} \Sigma_1 \Sigma_2^{-1/2} - I_d\|\} \leq \gamma \leq 1/2.
    \]
    Then for every $\delta_0 > 0$, $\Sigma_1^{1/2}(I + \eta G)$ and $\Sigma_2^{1/2}(I + \eta G)$ are $(\eps_0(\delta_0), \delta_0)$-indistinguishable where
    \begin{equation}
        \EquationName{covarIndistinguishable}
        \eps_0(\delta_0) = 
        \frac{\gamma^2d}{2}(d + 1/\eta^2) + 2 d \gamma \sqrt{\ln(2/\delta_0)} + 2 \gamma \ln(2/\delta_0) + 3\gamma \sqrt{d} \sqrt{\ln(2/\delta_0)}/\eta.
    \end{equation}
\end{lemma}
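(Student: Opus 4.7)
The plan is to invoke \Lemma{privLossDP}. Since $\Sigma^{1/2}(I+\eta G)$ is an affine function of the $d^2$ i.i.d.~Gaussian entries of $G$, each output distribution is itself a multivariate Gaussian on $\bR^{d^2}$. Specifically, viewing the $d\times d$ matrix column-wise, $\cD_i := $ law of $\Sigma_i^{1/2}(I+\eta G) = \cN(\mu_i, \Lambda_i)$ with $\mu_i := \vvec(\Sigma_i^{1/2})$ and $\Lambda_i := \eta^2 (I_d \otimes \Sigma_i)$. By \Claim{privLossNormal}, writing $Y = \mu_1 + Z$ with $Z \sim \cN(0,\Lambda_1)$ under $\cD_1$,
\[
    2 \privLoss{\cD_1}{\cD_2}(Y) = \ln\det(\Lambda_2 \Lambda_1^{-1}) + Z\transpose(\Lambda_2^{-1}-\Lambda_1^{-1})Z + 2(\mu_1-\mu_2)\transpose \Lambda_2^{-1} Z + (\mu_1-\mu_2)\transpose \Lambda_2^{-1}(\mu_1-\mu_2).
\]
By symmetry (the hypothesis is symmetric in $\Sigma_1,\Sigma_2$), bounding the tail of this quantity under $\cD_1$ will simultaneously yield the required bound under $\cD_2$. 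The idea is then to decompose the right-hand side into a deterministic piece, which will account for the term $\tfrac{\gamma^2 d}{2}(d+1/\eta^2)$, and two random pieces to which we apply \Lemma{chiSquareConcentration} and \Lemma{gaussConcentration}.

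I first reduce all terms to quantities involving the single matrix $M := \Sigma_1^{-1/2}\Sigma_2 \Sigma_1^{-1/2}$. The symmetric version of the hypothesis forces $\lambda_i(M) \in [1/(1+\gamma), 1+\gamma]$, so $|\lambda_i(M)-1|\leq \gamma$ and $|1/\lambda_i(M)-1|\leq\gamma$. Using $\ln\det(\Lambda_i) = d^2\ln\eta^2 + d\ln\det(\Sigma_i)$ and the Kronecker identity $\Lambda_1^{1/2}\Lambda_2^{-1}\Lambda_1^{1/2} = I_d \otimes N$ where $N = \Sigma_1^{1/2}\Sigma_2^{-1}\Sigma_1^{1/2} = M^{-1}$, the $Z$-quadratic and log-determinant terms combine to give mean value $d\sum_i (u_i - 1 - \ln u_i)$ where $u_i = 1/\lambda_i(M)$; by \Claim{KLInequality} this is at most $d\sum_i (u_i-1)^2 \le d^2 \gamma^2$, divided by two. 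For the mean-shift quadratic form, vectorization identities give $(\mu_1-\mu_2)\transpose\Lambda_2^{-1}(\mu_1-\mu_2) = \eta^{-2}\|\Sigma_2^{-1/2}\Sigma_1^{1/2} - I\|_F^2$. The key algebraic observation is that $B := \Sigma_2^{-1/2}\Sigma_1^{1/2}$, being the product of two positive definite matrices, has positive real eigenvalues equal to $\sqrt{\lambda_i(M)}$, so expanding $\|B-I\|_F^2 = \Tr(BB\transpose) - 2\Tr(B) + d$ collapses to
\[
    \|B-I\|_F^2 = \sum_i \bigl(\sqrt{\lambda_i(M)}-1\bigr)^2 \leq d\gamma^2,
\]
contributing the remaining $\gamma^2 d/(2\eta^2)$ of the deterministic budget.

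For the random pieces I set $x = \ln(2/\delta_0)$. The zero-mean part of $Z\transpose(\Lambda_2^{-1}-\Lambda_1^{-1}) Z$ is handled via \Lemma{chiSquareConcentration} applied to $A' := \Lambda_1^{1/2}(\Lambda_2^{-1}-\Lambda_1^{-1})\Lambda_1^{1/2} = I_d\otimes (N-I)$; since $\|A'\|\le\gamma$ and $\|A'\|_F \le d\gamma$, we obtain a fluctuation at most $4d\gamma\sqrt{x} + 4\gamma x$ with probability $1-\delta_0/2$, which after dividing by two matches the terms $2d\gamma\sqrt{\ln(2/\delta_0)}+2\gamma\ln(2/\delta_0)$. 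The cross term $(\mu_1-\mu_2)\transpose \Lambda_2^{-1} Z$ is a univariate Gaussian with variance $v^2 = \eta^{-2} \Tr((\Sigma_1^{1/2}-\Sigma_2^{1/2}) \Sigma_2^{-1} \Sigma_1 \Sigma_2^{-1} (\Sigma_1^{1/2}-\Sigma_2^{1/2})) \le \eta^{-2}\|M\|\cdot\|B-I\|_F^2 \le (1+\gamma)d\gamma^2/\eta^2$, and \Lemma{gaussConcentration} with threshold $v\sqrt{2x}$ gives a contribution bounded by $3\gamma\sqrt{d\ln(2/\delta_0)}/\eta$ (with room to spare from $\sqrt{3}$ to $3$), with probability $1-\delta_0/2$.

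Taking a union bound over the two tail events yields $\prob{\privLoss{\cD_1}{\cD_2}(Y) > \eps_0(\delta_0)} \leq \delta_0$ under $\cD_1$; the symmetric inequality follows identically. Applying \Lemma{privLossDP} concludes $(\eps_0(\delta_0),\delta_0)$-indistinguishability. The main obstacle is the algebraic identification of $\|B-I\|_F^2$ in terms of the spectrum of $M$: without recognizing that the non-symmetric product $\Sigma_2^{-1/2}\Sigma_1^{1/2}$ has \emph{positive real} eigenvalues $\sqrt{\lambda_i(M)}$, one is tempted to use crude bounds like $\|B-I\|_F^2 \le d\|B-I\|^2$ which leak uncontrollable singular-vector-misalignment factors. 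The rest of the proof is a careful bookkeeping of Taylor remainders and concentration constants.
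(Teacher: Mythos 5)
Your proposal follows the paper's proof nearly step for step: view each output as a $d^2$-dimensional Gaussian $\cN(\vvec(\Sigma_i^{1/2}),\eta^2\bd(\Sigma_i))$; write the log-likelihood ratio via \Claim{privLossNormal}; isolate the deterministic KL piece and bound it by $\tfrac{\gamma^2 d}{2}(d+1/\eta^2)$ using the block-Kronecker structure and \Claim{KLInequality}; handle the quadratic fluctuation with \Lemma{chiSquareConcentration} and the linear cross term with \Lemma{gaussConcentration}; union bound and close with \Lemma{privLossDP}. The only presentational difference is that you unroll the paper's intermediate \Lemma{privLossNormal} inline rather than invoking it as a black box, which is of course the natural thing to do when reconstructing the proof from scratch.

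One caveat on the step you flag as ``the main obstacle.'' The assertion that $B=\Sigma_2^{-1/2}\Sigma_1^{1/2}$ has eigenvalues $\sqrt{\lambda_i(M)}$ is not correct when $\Sigma_1,\Sigma_2$ do not commute (aside from a sign/inverse typo: even in the commuting case the eigenvalues of $B$ are $1/\sqrt{\lambda_i(M)}$). What is true is that $B$ is similar to the positive definite matrix $\Sigma_1^{1/4}\Sigma_2^{-1/2}\Sigma_1^{1/4}$, so its eigenvalues are real and positive, and its \emph{singular} values equal the square roots of the eigenvalues of $B\transpose B=\Sigma_1^{1/2}\Sigma_2^{-1}\Sigma_1^{1/2}$; but the eigenvalues of $B$ are a genuinely different set, so $\Tr(B)$ is not $\sum_i 1/\sqrt{\lambda_i(M)}$ and the trace expansion does not literally ``collapse.'' You are in good company: the paper's own \Claim{KLBound2} rests on the parallel (also unjustified for non-normal $B$) assertion that the singular values of $I-B$ are $1-\sqrt{\lambda_i}$. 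So this is not a gap in your proposal \emph{relative} to the paper's proof --- both elide the same bit of matrix algebra --- but it is a spot where a cleaner justification of $\|I-\Sigma_2^{-1/2}\Sigma_1^{1/2}\|_F^2\leq d\gamma^2$ would be welcome in either write-up.
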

Indeed, given \Lemma{covarIndistinguishable}, \Lemma{CovarianceTechnical} follows by setting $\gamma$ appropriately
so that each term in \Equation{covarIndistinguishable} is at most $\eps / 4$ and post-processing,
i.e.~since $\Sigma_1^{1/2}(I + \eta G)$ and $\Sigma_2^{1/2}(I + \eta G)$ are $(\eps, \delta)$-indistinguishable
then so are $\Sigma_1^{1/2}(I + \eta G)(I + \eta G)\transpose \Sigma_1^{1/2}$ and $\Sigma_2^{1/2}(I + \eta G)(I + \eta G)\transpose \Sigma_2^{1/2}$.

First, we require the following technical lemma which asserts that for Gaussians $\cN(\mu, \Sigma), \cN(\wmu, \wSigma)$,
if $g \sim \cN(0, I_d)$ then the random variable $\privLoss{\cN(\mu, \Sigma)}{\cN(\widehat{\mu}, \widehat{\Sigma})}(\Sigma^{1/2} g + \mu)$
concentrates tightly around $\KL{\cN(\mu, \Sigma)}{\cN(\what{\mu}, \what{\Sigma})}$.
The plan is to apply the lemma with $\mu \in \bR^{d^2}$ (resp.~$\what{\mu}$) is $\Sigma_1^{1/2}$ (resp.~$\Sigma_2^{1/2}$) but viewed as a vector of length $d^2$
and $\Sigma$ (resp.~$\what{\Sigma}$) is the covariance matrix of (the vector) $\Sigma_1^{1/2}(I + \eta G)$ (resp.~$\Sigma_2^{1/2}(I + \eta G)$).

\begin{lemma}
	\LemmaName{privLossNormal}
    Let $\mu, \wmu \in \bR^d$ and $\Sigma, \wSigma \in \pd^d$.
    Let $g \sim \cN(0, I_d)$.
    Then
	\begin{align*}
        \privLoss{\cN(\mu, \Sigma)}{\cN(\wmu, \wSigma)}(\Sigma^{1/2} g + \mu)
		& \leq \KL{\cN(\mu, \Sigma)}{\cN(\wmu, \wSigma)} \\
		& + 2\|\Sigma^{1/2}\wSigma^{-1} \Sigma^{1/2}-I\|_F \sqrt{\ln(2/\delta)} + 2\|\Sigma^{1/2}\wSigma^{-1} \Sigma^{1/2}-I\| \ln(2/\delta) \\
		& + \|\Sigma^{1/2} \wSigma^{-1} \Sigma^{1/2}\| \cdot \|\Sigma^{-1/2}(\wmu - \mu)\|_2 \sqrt{2\ln(2/\delta)}
	\end{align*}
	with probability at least $1-\delta$.
\end{lemma}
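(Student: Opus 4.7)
The plan is to expand $\privLoss{\cN(\mu,\Sigma)}{\cN(\wmu,\wSigma)}(\Sigma^{1/2}g + \mu)$ in closed form using \Claim{privLossNormal}, subtract off the KL divergence formula from \Fact{KLNormal}, and show that what remains is the sum of a centered quadratic form in $g$ and a centered linear form in $g$. Then one applies \Lemma{chiSquareConcentration} to the quadratic term, \Lemma{gaussConcentration} to the linear term, and takes a union bound with parameters $\delta/2$ each.

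More concretely, let $A = \Sigma^{1/2}\wSigma^{-1}\Sigma^{1/2} - I$ and $v = \Sigma^{1/2}\wSigma^{-1}(\mu - \wmu)$. Plugging $x = \Sigma^{1/2}g+\mu$ into \Equation{privLossNormal}, the term $(x-\mu)\transpose \Sigma^{-1}(x-\mu)$ collapses to $\|g\|_2^2$, while expanding $(x-\wmu)\transpose \wSigma^{-1}(x-\wmu)$ yields $g\transpose(I+A)g + 2 v\transpose g + (\mu-\wmu)\transpose \wSigma^{-1}(\mu-\wmu)$. After subtracting the KL divergence, which contributes $\tfrac{1}{2}\Tr(A)$, $\tfrac{1}{2}(\mu-\wmu)\transpose\wSigma^{-1}(\mu-\wmu)$, and the log-determinant piece that cancels with the prefactor in the privacy loss, the identity
\[
    \privLoss{\cN(\mu,\Sigma)}{\cN(\wmu,\wSigma)}(\Sigma^{1/2}g+\mu) - \KL{\cN(\mu,\Sigma)}{\cN(\wmu,\wSigma)} = \tfrac{1}{2}\bigl(g\transpose A g - \Tr(A)\bigr) + v\transpose g
\]
emerges. (This identity is the key computation; the rest is routine.)

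To finish, apply \Lemma{chiSquareConcentration} with $x = \ln(2/\delta)$ to the symmetric matrix $A$: with probability at least $1-\delta/2$,
\[
    \tfrac{1}{2}(g\transpose A g - \Tr(A)) \leq 2\|A\|_F\sqrt{\ln(2/\delta)} + 2\|A\|\ln(2/\delta),
\]
which produces the first two error terms since $A = \Sigma^{1/2}\wSigma^{-1}\Sigma^{1/2} - I$. For the linear part, note $v\transpose g \sim \cN(0,\|v\|_2^2)$, so by \Lemma{gaussConcentration} with probability $1-\delta/2$ we have $v\transpose g \leq \|v\|_2\sqrt{2\ln(2/\delta)}$. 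Finally, factoring $v = (\Sigma^{1/2}\wSigma^{-1}\Sigma^{1/2})\Sigma^{-1/2}(\mu-\wmu)$ and using submultiplicativity of the operator norm gives
\[
    \|v\|_2 \leq \|\Sigma^{1/2}\wSigma^{-1}\Sigma^{1/2}\| \cdot \|\Sigma^{-1/2}(\wmu-\mu)\|_2,
\]
which is exactly the third error term. A union bound combines the two high-probability events to yield the stated inequality.

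The only non-mechanical step is the algebraic rearrangement that isolates $g\transpose A g - \Tr(A)$ and $v\transpose g$ from the ambient KL; the concentration bounds are then black-box invocations of the two lemmas already imported from \Section{prelim}.
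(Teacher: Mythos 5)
Your proof is correct and follows the same route as the paper's: plug $x = \Sigma^{1/2}g+\mu$ into \Claim{privLossNormal}, subtract the KL formula from \Fact{KLNormal} (noting $\Tr(\wSigma^{-1}\Sigma - I) = \Tr(A)$ by cyclicity), obtain the clean decomposition $\tfrac12(g\transpose A g - \Tr(A)) + v\transpose g$, and bound the two pieces via \Lemma{chiSquareConcentration} and \Lemma{gaussConcentration} with a union bound. The only cosmetic difference is that you explicitly isolate the centered decomposition before invoking concentration, whereas the paper bounds the quadratic and linear terms in place and then assembles the KL; the substance is identical.
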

The proof of \Lemma{privLossNormal} can be found in \Subsection{privLossNormal}.

We use \Lemma{privLossNormal} to show that if $\| \Sigma_1^{-1/2} \Sigma_2 \Sigma_1^{-1/2} - I \|$ and $\| \Sigma_2^{-1/2} \Sigma_1 \Sigma_2^{-1/2} - I \|$
are small then $\Sigma_1^{1/2}(I_d + \eta G)$ and $\Sigma_2^{1/2}(I_d + \eta G)$
are indistinguishable.
As a consequence, the matrices $\Sigma_1^{1/2}(I_d + \eta G)(I_d + \eta G)\transpose \Sigma_1^{1/2}$ and $\Sigma_2^{1/2}(I_d + \eta G)(I_d + \eta G)\transpose \Sigma_2^{1/2}$
are also indistinguishable.

Before we prove \Lemma{covarIndistinguishable}, it is convenient to set up some notation.
For a matrix $B = \begin{bmatrix} b_1 \cdots b_d \end{bmatrix} \in \bR^{d \times d}$, we write $\vvec(B)$
to denote a vector of length $d^2$ with the columns of $B$ stacked together.
More precisely,
\[
    \vvec(B) = \begin{bmatrix} b_1 \\ \vdots \\ b_d \end{bmatrix} \in \bR^{d^2}.
\]
Next, for a matrix $A = \begin{bmatrix} a_1 \cdots a_d \end{bmatrix} \in \bR^{k \times d}$, we let $\bd(A)$
be the block-diagonal matrix with $d$ copies of $A$ along its diagonal, i.e.
\[
	\bd(A) = \begin{bmatrix} A & 0 & \ldots & 0 \\ 0 & A & \ldots & 0 \\ \vdots & \vdots & \ddots & \vdots \\  0 & 0 & \ldots & A \end{bmatrix} \in \bR^{kd \times d^2}.
\]
Note that for a size-compatible matrix $\vvec(AB) = \bd(A) \vvec(B)$.
Indeed, $AB = \begin{bmatrix} Ab_1 \cdots Ab_d \end{bmatrix}$ so
\[
    \underbrace{\begin{bmatrix} Ab_1 \\ \vdots \\ Ab_d \end{bmatrix}}_{\vvec(AB)}
    = \underbrace{\begin{bmatrix} A & 0 & \ldots & 0 \\ 0 & A & \ldots & 0 \\ \vdots & \vdots & \ddots & \vdots \\  0 & 0 & \ldots & A \end{bmatrix}}_{\bd(A)}
    \underbrace{\begin{bmatrix} b_1 \\ \vdots \\ b_d \end{bmatrix}}_{\vvec(B)}.
\]
We require a few calculations before we can prove \Lemma{covarIndistinguishable}.

\begin{claim}
    \ClaimName{KLBound1}
    Suppose that $\|\Sigma_1^{1/2} \Sigma_2^{-1} \Sigma_1^{1/2} - I_d \| \leq \gamma$.
    Then
    \[
        \|\bd(\Sigma_1)^{1/2} \bd(\Sigma_2)^{-1} \bd(\Sigma_1)^{1/2} - I_{d^2}\|_F^2 = d \cdot \|\Sigma_1^{-1/2} \Sigma_2 \Sigma_1^{-1/2} - I_d \|_F^2 \leq d^2 \gamma^2.
    \]
\end{claim}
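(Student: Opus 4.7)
The proof is essentially a direct computation exploiting the block-diagonal structure of $\bd(\cdot)$. The plan is to reduce the Frobenius norm on the left-hand side to a Frobenius norm of a single $d \times d$ block, and then use the standard inequality $\|M\|_F^2 \leq d\|M\|^2$ for a $d \times d$ matrix.

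First I would observe that, by definition, $\bd(\Sigma_1)^{1/2}$ is the block-diagonal matrix with $d$ copies of $\Sigma_1^{1/2}$ down the diagonal (using the fact that taking the PSD square root commutes with forming $\bd(\cdot)$ for block-diagonal matrices with identical blocks), and likewise $\bd(\Sigma_2)^{-1}$ is the block-diagonal matrix with $d$ copies of $\Sigma_2^{-1}$. Since the product of block-diagonal matrices with conformable block structure is block-diagonal with blocks given by the products, it follows that
\[
    \bd(\Sigma_1)^{1/2}\,\bd(\Sigma_2)^{-1}\,\bd(\Sigma_1)^{1/2} - I_{d^2}
\]
is the block-diagonal matrix with $d$ copies of the $d \times d$ block $M := \Sigma_1^{1/2} \Sigma_2^{-1} \Sigma_1^{1/2} - I_d$.

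Next, I would compute the squared Frobenius norm. Since the Frobenius norm squared is the sum of squared entries and off-diagonal blocks are zero, it equals $d$ times the squared Frobenius norm of a single block:
\[
    \|\bd(\Sigma_1)^{1/2}\,\bd(\Sigma_2)^{-1}\,\bd(\Sigma_1)^{1/2} - I_{d^2}\|_F^2
    \;=\; d \cdot \|M\|_F^2.
\]
This gives the asserted equality (matching the intended form $d \cdot \|\Sigma_1^{1/2}\Sigma_2^{-1}\Sigma_1^{1/2} - I_d\|_F^2$).

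Finally, to obtain the numerical bound $d^2\gamma^2$, I would use the standard fact that for any $d \times d$ matrix $M$, $\|M\|_F^2 = \sum_{i=1}^d \sigma_i(M)^2 \leq d\,\|M\|^2$. Applying this with the hypothesis $\|M\| = \|\Sigma_1^{1/2}\Sigma_2^{-1}\Sigma_1^{1/2} - I_d\| \leq \gamma$ yields $\|M\|_F^2 \leq d\gamma^2$, and hence $d \cdot \|M\|_F^2 \leq d^2\gamma^2$. There is no real obstacle here; the only subtlety is to confirm the block-diagonal-product identity and to note that the stated claim is most naturally read with $\Sigma_1^{1/2}\Sigma_2^{-1}\Sigma_1^{1/2}$ on both sides (the hypothesis and conclusion use the same matrix, which is how the equality $= d \cdot \|\cdot\|_F^2$ actually holds).
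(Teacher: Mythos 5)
Your proposal is correct and takes essentially the same route as the paper: both exploit that $\bd(\cdot)$ commutes with square roots and inverses for block-diagonal matrices with identical blocks, reduce the $d^2\times d^2$ Frobenius norm to $d$ copies of a single block, and then apply $\|M\|_F^2 \leq d\|M\|^2$ together with the operator-norm hypothesis. You are also right to flag the apparent typo in the statement's middle term — the equality genuinely holds with $\Sigma_1^{1/2}\Sigma_2^{-1}\Sigma_1^{1/2}$ (matching the hypothesis and the left-hand side), not $\Sigma_1^{-1/2}\Sigma_2\Sigma_1^{-1/2}$, and the paper's own proof implicitly works with the former.
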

\begin{proof}
    The equality is because the matrices $\bd(\Sigma_1), \bd(\Sigma_2), I_{d^2}$ are all block diagonal
    with copies of $\Sigma_1, \Sigma_2, I_d$ along its diagonal.

    For the inequality, we have $\| \Sigma_1^{1/2} \Sigma_2^{-1} \Sigma_1^{1/2} - I_d \| \leq \gamma$ which implies
    $\|\Sigma_1^{1/2} \Sigma_2^{-1} \Sigma_1^{1/2} - I_d \|_F^2 \leq \gamma^2 \cdot d$.
    Multiplying both sides by $d$ gives the result.
\end{proof}

\begin{claim}
    \ClaimName{KLBound2}
    Suppose that $\|\Sigma_1^{1/2} \Sigma_2^{-1} \Sigma_1^{1/2} - I_d \| \leq \gamma$.
    Then $\|\eta^{-1} \bd(\Sigma_2)^{-1/2} (\vvec(\Sigma_2^{1/2}) - \vvec(\Sigma_1^{1/2})) \|_2^2 \leq \gamma^2 d / \eta^2$.
\end{claim}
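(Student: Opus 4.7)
The first step is to unpack the block-diagonal structure. For any matrix $X \in \bR^{d\times d}$, one has $\bd(A)\vvec(X) = \vvec(AX)$ since block-diagonal multiplication acts column-wise on the vectorization. Taking $A = \Sigma_2^{-1/2}$ and $X = \Sigma_2^{1/2} - \Sigma_1^{1/2}$ gives
\[
\bd(\Sigma_2)^{-1/2}\bigl(\vvec(\Sigma_2^{1/2}) - \vvec(\Sigma_1^{1/2})\bigr) = \vvec\bigl(I - \Sigma_2^{-1/2}\Sigma_1^{1/2}\bigr),
\]
so its squared $\ell_2$-norm equals $\|I - \Sigma_2^{-1/2}\Sigma_1^{1/2}\|_F^2$. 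After the common factor of $\eta^{-2}$ on both sides cancels, the claim reduces to showing the matrix bound $\|I - \Sigma_2^{-1/2}\Sigma_1^{1/2}\|_F^2 \leq d\gamma^2$.

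Next I would bring in the Loewner--Heinz inequality (operator monotonicity of $t \mapsto \sqrt{t}$). The hypothesis $\|\Sigma_1^{1/2}\Sigma_2^{-1}\Sigma_1^{1/2} - I\| \leq \gamma$ is equivalent to the Loewner inequalities $(1-\gamma)\Sigma_2 \preceq \Sigma_1 \preceq (1+\gamma)\Sigma_2$, which by Loewner--Heinz upgrade to $\sqrt{1-\gamma}\,\Sigma_2^{1/2} \preceq \Sigma_1^{1/2} \preceq \sqrt{1+\gamma}\,\Sigma_2^{1/2}$. Conjugating by $\Sigma_2^{-1/4}$ on both sides shows that the symmetric positive-definite matrix $D := \Sigma_2^{-1/4}\Sigma_1^{1/2}\Sigma_2^{-1/4}$ has all its eigenvalues in $[\sqrt{1-\gamma},\sqrt{1+\gamma}]$. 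Since $|\sqrt{1\pm\gamma}-1|\leq \gamma$ for $\gamma \in [0,1]$, we get $\|I - D\| \leq \gamma$ and therefore $\|I - D\|_F^2 \leq d\gamma^2$.

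The main remaining step is to transfer the bound on $\|I-D\|_F$ to $\|I-M\|_F$, where $M := \Sigma_2^{-1/2}\Sigma_1^{1/2}$. The useful observation is that $M = \Sigma_2^{-1/4} D \Sigma_2^{1/4}$, so $I - M = \Sigma_2^{-1/4}(I-D)\Sigma_2^{1/4}$ is \emph{similar} to the symmetric matrix $I-D$. This immediately yields $\tr((I-M)^k) = \tr((I-D)^k)$ for all $k$, and in particular $\tr((I-M)^2) = \|I-D\|_F^2 \leq d\gamma^2$. The obstacle is that $\tr((I-M)^2)$ is the sum of squared \emph{eigenvalues} of $I-M$, whereas the target $\|I-M\|_F^2 = \tr((I-M)^T(I-M))$ is the sum of squared \emph{singular values}, and Weyl majorization only bounds the former by the latter (the wrong direction). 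I would close the gap by expanding $\|I-M\|_F^2 = d - 2\tr(M) + \tr(M^TM) = d - 2\sum\mu_i + \sum\lambda_i$, where the eigenvalues $\mu_i$ of $M$ are the same as those of $D$ (hence real and in $[\sqrt{1-\gamma},\sqrt{1+\gamma}]$) and $\lambda_i \in [1-\gamma,1+\gamma]$ are the squared singular values. Combining the Weyl log-equality $\prod\mu_i = \prod\sqrt{\lambda_i}$ (both equal $|\det M|$) with $\sum\mu_i^2 \leq \sum\lambda_i$ and second-order Taylor expansions around $1$ controls $|\sum\mu_i - \sum\sqrt{\lambda_i}| = O(d\gamma^2)$, after which $\|I-M\|_F^2 = \sum(1-\sqrt{\lambda_i})^2 + 2(\sum\sqrt{\lambda_i} - \sum\mu_i) \leq d\gamma^2$, using $(1-\sqrt{\lambda_i})^2 \leq (\lambda_i-1)^2/(1+\sqrt{\lambda_i})^2$ to control the first sum. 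The delicate accounting of higher-order terms is the main technical hurdle.
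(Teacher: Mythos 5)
Your argument takes a genuinely different route from the paper: you symmetrize via Loewner--Heinz to work with the positive-definite matrix $D = \Sigma_2^{-1/4}\Sigma_1^{1/2}\Sigma_2^{-1/4}$, whereas the paper asserts directly that $1-\sqrt{\lambda_i}$ are the singular values of $I - \Sigma_2^{-1/2}\Sigma_1^{1/2}$ and concludes $\|I-M\|_F^2 = \sum_i(1-\sqrt{\lambda_i})^2$. The obstruction you flag (eigenvalues of $I-M$ versus singular values of $I-M$) is in fact a genuine gap in the paper's own proof. Since $M = \Sigma_2^{-1/2}\Sigma_1^{1/2}$ is generically \emph{not} a normal matrix, the singular values of $I-M$ are not $1 - \sigma_i(M)$, and indeed $\|I-M\|_F^2 = d - 2\tr(M) + \tr(M^TM)$ is strictly larger than $\sum(1-\sqrt{\lambda_i})^2 = d - 2\sum\sqrt{\lambda_i} + \sum\lambda_i$ whenever $\Sigma_1$ and $\Sigma_2$ do not commute, because Weyl's majorization gives $\tr(M)=\sum\mu_i < \sum\sqrt{\lambda_i}$. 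So the paper's displayed ``$=$'' is an inequality in the wrong direction, and your instinct that there is something to close here is correct. (Incidentally, there is also an arithmetic slip in the paper's last displayed line: $1 - 2(1-\gamma/2-\gamma^2/2) + (1+\gamma) = 2\gamma+\gamma^2$, not $\gamma^2$; the intended endpoint should have been $1-\gamma$, matching the fact that $(1-\sqrt{\lambda})^2$ is maximized on $[1-\gamma,1+\gamma]$ at $\lambda = 1-\gamma$.)

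That said, your fix is left incomplete, and this is the genuine gap in your proposal. You outline the right strategy --- exploit $\prod\mu_i = \prod\sqrt{\lambda_i}$ together with $\mu_i,\sqrt{\lambda_i}\in[\sqrt{1-\gamma},\sqrt{1+\gamma}]$ and second-order expansions of the logarithm --- but you stop at ``the delicate accounting of higher-order terms is the main technical hurdle'' without actually carrying it out. Pushing it through yields $\sum\sqrt{\lambda_i} - \sum\mu_i \le \tfrac12\sum(1-\sqrt{\lambda_i})^2 + (\text{higher order})$, and therefore $\|I-M\|_F^2 = \sum(1-\sqrt{\lambda_i})^2 + 2\bigl(\sum\sqrt{\lambda_i}-\sum\mu_i\bigr) \le 2\sum(1-\sqrt{\lambda_i})^2 + (\text{higher order}) = O(d\gamma^2)$, but this does not obviously recover the constant $1$ stated in \Claim{KLBound2}; for $\gamma$ up to $1/2$ the cubic remainders are not negligible. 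Since this bound only feeds into \Lemma{CovarianceTechnical} up to absolute constants, a weaker constant is harmless downstream; you should either finish the accounting with an explicit constant, or state the weaker $\|I-M\|_F^2 = O(d\gamma^2)$ and propagate the change, but as written your argument does not prove the claim.
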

\begin{proof}
    Note that 
    $\|\bd(\Sigma_2)^{-1/2} (\vvec(\Sigma_2^{1/2}) - \vvec(\Sigma_1^{1/2})) \|_2^2 = \|I - \Sigma_2^{-1/2} \Sigma_1^{1/2}\|_F^2$.
    Let $\lambda_1, \ldots, \lambda_d$ be the (positive) eigenvalues of $\Sigma_1^{1/2} \Sigma_2^{-1} \Sigma_1^{1/2}$.\footnote{
        The eigenvalues of $\Sigma_2^{-1/2} \Sigma_1 \Sigma_2^{-1/2} - I_d$ lie in the interval $[-1/2, 1/2]$
        so the eigenvalues of $\Sigma_2^{-1/2} \Sigma_1 \Sigma_2^{-1/2}$ lie in the interval $[1/2, 3/2]$.
        In particular, the eigenvalues are positive.}
    Then $\sqrt{\lambda_1}, \ldots, \sqrt{\lambda_d}$ are the singular values of $\Sigma_2^{-1/2} \Sigma_1^{1/2}$
    and $1 - \sqrt{\lambda_1}, \ldots, 1-\sqrt{\lambda_d}$ are the singular values of $I_d - \Sigma_2^{-1/2} \Sigma_1^{1/2}$.
    Thus,
    \begin{align*}
        \|I_d - \Sigma_2^{-1/2} \Sigma_1^{1/2}\|_F^2
        & = \sum_{i=1}^d (1 - \sqrt{\lambda_i})^2 \\
        & = \sum_{i=1}^d 1 - 2\sqrt{\lambda_i} + \lambda_i \\
        & \leq \sum_{i=1}^d 1 - 2\sqrt{1-\gamma} + (1+\gamma) \\
        & \leq \sum_{i=1}^d 1 - 2(1 - \gamma / 2 - \gamma^2 / 2) + (1 + \gamma) \\
        & = \sum_{i=1}^d \gamma^2 = \gamma^2 d,
    \end{align*}
    where in the third line we used that $\lambda_i \in [1-\gamma, 1+\gamma]$ for all $i \in [d]$
    and in the fourth line, we used the assumption that $\gamma \leq 1/2$ and
    \Claim{sqrtIneq} to bound $\sqrt{1-\gamma} \leq 1 - \gamma / 2 - \gamma^2 / 2$.
    The lemma follows by dividing both sides by $\eta^2$.
\end{proof}
\begin{claim}
    \ClaimName{KLOutput}
    Suppose that $\|\Sigma_1^{1/2} \Sigma_2^{-1} \Sigma_1^{1/2} - I_d \| \leq \gamma \leq 1/2$.
    Then
    \[
        \KL{\cN(\vvec(\Sigma_1^{1/2}), \eta^2 \bd(\Sigma_1))}{\cN(\vvec(\Sigma_2^{1/2}), \eta^2 \bd(\Sigma_2))} \leq \frac{\gamma^2 d}{2} (d + 1/\eta^2).
    \]
\end{claim}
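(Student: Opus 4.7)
The plan is to apply the second (upper bound) assertion of \Fact{KLNormal} directly to the two Gaussians on $\bR^{d^2}$ with means $\vvec(\Sigma_1^{1/2}), \vvec(\Sigma_2^{1/2})$ and covariances $\eta^2 \bd(\Sigma_1), \eta^2 \bd(\Sigma_2)$. The two bounds I have available from \Claim{KLBound1} and \Claim{KLBound2} line up exactly with the Frobenius term and the Mahalanobis (mean) term in that inequality, so once I verify the hypothesis about the spectrum of the ``ratio'' matrix, the proof is just a matter of plugging in.

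First, I would check the eigenvalue hypothesis of the second assertion of \Fact{KLNormal}. The matrix $(\eta^2 \bd(\Sigma_2))^{-1} (\eta^2 \bd(\Sigma_1)) = \bd(\Sigma_2)^{-1} \bd(\Sigma_1)$ is block diagonal with $d$ copies of $\Sigma_2^{-1} \Sigma_1$ along the diagonal, so its eigenvalues coincide with those of $\Sigma_2^{-1} \Sigma_1$, which in turn are the eigenvalues of $\Sigma_1^{1/2} \Sigma_2^{-1} \Sigma_1^{1/2}$. By hypothesis $\|\Sigma_1^{1/2} \Sigma_2^{-1} \Sigma_1^{1/2} - I_d\| \leq \gamma \leq 1/2$, so these eigenvalues lie in $[1-\gamma, 1+\gamma] \subseteq [1/2, 3/2]$, and in particular they are all at least $1/2$, as required.

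Next, invoking \Fact{KLNormal} gives
\begin{align*}
    &\KL{\cN(\vvec(\Sigma_1^{1/2}), \eta^2 \bd(\Sigma_1))}{\cN(\vvec(\Sigma_2^{1/2}), \eta^2 \bd(\Sigma_2))} \\
    &\qquad\leq \frac{1}{2} \Big[ \|\bd(\Sigma_2)^{-1/2}\bd(\Sigma_1)\bd(\Sigma_2)^{-1/2} - I_{d^2}\|_F^2 \\
    &\qquad\qquad\quad + \eta^{-2} \|\bd(\Sigma_2)^{-1/2} (\vvec(\Sigma_1^{1/2}) - \vvec(\Sigma_2^{1/2}))\|_2^2 \Big].
\end{align*}
The Frobenius term equals $d \cdot \|\Sigma_2^{-1/2}\Sigma_1\Sigma_2^{-1/2} - I_d\|_F^2$ because the matrix inside is block diagonal; and since $\Sigma_2^{-1/2}\Sigma_1\Sigma_2^{-1/2}$ and $\Sigma_1^{1/2}\Sigma_2^{-1}\Sigma_1^{1/2}$ are similar (both equivalent to $\Sigma_2^{-1}\Sigma_1$), they share the same Frobenius norm, so this is at most $d^2\gamma^2$ by \Claim{KLBound1}. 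The Mahalanobis term is bounded by $\gamma^2 d / \eta^2$ by \Claim{KLBound2}. Adding the two bounds and dividing by $2$ yields $\frac{\gamma^2 d}{2}(d + 1/\eta^2)$.

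I do not anticipate a genuine obstacle here: the entire content of the claim is in the two auxiliary claims and in \Fact{KLNormal}. The only delicate bookkeeping is making sure the ``block-diagonal'' form of both the Frobenius term and the spectrum check is handled cleanly, and that the two $\Sigma$-ratios that appear ($\Sigma_1^{1/2}\Sigma_2^{-1}\Sigma_1^{1/2}$ versus $\Sigma_2^{-1/2}\Sigma_1\Sigma_2^{-1/2}$) are identified via similarity when comparing Frobenius norms, since \Claim{KLBound1} is phrased in one direction and the KL formula produces the other.
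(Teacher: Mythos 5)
Your proof is correct and matches the paper's (extremely terse) approach of plugging the bounds from \Claim{KLBound1} and \Claim{KLBound2} into \Fact{KLNormal}; the only detail worth tightening is the remark that the two ratio matrices ``share the same Frobenius norm'' because they are similar---similarity alone does not preserve Frobenius norm, but the step is valid here because both $\Sigma_2^{-1/2}\Sigma_1\Sigma_2^{-1/2} - I_d$ and $\Sigma_1^{1/2}\Sigma_2^{-1}\Sigma_1^{1/2} - I_d$ are \emph{symmetric} matrices with the same spectrum, and the Frobenius norm of a symmetric matrix is determined by its eigenvalues.
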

\begin{proof}
    Follows by plugging in the bounds from \Claim{KLBound1} and \Claim{KLBound2} into \Fact{KLNormal}.
\end{proof}

\begin{proof}[Proof of \Lemma{covarIndistinguishable}]
    It is convenient to view $\Sigma_i^{1/2}(I_d + \eta G)$ as $d^2$-dimensional vectors.
    Indeed, we have $\vvec(\Sigma_i^{1/2}(I_d + \eta G)) = \vvec(\Sigma_i^{1/2}) + \eta \bd(\Sigma_i^{1/2}) \vvec(G)$
    which is a random $d^2$-dimensional vector distributed as $\cN(\vvec(\Sigma_i^{1/2}), \eta^2 \bd(\Sigma_i))$.
    For notation, let us write $\cD_i = \cN(\vvec(\Sigma_i^{1/2}), \eta^2 \bd(\Sigma_i))$.
    We will use \Lemma{privLossNormal} to bound 
    \[
        \cL \coloneqq \privLoss{\cD_1}{\cD_2}(\eta \bd(\Sigma_1)^{1/2} \vvec(G) + \vvec(\Sigma_1)^{1/2}).
    \]
    An identical argument holds with $\Sigma_1, \Sigma_2$ reversed and so we may apply \Lemma{privLossDP}.

    By \Lemma{privLossNormal}, we have that
    \begin{equation}
        \EquationName{privLossOutput}
        \begin{aligned}
            \cL
            & \leq \KL{\cD_1}{\cD_2} \\
            & + 2 \| \bd(\Sigma_1)^{1/2} \bd(\Sigma_2)^{-1} \bd(\Sigma_1)^{1/2} - I \|_F \sqrt{\ln(2/\delta)} \\
            & + 2 \|\bd(\Sigma_1)^{1/2} \bd(\Sigma_2)^{-1} \bd(\Sigma_1)^{1/2} - I_{d^2} \| \ln(2/\delta) \\
            & + \|\bd(\Sigma_1)^{1/2} \bd(\Sigma_2)^{-1} \bd(\Sigma_1)^{1/2} \| \cdot \|\eta^{-1}\bd(\Sigma_1)^{-1/2} (\vvec(\Sigma_2) - \vvec(\Sigma_1)) \|_2 \sqrt{2\ln(2/\delta)},
        \end{aligned}
    \end{equation}
    with probability at least $1-\delta$.
    We now bound each term in \Equation{privLossOutput}.

    The first term in \Equation{privLossOutput} is bounded by \Claim{KLOutput} and we have
    \begin{equation}
        \EquationName{privLossOutput1}
        \KL{\cD_1}{\cD_2} \leq \frac{\gamma^2d}{2} (d + 1/\eta^2).
    \end{equation}
    The second term is bounded by \Claim{KLBound1} and we have
    \begin{equation}
        \EquationName{privLossOutput2}
        \|\bd(\Sigma_1)^{1/2} \bd(\Sigma_2)^{-1} \bd(\Sigma_1)^{1/2} - I_{d^2} \|_F \leq \gamma d.
    \end{equation}
    For the third term, note that $\bd(\Sigma_1)^{1/2} \bd(\Sigma_2)^{-1} \bd(\Sigma_1)^{1/2}$ and $\Sigma_1^{1/2} \Sigma_2^{-1} \Sigma_1$
    have the same set of eigenvalues but with different multiplicities (by a factor of $d$).
    Hence,
    \begin{equation}
        \EquationName{privLossOutput3}
        \|\bd(\Sigma_1)^{1/2} \bd(\Sigma_2)^{-1} \bd(\Sigma_1)^{1/2} - I_{d^2} \| \leq \gamma.
    \end{equation}
    Finally, for the last term, we use that $\|\bd(\Sigma_1)^{1/2} \bd(\Sigma_2)^{-1} \bd(\Sigma_1)^{1/2} \| \leq 1 + \gamma \leq 3/2$ and \Claim{KLBound2} to obtain
    \begin{equation}
        \EquationName{privLossOutput4}
        \|\bd(\Sigma_1)^{1/2} \bd(\Sigma_2)^{-1} \bd(\Sigma_1)^{1/2} \| \cdot \|\eta^{-1}\bd(\Sigma_1)^{-1/2} (\vvec(\Sigma_2) - \vvec(\Sigma_1)) \|_2 < 3 \gamma \sqrt{d} / 2 \eta.
    \end{equation}
    Plugging \Equation{privLossOutput1} through \Equation{privLossOutput4} into \Equation{privLossOutput} gives
    \[
        \cL \leq \frac{\gamma^2 d}{2} (d + 1/\eta^2) + 2 d \gamma \sqrt{\ln(2/\delta)} + 2\gamma \ln(2/\delta) + 3\gamma \sqrt{d} \sqrt{\ln(2/\delta)} / \eta,
    \]
    with probability at least $1-\delta$.
\end{proof}

\subsection{Proof of \Lemma{privLossNormal}}
\SubsectionName{privLossNormal}
\begin{proof}[Proof of \Lemma{privLossNormal}]
    Let $\cL = \privLoss{\cN(\mu, \Sigma)}{\cN(\wmu, \wSigma)}(\Sigma^{1/2} g + \mu)$.
    From \Claim{privLossNormal}, we have that
    \begin{equation}
        \EquationName{privLoss1}
		\cL
		= -\frac{1}{2} \ln(\det(\wSigma^{-1} \Sigma)) - \frac{1}{2} g\transpose g
		  + \frac{1}{2} (g - \Sigma^{-1/2} (\wmu - \mu))\transpose \Sigma^{1/2} \wSigma^{-1} \Sigma^{1/2} (g - \Sigma^{-1/2} (\wmu - \mu)).
    \end{equation}
    Expanding the last term, we have
    \begin{equation}
        \EquationName{privLoss2}
        \cL = -\frac{1}{2} \ln(\det(\wSigma^{-1} \Sigma)) + \frac{1}{2} g\transpose (\Sigma^{1/2} \wSigma^{-1} \Sigma^{1/2} - I)g - (\wmu - \mu)\transpose \wSigma^{-1} \Sigma^{1/2} g
        + \frac{1}{2} (\wmu - \mu)\transpose \wSigma^{-1} (\wmu - \mu).
    \end{equation}
    By \Lemma{chiSquareConcentration}, we have that
    \begin{equation}
        \EquationName{privLoss3}
        \begin{aligned}
            g \transpose (\Sigma^{1/2} \wSigma^{-1} \Sigma^{1/2} - I) g
            & \leq \Tr(\Sigma^{1/2} \wSigma^{-1} \Sigma^{1/2} - I) + 4\|\Sigma^{1/2} \wSigma^{-1} \Sigma^{1/2} - I\|_F \sqrt{\ln(2/\delta)} \\ 
            & + 4 \|\Sigma^{1/2} \wSigma^{-1} \Sigma^{1/2} - I\| \ln(2/\delta),
        \end{aligned}
    \end{equation}
    with probability $1-\delta/2$.
    Next, $(\wmu - \mu)\transpose \wSigma^{-1} \Sigma^{1/2} g$ is a mean-zero Gaussian random variable with variance
    $\|\Sigma^{1/2} \wSigma^{-1} (\wmu - \mu)\|_2^2 \leq \|\Sigma^{1/2} \wSigma^{-1} \Sigma^{1/2}\|^2 \|\Sigma^{-1/2}(\wmu - \mu)\|_2^2$.
    Hence, using the tails of a Gaussian random variable (\Lemma{gaussConcentration}), we have
    \begin{equation}
        \EquationName{privLoss4}
        (\wmu - \mu)\transpose \wSigma^{-1} \Sigma^{1/2} g \leq \|\Sigma^{1/2} \wSigma^{-1} \Sigma^{1/2} \| \|\Sigma^{-1/2} (\wmu - \mu)\|_2 \sqrt{2 \ln(2/\delta)}
    \end{equation}
    with probability $1-\delta/2$.
    Plugging \Equation{privLoss3} and \Equation{privLoss4} along with the formula for the KL divergence between two multivariate Normal distributions (\Fact{KLNormal}), we have
    \begin{equation}
        \EquationName{privLoss5}
        \begin{aligned}
            \cL
            & \leq \KL{\cN(\mu, \Sigma)}{\cN(\wmu, \wSigma)} + 2\|\Sigma^{1/2} \wSigma^{-1} \Sigma^{1/2} - I\|_F \sqrt{\ln(2/\delta)} \\
            & + 2 \|\Sigma^{1/2} \wSigma^{-1} \Sigma^{1/2}  - I\| \ln(2/\delta)
            + \|\Sigma^{1/2} \wSigma^{-1} \Sigma^{1/2} \| \|\Sigma^{-1/2}(\wmu - \mu)\|_2 \sqrt{2\ln(2/\delta)},
        \end{aligned}
    \end{equation}
    with probability $1-\delta$.
\end{proof}

\end{document}